\documentclass[11pt]{article}
\usepackage[utf8]{inputenc}
\usepackage{amsmath}
\usepackage{amsfonts}
\usepackage{amssymb}
\usepackage{graphicx}
\usepackage{hyperref}
\hypersetup{hidelinks}

\usepackage{subcaption}
\usepackage{amsmath,amsfonts,amsthm,amssymb,
amsopn,bm,color,mathtools,booktabs,multirow,appendix,caption}
\usepackage{xcolor}
\usepackage[square, numbers, sort&compress]{natbib}
\usepackage{thmtools}
\usepackage{thm-restate}
\usepackage{algorithm}
\usepackage{algpseudocode}
\usepackage{placeins}

\usepackage{enumitem}
\usepackage{cleveref}
\crefrangeformat{assumption}{Assumptions~#3#1#4--#5#2#6}

\newtheorem{assumption}{Assumption}

\newtheorem{definition}{Definition}
\newtheorem{lemma}{Lemma}
\newtheorem{theorem}{Theorem}
\newtheorem{corollary}{Corollary}
\newtheorem{remark}{Remark}
\newtheorem{example}{Example}

\DeclareMathOperator*{\E}{\mathbb{E}}
\DeclareMathOperator{\softmax}{softmax}
\DeclareMathOperator{\diag}{diag}
\DeclareMathOperator{\logsumexp}{logsumexp}

\usepackage[T1]{fontenc}
\usepackage{url}
\usepackage{nicefrac}
\usepackage{microtype}
\usepackage{tikz}
\usetikzlibrary{shapes.geometric}

\usepackage[vmargin=1.00in,hmargin=1.00in,centering,letterpaper]{geometry}

\newcommand{\Risk}{\mathcal{R}}            % population (squared) risk
\newcommand{\Pop}{\mathcal{P}}             % full mixture distribution
\newcommand{\PopOne}{\mathcal{P}_1}        % minority sub‑population
\newcommand{\PopTwo}{\mathcal{P}_2}        % majority sub‑population
\newcommand{\thetastar}{\theta^{\star}}    % globally optimal linear predictor
\newcommand{\thetabar}[1]{\bar{\theta}_{#1}}% ridge specialist for group #1

\newcommand{\algcomment}[1]{\textcolor{blue!70!black}{\footnotesize{\texttt{\textbf{//
          #1}}}}}

\newcommand{\R}{\mathbb{R}}

\newcommand{\frakD}{\mathfrak{D}}

\newcommand{\mc}{\mathcal}

\newcommand{\cP}{\mc{P}}
\newcommand{\cF}{\mc{F}}
\newcommand{\cG}{\mc{G}}
\newcommand{\cD}{\mc{D}}

\newcommand{\Var}{\operatorname{Var}}

\newcommand{\fNP}{\mc{R}}

\newcommand{\xt}{z^t}
\newcommand{\tildeztj}{\tilde{z}^t_j}
\newcommand{\zt}{z^t}

\newcommand{\etat}{\eta^t}

\newcommand{\yAgg}{y_{\text{agg}, i}}

\newcommand{\Ymax}{Y_{\max}}
\newcommand{\Ymaxprobe}{Y_{\max,\mathrm{probe}}}
\newcommand{\Lhat}{\widehat{L}_i}                % empirical risk (pseudo-labels)
\newcommand{\Ltrue}{\widehat{L}_{i, \mathrm{true}}} % empirical risk (true labels)
\newcommand{\Bth}{B_\theta}                    % radius for uniform deviation
\newcommand{\DeltaN}{\Delta_i}                 % empirical pseudo-true gap
\newcommand{\Lamu}{\Lambda_{\mathrm{u}}}       % uniform deviation (over \|\theta\|\le B_\theta)
\newcommand{\Lamst}{\Lambda_{\star}}           % one-point deviation (at \theta^\star)
\newcommand{\Rad}{\operatorname{Rad}}          % Rademacher complexity/average

\begin{document}
\begin{center}

    {\bf{\Large{Dynamics of Learning under User Choice: Overspecialization and Peer-Model Probing}}}
    
    \vspace*{.2in}
    
    {\large{
    \begin{tabular}{cccc}
      Adhyyan Narang$^{\dagger}$ & Sarah Dean$^{\ddagger}$
      & Lillian J. Ratliff$^{\dagger}$
      & Maryam Fazel$^{\dagger}$ \\
    \end{tabular}}}
    
    \vspace*{.2in}
    \begin{tabular}{c}
        Electrical and Computer Engineering, University of Washington$^{\dagger}$ \\
        Computer Science, Cornell University $^{\ddagger}$ \\
        \end{tabular}
    
    \vspace*{.08in}
    {\small Correspondence: \texttt{adhyyan@uw.edu}}
    \end{center}
\begin{abstract}
    In many economically relevant contexts where machine learning is deployed,
    multiple platforms obtain data from the same pool of users,
    each of whom selects the platform that best serves them.
    Prior work in this setting focuses exclusively on the “local” losses of
    learners on the distribution of data that they observe.
    We find that there exist instances where learners who use
    existing algorithms almost surely converge to models with
    arbitrarily poor global performance,
    even when models with low full-population loss exist.
    This happens through a feedback-induced mechanism, which we call
    the overspecialization trap:
    as learners optimize for users who already prefer them,
    they become less attractive to users outside this base,
    which further restricts the data they observe.
    Inspired by the recent use of knowledge distillation in modern ML,
    we propose an algorithm that allows learners to
    "probe" the predictions of peer models,
    enabling them to learn about users who do not select them.
    Our analysis characterizes when probing succeeds: this procedure converges almost surely to a stationary point with bounded full-population risk when probing sources are sufficiently informative, e.g., a known market leader or a majority of peers with good global performance.
    We verify our findings with semi-synthetic experiments on the MovieLens, Census, and Amazon Sentiment datasets.\footnote{Code for this paper is available at: \url{https://github.com/AdhyyanNarang/overspecialization-probing}.}
\end{abstract}

\section{Introduction}

Traditional supervised learning theory typically assumes a single learner observing data
drawn from a fixed distribution.
However, this assumption is increasingly violated in modern machine learning markets,
such as recommendation platforms and large language model (LLM) services.
In these ecosystems, multiple learners operate on the same pool of users,
and data is not assigned randomly.
Instead, users choose which platform to engage with based on how well that platform serves their
specific needs or preferences.
Consequently, the data distribution observed by a learner is
a function of the learner’s own performance and the choices available in the market.
This setting is increasingly garnering interest in the machine
learning community \cite{dean2022multi,Su2024-qw, ginart2021competing,
bose2023initializing,shekhtman2024strategic}.

This coupling between model performance and user selection creates a
feedback loop.
As a learner optimizes for its current user base,
it becomes increasingly specialized to that subpopulation.
While this minimizes "local" loss on observed users, it
often degrades performance on the unobserved population,
a phenomenon we term \emph{overspecialization}.
Once a learner is overspecialized, it gets caught in an informational trap:
it cannot learn to serve new users because it never observes them,
and it never observes them because it cannot serve them.
At a societal level, this dynamic fuels the formation of
algorithmic echo chambers \cite{cinus2022effect,beardow2021algorithmic,jiang2021social,interian2022network},
where platforms fragment the population rather than learning a robust, globally capable model.

Independently, another trend has become relevant in modern machine learning systems that has implications
for the overspecialization problem: techniques such as knowledge distillation and training on
synthetic data are becoming ubiquitous, particularly in the
training of Large Language Models \cite{hinton2015distilling, werner2025deepseek}.
While these methods are typically employed to improve reasoning capabilities or computational efficiency
(through compression of data), they introduce a structural change to the learning dynamic.
Models are no longer limited to learning from organic user data,
but can also "probe" other models to acquire synthetic labels.
This enables learners to observe signals
outside their siloed data distributions.
In this work, we study whether the use of probing mechanisms in
these machine learning markets can mitigate overspecialization.

\vspace{0.5em}

\noindent \textbf{Our Contributions.} We model a market where users select learners based on a combination of inherent preferences and predictive loss. We analyze the resulting dynamics through a game-theoretic lens to understand the impact of peer probing. Our main findings are as follows:

\begin{enumerate} \item \textbf{The Failure of Standard Learning:}
    We first analyze standard Multi-learner Streaming Gradient Descent (MSGD) in the absence of probing \citep{Su2024-qw}.
    We prove that due to the user selection mechanism,
    MSGD can converge to "bad" stationary points.
    In these equilibria,
    learners become overspecialized, achieving low loss on their niche but arbitrarily poor performance on the global population.
\item \textbf{Convergence of Peer Probing:} We propose a new algorithm,
\textbf{MSGD with Probing (MSGD-P)},
where learners mix gradient updates from organic users with updates
from pseudo-labeled queries sent to peer models.
We prove that this multi-agent dynamic converges to a stationary
point of a modified potential function (\Cref{thm:msgd_probe_convergence}).
To our knowledge, we are the first to observe and study the multi-agent dynamics that arise
from synthetic data training.

\item \textbf{Restoring Global Competence:} We show that probing effectively mitigates overspecialization.
By characterizing the stationary points of MSGD-P,
we derive bounds on the full-population loss under appropriate
informational conditions (e.g., probing a known market leader vs. aggregating diverse peers).

\item \textbf{Empirical Validation:} We validate our findings on the MovieLens, US Census and Amazon Sentiment datasets.
We observe that while standard learning leaves some models
trapped with poor accuracy,
introducing peer probing closes this performance gap.
\end{enumerate}

\begin{figure}[t]
    \centering
    \includegraphics[width=0.5\textwidth]{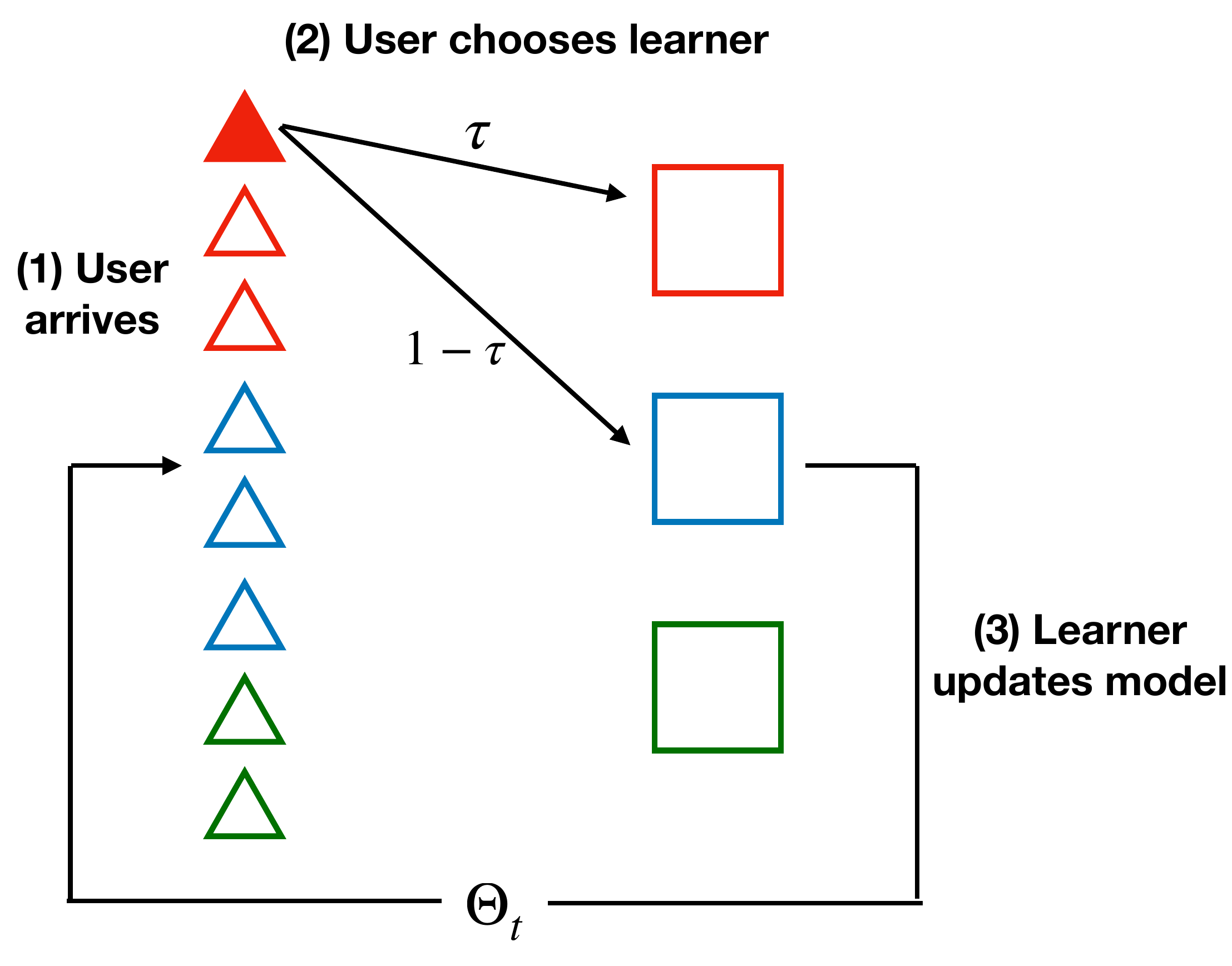}
    \caption{Illustration of our online multi-learner problem setting.
    The borders of users represent their highest ranked learner $\pi(z)$.
    For further details, see \Cref{sec:setup}.}
    \label{fig:setup}
\end{figure}
\section{Related Work}

Our work sits at the intersection of three lines of research.
We study a \emph{multi-learner} setting where users select among
competing platforms based on a combination of
\emph{inherent preferences and predictive quality}: a
model richer than pure loss-minimization or uniform-random selection.
Motivated by modern distillation practices, we analyze
\emph{peer-model probing} as a mechanism to mitigate overspecialization.

\paragraph{Performative Prediction.}
Our setting is an instance of performative prediction \citep{perdomo2020performative,
hardt2016strategic, miller2021accuracy}, where deploying a model influences
the data a learner subsequently observes.
Multi-learner extensions
\citep{piliouras2023multiagent, narang2023multiplayer, li2022multiagent,
wang2023network, zhu2023online, gois2025performative, wang2025lastiterate}
provide general tools for endogenous distribution shift.
We specialize to user-choice, where shift arises from selection
rather than manipulation, enabling precise characterization
of overspecialization and peer-model probing.

\paragraph{Learning under User Agency.}
Prior work on user agency focuses on single-learner settings
where users opt out or selectively provide data
\cite{hashimoto2018fairness, zhang2019group, james2023participatory,
ben2017best, Cherapanamjeri2023-Fisherman, Harris2023-mm, raab2024fair};
these analyses do not capture the inter-learner feedback dynamics
that arise in our multi-learner setting.
In the multi-learner setting,
\citet{ginart2021competing} characterize the \emph{existence} of
performance gaps due to competition via batch retraining;
we prove that streaming dynamics \emph{converge to}
overspecialized equilibria (Theorem~\ref{thm:bad_outcome_simple})
and propose probing as a mitigation.
Most closely related, \citet{dean2022multi} and \citet{Su2024-qw}
study gradient-based dynamics in choice-driven settings;
we extend their framework to analyze full-population risk
and introduce peer probing.
See Appendix~\ref{sec:extended_related_work} for detailed comparisons
with \citet{ginart2021competing}, \citet{kwon2022competition},
\citet{bose2023initializing}, and \citet{Su2024-qw}.

\paragraph{Knowledge Distillation.}
Our probing mechanism draws on knowledge distillation
\citep{hinton2015distilling} and self-training
\citep{scudder1965probability, yarowsky1995unsupervised};
unlike Deep Mutual Learning \citep{zhang2018deep}
and codistillation \citep{anil2018large},
which assume shared data,
our setting couples distillation with user-driven selection,
yielding novel multi-agent dynamics.
This also distinguishes our setting from domain generalization methods,
which train a centralized model from labeled source domains, and
selective-label/sample-selection work, which typically studies a single
learner under an exogenous censoring rule; here each learner's observed
distribution is endogenously determined by user choices among competing
models.

\section{Problem Setting}
\label{sec:setup}

We now formalize the machine learning market described above.

\subsection{Users and Learners}

Consider a market with $m$ service providers (learners) serving a population of users distributed according to $\mathcal{P}$ over $\mathcal{Z} = \mathcal{X} \times \mathcal{Y}$, where $\mathcal{X} \subseteq \mathbb{R}^d$ denotes covariates and $\mathcal{Y}$ denotes labels ($\mathcal{Y} \subseteq \mathbb{R}$ for regression, $\mathcal{Y} \subseteq \{1, \ldots, C\}$ for classification). We write $\mathcal{P}_X$ for the marginal distribution over covariates and $\mathcal{P}_{Y \mid X}(\cdot \mid x)$ for the conditional label distribution. When densities exist, we denote them by $p_X$ and $p_{Y \mid X}$.

Each learner $i \in [m] := \{1, \ldots, m\}$ maintains a model with parameters $\theta_i$, and we write $\Theta = (\theta_1, \ldots, \theta_m)$ for the joint parameter vector. The loss $\ell(x, y; \theta)$ measures the cost incurred by a model with parameters $\theta$ on a user with features $x$ and label $y$. For regression, we consider linear predictors $h_\theta(x) = x^\top \theta$ with squared loss
\[
\ell_{\mathrm{SQ}}(x, y; \theta) = (y - x^\top \theta)^2.
\]
For classification with $C$ classes, we use cross-entropy loss
\[
\ell_{\mathrm{CE}}(x, y; \theta) = -\sum_{c=1}^{C} y_c \log h_\theta(x)_c,
\]
where $h_\theta(x)_c = e^{x^\top \theta_c} / \sum_{j=1}^{C} e^{x^\top \theta_j}$ is the softmax output. Our framework can handle nonlinear relationships as well, as long as a suitable nonlinear feature transformation is known i.e if $h_\theta(x) = \phi(x)^\top \theta$.

\begin{assumption}
    \label{ass:bounded_support}
    The distribution $\mathcal{P}$ has continuous density $p_Z(z) = p_X(x) p_{Y|X}(y|x)$ with $p_X$ supported on $\{x : \|x\| \leq R\}$ for some $R > 0$. For regression, $p_{Y|X}$ is supported on $[-Y_{\max}, Y_{\max}]$.
\end{assumption}

Under this assumption, both loss functions satisfy standard regularity conditions (see Appendix~\ref{app:loss_regularity}).

\begin{restatable}{lemma}{LemLossRegularity}
    \label{lem:loss_regularity}
    Under Assumption~\ref{ass:bounded_support}, for all $z \in \mathcal{Z}$, the loss $\ell(z, \cdot)$ is non-negative, convex, differentiable, locally Lipschitz, and $\beta_\ell$-smooth.
\end{restatable}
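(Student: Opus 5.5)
We treat the two losses separately. For each, we compute the gradient and Hessian in $\theta$ explicitly. Non-negativity, convexity, differentiability and smoothness then follow from properties of the Hessian. Local Lipschitzness follows from bounding the gradient on bounded sets. Throughout, $z=(x,y)$ is fixed with $\|x\|\le R$, and for regression $|y|\le \Ymax$ by Assumption~\ref{ass:bounded_support}.

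\emph{Squared loss.} We have $\ell_{\mathrm{SQ}}(x,y;\theta)=(y-x^\top\theta)^2\ge 0$. It is a polynomial in $\theta$, hence differentiable, with
\[
\nabla_\theta \ell_{\mathrm{SQ}} = -2(y-x^\top\theta)x,\qquad \nabla^2_\theta \ell_{\mathrm{SQ}} = 2xx^\top .
\]
Since $2xx^\top\succeq 0$, the loss is convex. Since $\|2xx^\top\|_{\mathrm{op}}=2\|x\|^2\le 2R^2$, it is $2R^2$-smooth uniformly in $z$. For local Lipschitzness, on any ball $\{\|\theta\|\le B\}$ we have $\|\nabla_\theta\ell_{\mathrm{SQ}}\|\le 2(\Ymax+RB)R$. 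The gradient is unbounded globally, which is why only local Lipschitzness is claimed.

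\emph{Cross-entropy.} Write $\theta=(\theta_1,\dots,\theta_C)\in\R^{dC}$ and $u=(x^\top\theta_c)_{c}=(I_C\otimes x^\top)\theta$. For a one-hot label $y$ (or, more generally, a probability vector with $\sum_c y_c=1$, $y_c\ge 0$),
\[
\ell_{\mathrm{CE}}=\logsumexp(u)-y^\top u .
\]
This is non-negative because $h_\theta(x)_c\in(0,1]$ gives $-\log h_\theta(x)_c\ge 0$. It is smooth as a composition of a linear map with $\logsumexp$. The chain rule gives
\[
\nabla_\theta\ell_{\mathrm{CE}}=(\softmax(u)-y)\otimes x,\qquad \nabla^2_\theta\ell_{\mathrm{CE}}=\bigl(\diag(p)-pp^\top\bigr)\otimes xx^\top,
\]
where $p=\softmax(u)$.

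The key step is bounding this Hessian. The matrix $\diag(p)-pp^\top$ is the covariance matrix of a categorical random variable, so it is PSD, which gives convexity. Its largest eigenvalue is at most $\max_c p_c\le 1$; by Gershgorin it is in fact at most $1/2$. The eigenvalues of a Kronecker product are the products of the eigenvalues of the factors, so
\[
0\preceq\nabla^2\ell_{\mathrm{CE}}\preceq R^2 I .
\]
This yields a uniform smoothness constant. The gradient bound $\|\nabla\ell_{\mathrm{CE}}\|\le \|p-y\|\,\|x\|\le\sqrt2\,R$ gives global, hence local, Lipschitzness.

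Taking $\beta_\ell=\max(2R^2,R^2)=2R^2$ covers both cases. The only mildly delicate point is the Kronecker spectral bound for the softmax Hessian, together with making sure that every constant is uniform over $z$ in the support. Uniformity is exactly what the bounded-support assumption provides.
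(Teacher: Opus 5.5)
Your proposal is correct and takes essentially the same approach as the paper, which derives the statement from its two regularity lemmas. The ingredients match: explicit Hessians $2xx^\top$ and $(\diag(p)-pp^\top)\otimes xx^\top$, whose positive semidefiniteness gives convexity; spectral bounds $2R^2$ and $R^2$ for smoothness; gradient bounds on norm balls for local Lipschitzness; and $\beta_\ell=\max(2R^2,R^2)=2R^2$. Your Gershgorin refinement (which in fact gives $R^2/2$) and the allowance for soft labels are harmless extras.
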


\subsection{User Preferences and Platform Choice}

A central feature of our framework is that users have
\emph{inherent preferences} over platforms that exist
independently of current model quality.
These preferences capture factors such as brand loyalty,
familiarity, network effects, or historical habits.
We encode them via a function
$\pi: \mathcal{Z} \to [m]$, where $\pi(z)$ denotes the platform that user
$z$ intrinsically prefers.
This function is exogenous and fixed throughout learning:
it represents pre-existing affinities that platforms cannot directly
influence through model updates.

The preference function $\pi$ induces a partition of the user space.
Let $S_i = \{z \in \mathcal{Z} : \pi(z) = i\}$ denote users who prefer
platform $i$, with $\mathcal{P}_i = \mathcal{P}|_{S_i}$
the corresponding conditional distribution and
$\alpha_i = \Pr_{z \sim \mathcal{P}}[\pi(z) = i]$
the fraction of users preferring platform $i$.
Users do not only follow their inherent preferences;
they also consider predictive quality. We model this tradeoff as follows:

\begin{definition}[User Selection Rule]
    \label{def:user_selection_rule}
    Given models $\Theta$, user $z$ selects platform
    \[
    M(z; \Theta) =
    \begin{cases}
    \pi(z) & \text{with probability } \tau, \\
    \arg\min_{i \in [m]} \ell(z; \theta_i) & \text{with probability } 1 - \tau.
    \end{cases}
    \]
\end{definition}
The parameter $\tau \in [0, 1]$ governs how strongly
inherent preferences influence user behavior.
When $\tau = 1$, users ignore model quality entirely and
follow their intrinsic preferences.
When $\tau = 0$, users select the platform minimizing their loss.
For intermediate values,
users often default to familiar platforms but sometimes
shop based on quality.
This model generalizes the setting of \citet{Su2024-qw},
who assume users either minimize loss or choose uniformly at random.
We make this choice because user preferences in practice
are neither purely quality-driven nor purely random:
the $\pi(z)$ function captures persistent individual affinities that
create systematic heterogeneity in platform selection.

Model quality also induces a partition of users.
Let $Z_i(\Theta) = \{z : i = \arg\min_{j \in [m]} \ell(z; \theta_j)\}$
denote users for whom platform $i$ achieves minimal loss,
with $\mathcal{D}_i(\Theta) = \mathcal{P}|_{Z_i(\Theta)}$
the conditional distribution and
$a_i(\Theta) = \Pr_{z \sim \mathcal{P}}[z \in Z_i(\Theta)]$ the population fraction.
Under Definition~\ref{def:user_selection_rule}, learner $i$ observes users from the mixture
\[
\mathcal{O}_i(\Theta) = \tau \alpha_i \mathcal{P}_i + (1 - \tau) a_i(\Theta) \mathcal{D}_i(\Theta),
\]
where $\mathcal{O}_i(\Theta)$ is a sub-probability measure with total mass
$w_i(\Theta) = \tau \alpha_i + (1 - \tau) a_i(\Theta)$.

\subsection{Dynamics and Learning Objective}

We consider an online setting where learners interact with users over $T$ timesteps,
illustrated in Figure~\ref{fig:setup}. At each step $t$:
\begin{enumerate}
    \item A user $z^t \sim \mathcal{P}$ arrives and selects platform $M(z^t; \Theta^t)$.
    \item The selected learner observes $z^t$, incurs loss $\ell(z^t; \theta_{M(z^t; \Theta^t)}^t)$, and updates its parameters.
\end{enumerate}

We denote an instance of this market as
$\mathcal{G} = (\mathcal{P}, \ell, \pi, \tau)$.

\vspace{0.5em}

\noindent\textit{Learning Objective.} Each learner's goal is to
minimize the \emph{full-population risk}
\[
\mathcal{R}(\theta) = \mathbb{E}_{z \sim \mathcal{P}}[\ell(z; \theta)].
\]
We write $\theta^\star = \arg\min_\theta \mathcal{R}(\theta)$ for the
population-optimal model and
$\epsilon = \mathcal{R}(\theta^\star)$ for the Bayes risk.
This objective differs from prior work \citep{Su2024-qw, dean2022multi},
which focuses on the ``local'' loss over each learner's observed
distribution $\mathcal{O}_i(\Theta)$.
We focus on full-population risk because the signature
of overspecialization is precisely the gap between local
and global performance: a learner may achieve
low loss on $\mathcal{O}_i(\Theta)$ while performing
poorly on users outside its observed base.
Understanding when this gap emerges
and how to prevent it is the central question of this paper.

\section{The Failure of Standard Learning Dynamics}
\label{sec:standard_learner_rules}

We now analyze Multi-learner Streaming Gradient Descent (MSGD),
the standard algorithm for this setting introduced by
\citet{Su2024-qw}.
First, we reproduce the result from \cite{Su2024-qw} that MSGD converges to stationary points
(Theorem~\ref{lem:msgd_convergence}); however,
we use a different proof technique that we believe is more insightful, which
we describe below.
Despite convergence, we show that these equilibria can
exhibit severe overspecialization:
learners achieve low loss on observed users while performing arbitrarily poorly on the full population (Theorem~\ref{thm:bad_outcome_simple}).

\subsection{Algorithm and Assumptions}

Algorithm~\ref{alg:msgd} presents MSGD,
studied in prior work by \citet{Su2024-qw}.
When a user arrives and selects learner $i$,
that learner performs a stochastic gradient update on the observed loss, and other
learners remain unchanged.

\begin{algorithm}[t]
    \caption{Multi-learner Streaming Gradient Descent (MSGD) \citep{Su2024-qw}}
    \begin{algorithmic}[1]
    \Require Loss function $\ell(\cdot, \cdot) \geq 0$; initial models $\Theta^0 = (\theta^0_1, \ldots, \theta^0_m)$;
    learning rate $\{\eta^t\}_{t \geq 1}$
    \For{$t = 0, 1, 2, \ldots, T$}
        \State Sample user $z^t \sim \mathcal{P}$
        \State User selects learner $i = M(z^t; \Theta^t)$
        \State $\theta^{t+1}_{i} \gets \theta^t_{i} - \eta^t \nabla_\theta \ell(z^t; \theta^t_{i})$
    \EndFor
    \State \Return $\Theta^T$
    \end{algorithmic}
    \label{alg:msgd}
\end{algorithm}

In order to study
the convergence behavior of the algorithm,
we make the following standard assumptions on learning rates and loss geometry,
which are the same as in \citet{Su2024-qw}.
\begin{assumption}
    \label{ass:learning_rate}
    The learning rates satisfy $\sum_{t=1}^\infty \eta^t = \infty$ and $\sum_{t=1}^\infty (\eta^t)^2 < \infty$.
\end{assumption}

\begin{assumption}
    \label{ass:loss_measure}
    For any $\theta \neq \theta'$, there exists $d_0 > 0$ such that for all $d < d_0$, the set $\{z : |\ell(z; \theta) - \ell(z; \theta')| < d\}$ has Lebesgue measure at most $d$.
\end{assumption}
Assumption~\ref{ass:loss_measure}, from \citet{Su2024-qw}, rules out
pathological cases where a large mass of users is nearly indifferent between
two learners. Intuitively, it ensures that small parameter perturbations move
only a small mass of users across loss-induced decision boundaries, so the
partition-dependent potential $f$ can be differentiated without boundary terms.
Assumption~\ref{ass:bounded_iterates} below is the standard boundedness
condition used in stochastic-approximation analyses.

\subsection{Convergence to Stationary Points}

In MSGD, each learner $i$ optimizes its expected loss over observed users.
Define the potential function $f(\Theta)$ as the sum of the expected losses of all learners
on the distributions that they observe.
\begin{equation}
    \label{eq:potential_function_msgd}
    f(\Theta) = \sum_{i=1}^m \mathbb{E}_{\mathcal{O}_i(\Theta)}[\ell(z; \theta_i)],
\end{equation}
where we recall that
$
\mathcal{O}_i(\Theta) = \tau \alpha_i \mathcal{P}_i + (1 - \tau) a_i(\Theta) \mathcal{D}_i(\Theta),
$
and $w_i(\Theta) = \tau \alpha_i + (1 - \tau) a_i(\Theta)$.
Note that this is the sum of the ``local" losses of each of the learners on their observed distribution, unlike the global risk $\mc{R}(\cdot)$ that was introduced above.

In order
to show convergence, we make the same boundedness assumptions as in
\citet{Su2024-qw}.
\begin{assumption}
    \label{ass:bounded_iterates}
    The parameter sequence $\{\Theta^t\}_{t \geq 0}$
    is almost surely bounded: $\sup_{t \geq 0} \|\Theta^t\| < \infty$.
    Moreover, the set $\{\Theta : \nabla f(\Theta) = 0 \}$ is compact.
\end{assumption}

Our approach uses stochastic approximation \citep{borkar2008stochastic}
and differs from that of \citet{Su2024-qw}.
The key insight is that $f$ serves as a Lyapunov function:
despite each learner optimizing over a different,
endogenously-determined user distribution,
the aggregate of local losses forms a coherent potential.
This is surprising because multi-agent gradient dynamics
often cycle or diverge \citep{mazumdar2020On};
the Lyapunov structure explains why convergence occurs here.
Concretely, we show that the MSGD iterates track the ODE $\dot{\Theta} = -\nabla f(\Theta)$. The following lemma, proved using standard stochastic approximation arguments, establishes this connection.
\begin{restatable}{lemma}{LemMSGDConvergenceODE}
    \label{lem:msgd_convergence_ode}
    Let Assumptions~\ref{ass:bounded_support}--\ref{ass:bounded_iterates} hold. Then the iterates $\{\Theta^t\}$ of Algorithm~\ref{alg:msgd} converge to a compact connected internally chain transitive invariant set of the ODE $\dot{\Theta} = -\nabla f(\Theta)$.
\end{restatable}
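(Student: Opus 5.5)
We cast the update of Algorithm~\ref{alg:msgd} as a stochastic approximation recursion $\Theta^{t+1} = \Theta^t + \eta^t\bigl(h(\Theta^t) + M^{t+1}\bigr)$ and apply the standard Benaïm/Borkar theorem \citep{borkar2008stochastic}. For that theorem we need four ingredients: Lipschitz mean field, martingale-difference noise with bounded conditional second moments, the step sizes of Assumption~\ref{ass:learning_rate}, and a.s.\ bounded iterates from Assumption~\ref{ass:bounded_iterates}. Its conclusion is exactly convergence to a compact connected internally chain transitive invariant set of $\dot\Theta = h(\Theta)$. The work therefore reduces to two things: identifying $h$ with $-\nabla f$, and checking regularity.

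\textbf{Step 1 (mean field).} Write the update as $\Theta^{t+1}=\Theta^t-\eta^t G(z^t,\xi^t;\Theta^t)$. Here $\xi^t$ is the coin deciding between $\pi(z^t)$ and the loss-minimizing learner, and $G$ places $\nabla_\theta\ell(z^t;\theta_i^t)$ in block $i=M(z^t;\Theta^t)$ and zeros elsewhere. Taking the conditional expectation given $\mathcal{F}_t=\sigma(\Theta^0,z^0,\dots,z^{t-1})$, block $i$ equals
\[
\tau\alpha_i\,\E_{\mathcal{P}_i}[\nabla\ell(z;\theta_i)]+(1-\tau)a_i(\Theta)\,\E_{\mathcal{D}_i(\Theta)}[\nabla\ell(z;\theta_i)]=\int_{\mathcal{O}_i(\Theta)}\nabla_\theta\ell(z;\theta_i).
\]
Set $h(\Theta)$ to be minus this vector and $M^{t+1}=h(\Theta^t)+G(z^t,\xi^t;\Theta^t)$. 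Then $M^{t+1}$ is a martingale difference.

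\textbf{Step 2 (the key identity $h=-\nabla f$; the main obstacle).} Write
\[
f(\Theta)=\tau\sum_i\alpha_i\E_{\mathcal{P}_i}[\ell(z;\theta_i)]+(1-\tau)\int \min_j \ell(z;\theta_j)\,dP(z).
\]
The first term is differentiable, and its gradient comes from differentiation under the integral, justified by Lemma~\ref{lem:loss_regularity} and bounded support. The second term is where the difficulty lies, because the partition $Z_i(\Theta)$ moves with $\Theta$. We handle it using the envelope/Danskin structure of the pointwise minimum: $\min_j\ell(z;\theta_j)$ is differentiable in $\Theta$ at every $z$ off the tie set, with gradient $\nabla\ell(z;\theta_i)$ in block $i$ for $z\in Z_i(\Theta)$. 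By Assumption~\ref{ass:loss_measure} the tie sets $\{|\ell(z;\theta_i)-\ell(z;\theta_j)|<d\}$ have measure $O(d)$, and the density is bounded (continuous on a compact support). Hence the ties are null. Moreover, the difference quotients of the min are dominated by $\max_j\|\nabla\ell(z;\theta_j)\|$ on a neighborhood, which is locally bounded. Dominated convergence then gives
\[
\nabla_{\theta_i}\bigl[(1-\tau)\E\min_j\ell\bigr]=(1-\tau)a_i(\Theta)\E_{\mathcal{D}_i(\Theta)}[\nabla\ell(z;\theta_i)],
\]
with no boundary terms. So $h=-\nabla f$.

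\textbf{Step 3 (regularity).} We next show $h$ is locally Lipschitz. Block $i$ splits into two pieces. The first is $\int \nabla\ell(z;\theta_i)[\tau\mathbf{1}_{S_i}+(1-\tau)\mathbf{1}_{Z_i(\Theta)}]dP$. Under a perturbation $\Theta\to\Theta'$, the integrand changes by at most $\beta_\ell\|\theta_i-\theta_i'\|$ (by $\beta_\ell$-smoothness). The indicator changes only on the set of users that switch argmin. A switch requires $|\ell(z;\theta_i)-\ell(z;\theta_j)|\le L\|\Theta-\Theta'\|$ by local Lipschitzness of $\ell$, and by Assumption~\ref{ass:loss_measure} that set has measure $O(\|\Theta-\Theta'\|)$ for small perturbations. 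Multiplying by the bounded gradient gives the Lipschitz bound on compacts. This is the delicate point; if a uniform $d_0$ fails, we would instead appeal to the locally-Lipschitz-on-bounded-sets version of the SA theorem. For the noise, on the a.s.\ bounded trajectory we have $\E[\|M^{t+1}\|^2\mid\mathcal{F}_t]\le K(1+\|\Theta^t\|^2)$, since $\|\nabla\ell\|$ is bounded on $\{\|x\|\le R\}\times$ bounded $\theta$ and $Y$ is bounded. Together with $\sum(\eta^t)^2<\infty$, this makes $\sum\eta^tM^{t+1}$ converge a.s.

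\textbf{Step 4 (conclude).} Boundedness is assumed in Assumption~\ref{ass:bounded_iterates}. All hypotheses of \citet[Ch.~2, Thm.~2]{borkar2008stochastic} therefore hold, which yields the claim.
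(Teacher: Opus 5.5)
Your proposal is correct and follows essentially the same route as the paper. Both write MSGD as a stochastic approximation recursion, identify the mean field with $-\nabla f$, and verify the hypotheses of Borkar's theorem (Lemma~\ref{lem:borkar_track}): a locally Lipschitz drift, with Assumption~\ref{ass:loss_measure} bounding the mass of users who switch argmin as in Lemma~\ref{lem:ai_local_lipschitz}; the conditional variance bound of Lemma~\ref{lem:martingale_variance}; the step-size conditions; and bounded iterates.

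The one real difference is that you derive the gradient identity yourself, writing the quality-driven part of $f$ as $(1-\tau)\int \min_j \ell(z;\theta_j)\,dP(z)$ and applying dominated convergence. The paper, in Lemma~\ref{lem:gradient_stream_probe}, instead cites Lemma 4.3 of \citet{Su2024-qw}. Also fix the sign slip in Step~1: the noise should be $M^{t+1} = -G(z^t,\xi^t;\Theta^t) - h(\Theta^t)$.
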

See Appendix~\ref{sec:app_ict_defs} for definitions of the dynamical-systems terms used in Lemma~\ref{lem:msgd_convergence_ode}.
The lemma shows that the discrete stochastic dynamics behave, in the limit, like the continuous gradient flow on $f$.
Since $f$ decreases along trajectories of this
ODE i.e $\tfrac{d}{dt} f(\Theta(t)) = -\|\nabla f\|^2 \leq 0$, the only invariant sets are stationary points. This yields our main convergence result.
\begin{restatable}{theorem}{LemMSGDConvergence}
    \label{lem:msgd_convergence}
    \label{lem:msgd_idealized_convergence}
    Let Assumptions~\ref{ass:bounded_support}--\ref{ass:bounded_iterates} hold. Then the iterates $\{\Theta^t\}$ of Algorithm~\ref{alg:msgd} converge to the set of stationary points $\{\Theta : \nabla f(\Theta) = 0\}$ almost surely.
\end{restatable}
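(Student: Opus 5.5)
Theorem~\ref{lem:msgd_convergence} follows from Lemma~\ref{lem:msgd_convergence_ode} combined with a Lyapunov argument for the gradient flow $\dot{\Theta} = -\nabla f(\Theta)$. By Lemma~\ref{lem:msgd_convergence_ode}, almost surely the limit set $L$ of $\{\Theta^t\}$ is a compact, connected, internally chain transitive invariant set of this ODE. It therefore suffices to show that every such set lies in the stationary set $\Lambda := \{\Theta : \nabla f(\Theta) = 0\}$. Convergence to the set then follows, since $\mathrm{dist}(\Theta^t, L) \to 0$ and $L \subseteq \Lambda$.

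\textbf{Step 1 (strict Lyapunov function).} Lemma~\ref{lem:msgd_convergence_ode} already requires $\nabla f$ to be locally Lipschitz, which Lemma~\ref{lem:loss_regularity} and Assumption~\ref{ass:loss_measure} supply. Hence $f$ is $C^1$ and, along any ODE trajectory,
\[
\frac{d}{dt} f(\Theta(t)) = -\|\nabla f(\Theta(t))\|^2 \le 0,
\]
with equality exactly on $\Lambda$. So $f$ is a strict Lyapunov function for the flow relative to $\Lambda$.

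\textbf{Step 2 (invariant sets in $\Lambda$).} I would invoke the standard result (Bena\"im; Borkar, Ch.~2, Cor.~3) that if $f(\Lambda)$ has empty interior in $\mathbb{R}$, then every internally chain transitive invariant set lies in $\Lambda$ and $f$ is constant on it.

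\textbf{Step 3 (controlling $f(\Lambda)$).} This is the main obstacle, since a Sard-type argument would need more smoothness of $f$ than we have. I would try to avoid the empty-interior condition altogether with a direct argument. Take any compact invariant set $L$ and any $\Theta_0 \in L$. The full trajectory through $\Theta_0$ stays in $L$ for all $t \in \mathbb{R}$, and $f$ is bounded on the compact $L$. Since $f$ is monotone along this trajectory, its $\alpha$- and $\omega$-limit sets lie in $\Lambda$, by LaSalle's invariance principle applied forward and backward in $L$.

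Now suppose $L \not\subseteq \Lambda$. Let $c^* = \max_L f$. By invariance and monotonicity, every point achieving $c^*$ has a backward orbit at that same level. This forces the point to be stationary, so the level $c^*$ is attained only in $\Lambda$. Chain transitivity then gives the contradiction. Points of $L$ with $f < c^*$ must be chained back to points at level $c^*$. But $f$ strictly decreases by a uniform amount along trajectory segments of fixed length away from a neighborhood of $\Lambda$, which blocks chains from climbing back up, unless the values $f(\Lambda \cap L)$ are themselves dense, which is exactly the empty-interior issue.

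If this direct route stalls, I would fall back on the observation that $\Lambda$ is compact by Assumption~\ref{ass:bounded_iterates}. I would then either argue that $f(\Lambda)$ is countable or meagre, or simply note that Borkar's Corollary gives convergence to $\Lambda$ under the strict Lyapunov property together with compactness. That is how I would conclude.
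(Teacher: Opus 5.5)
\textbf{The gap is in your Step 3, and the paper's proof has the same gap.} Your Steps 1--2 are the paper's argument. The paper proves this theorem as the $U=\emptyset$ case of Theorem~\ref{thm:msgd_probe_convergence}: it applies the tracking result (Lemma~\ref{lem:borkar_track}) and then simply asserts that, since $\tfrac{d}{dt}f(\Theta(t))=-\|\nabla f\|^2$, ``the only invariant sets are stationary points.'' You were right that this step is not automatic. As stated it is false for general compact invariant sets: the closure of a heteroclinic orbit between two critical points is invariant but not stationary. For internally chain transitive sets, the correct statement (Bena\"{\i}m's Prop.~6.4) needs $f(\Lambda)$ to have empty interior.

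Step 3 does not close this. Your direct chain argument stalls exactly where you say it does. Both fallbacks also fail:
\begin{itemize}
\item Nothing in the assumptions gives $f$ more than a locally Lipschitz gradient on $\mathbb{R}^{md}$; the $(1-\tau)\E_{\mathcal{P}}[\min_j \ell(z;\theta_j)]$ part is not smoother in general. A Sard-type theorem for real-valued functions on $\mathbb{R}^N$ needs $C^{N-1,1}$ regularity, so ``$f(\Lambda)$ countable or meagre'' is not available.
\item Borkar's Lyapunov corollary does not say that a strict Lyapunov function plus compactness of $\Lambda$ suffices. It takes $H=\{V=0\}$ and requires $\langle h,\nabla V\rangle=0$ only on $H$. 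Here that would mean every stationary point of $f$ is a global minimizer, which need not hold for this nonconvex potential.
\item Compactness of $\{\nabla f=0\}$ (Assumption~\ref{ass:bounded_iterates}) plays no role in either result.
\end{itemize}

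\textbf{The missing idea is to use the discrete iterates and show that $f(\Theta^t)$ converges almost surely.} Write $\Theta^{t+1}=\Theta^t-\eta^t g^t$. You have $f\ge 0$, $\E[g^t\mid\mathcal{F}_t]=\nabla f(\Theta^t)$, $\nabla f$ Lipschitz on balls, and $\E[\|g^t\|^2\mid\mathcal{F}_t]\le K(1+\|\Theta^t\|^2)$. The descent lemma then gives
\[
\E[f(\Theta^{t+1})\mid\mathcal{F}_t]\le f(\Theta^t)-\eta^t\|\nabla f(\Theta^t)\|^2+C_K(\eta^t)^2
\]
while $\|\Theta^t\|\le K$. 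One step from that ball stays in a slightly larger ball, because gradients are bounded there.

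Localize with the stopping time $\sigma_K=\inf\{t:\|\Theta^t\|>K\}$. Robbins--Siegmund together with $\sum_t(\eta^t)^2<\infty$ gives a.s.\ convergence of $f(\Theta^{t\wedge\sigma_K})$. Taking the union over $K\in\mathbb{N}$ and using Assumption~\ref{ass:bounded_iterates} yields $f(\Theta^t)\to c$ a.s.

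Then $f\equiv c$ on the limit set $L$. Since $L$ is invariant, the ODE trajectory from any $\Theta_0\in L$ stays in $L$, so $f(\Theta(t))\equiv c$ and $\|\nabla f(\Theta_0)\|^2=-\tfrac{d}{dt}f(\Theta(t))\big|_{t=0}=0$. Hence $L\subseteq\{\nabla f=0\}$ with no hypothesis on $f(\Lambda)$. This repairs both your argument and the paper's.
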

Hence, the stochastic approximation framework reveals that MSGD implicitly minimizes a single global objective $f$, providing a clean explanation for why convergence occurs despite the multi-agent structure.

The same potential argument extends to mini-batch updates, matching the
implementation used in the experiments. If a batch $\mathcal{B}^t$ is sampled
i.i.d.\ and learner $i$ updates using the average gradient over the selected
subset $S_i^t=\{z\in\mathcal{B}^t:M(z;\Theta^t)=i\}$, then conditional on
$S_i^t\neq\emptyset$ this average is an unbiased gradient over
$\mathcal{O}_i(\Theta^t)/w_i(\Theta^t)$. Since
$w_i(\Theta)\ge \tau\alpha_i>0$ for $\tau>0$, the batch dynamics track the
same potential with a bounded state-dependent rescaling of the step size.

\subsection{The Overspecialization Trap}

While Theorem~\ref{lem:msgd_convergence} guarantees convergence,
the stationary points may be highly undesirable.
A learner cannot improve on users it never observes,
and it never observes users it cannot serve well.
This feedback loop is the overspecialization trap.

The following theorem shows that this trap can be severe:
MSGD can converge to equilibria where some learners have
arbitrarily poor global performance,
even when models with low full-population loss exist.
\begin{restatable}{theorem}{badoutcomesimple}
    \label{thm:bad_outcome_simple}
    Let Assumptions~\ref{ass:bounded_support}--\ref{ass:bounded_iterates} hold. For any $\tau \geq \tfrac{1}{2}$ and any choice of $\epsilon, \Gamma$ with $0 < \epsilon < \Gamma$, there exists an instance $\mathcal{G}$ such that:
    \begin{enumerate}
        \item There exists $\theta^\star$ with $\mathcal{R}(\theta^\star) \leq \epsilon$.
        \item The MSGD iterates converge to a unique stationary point $\bar{\Theta}$ where $\mathcal{R}(\bar{\theta}_i) \geq \Gamma$ for some learner $i \in [m]$.
    \end{enumerate}
\end{restatable}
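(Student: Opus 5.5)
The plan is an explicit two-learner linear-regression instance with squared loss, in which the minority group and the majority group share the same covariate law but have different regression vectors. Take $m=2$ and let $p_X$ be a smooth isotropic density on the ball of radius $R$, so that $\mathbb{E}[xx^\top]=\sigma^2 I$. Group $1$ (with $\pi(z)=1$) has mass $\alpha_1$ and labels $y=x^\top\theta_A+\xi$. Group $2$ has mass $\alpha_2=1-\alpha_1$ and labels $y=x^\top\theta_B+\xi$. Here $\xi$ is tiny, bounded noise with a continuous density, chosen so that Assumptions~\ref{ass:bounded_support} and~\ref{ass:loss_measure} hold.

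Set $D=\sigma^2\|\theta_A-\theta_B\|^2$. The population minimizer is then $\theta^\star=\alpha_1\theta_A+\alpha_2\theta_B$, with $\mathcal{R}(\theta^\star)=\alpha_1\alpha_2 D+\mathbb{E}\xi^2$. The specialist $\theta_A$ has $\mathcal{R}(\theta_A)=\alpha_2 D+\mathbb{E}\xi^2$. First choose $D$ and $\alpha_1$ so that $\alpha_2 D\ge 2\Gamma$ and $\alpha_1\alpha_2 D\le \epsilon/2$; this is possible by taking $\alpha_1\le \epsilon/(4\Gamma)$. Then shrink the noise so that $\mathbb{E}\xi^2\le\epsilon/2$. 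Item~1 follows immediately.

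For item~2, I would first exhibit a stationary point of $f$ near $\bar\Theta=(\theta_A,\theta_B)$ and then argue it is the only one. At $\bar\Theta$, loss-based selection sends essentially every group-$1$ user to learner~$1$ and every group-$2$ user to learner~$2$. The only exceptions are users with $|x^\top(\theta_A-\theta_B)|$ comparable to the noise, whose mass is small by Assumption~\ref{ass:loss_measure}. Hence $\mathcal{O}_1(\bar\Theta)$ is, up to an $O(\text{noise})$ perturbation, a multiple of $\mathcal{P}_1$, and likewise $\mathcal{O}_2(\bar\Theta)$ is essentially a multiple of $\mathcal{P}_2$.

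As in the proof of Theorem~\ref{lem:msgd_convergence}, the gradient of $f$ has no boundary terms, so $\nabla_{\theta_i} f$ is just the least-squares gradient over $\mathcal{O}_i$. By the implicit function theorem applied to the weighted normal equations, there is a stationary point $\bar\Theta$ within $O(\text{noise})$ of $(\theta_A,\theta_B)$. Since $\mathcal{R}$ is continuous, this gives $\mathcal{R}(\bar\theta_1)\ge \alpha_2 D-o(1)\ge\Gamma$.

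The main obstacle is uniqueness: $f$ is nonconvex because the partition $Z_i(\Theta)$ moves with $\Theta$, so other stationary points could exist. Examples are both learners near $\theta^\star$, or learners splitting the population along some other hyperplane. This is where I would use $\tau\ge 1/2$. At any stationary point, $\theta_1$ is the weighted least-squares fit on $\mathcal{O}_1$, a mixture of $\tau\alpha_1\mathcal{P}_1$ and at most $(1-\tau)$ times the mass of users who prefer learner~$1$ on loss. I would argue, case by case on which learner wins each group under loss, that the only self-consistent assignment is the one in which learner~$i$ wins the bulk of its own group. Since $\tau\ge1/2$, the preference-driven mass of each learner's own group is at least the loss-driven mass it can attract from any subset of that group. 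This should pin each $\theta_i$ near its group's regressor. In particular, a symmetric point with both learners at $\theta^\star$ fails: learner~$1$ still sees $\tau\alpha_1\mathcal{P}_1$ plus half the ties, which pulls its fit away from $\theta^\star$.

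If this fixed-point counting proves delicate, my fallback is to separate the groups in covariate space as well, so that the loss-based winner of each group is determined geometrically. I would then verify uniqueness by solving the two coupled normal equations directly. Combining uniqueness with Theorem~\ref{lem:msgd_convergence} gives almost-sure convergence to $\bar\Theta$.
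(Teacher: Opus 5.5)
Your item~1 construction is fine. Your overall architecture matches the paper's: a small minority group, specialists far from the pooled fit, a proof that the specialist pair is the only stationary point, and then an appeal to Theorem~\ref{lem:msgd_convergence}. The gap is the uniqueness step, which carries all of item~2, because Theorem~\ref{lem:msgd_convergence} only gives convergence to the stationary \emph{set}. Your plan to argue ``case by case on which learner wins each group'' presupposes that all loss-driven users of a group make the same choice, and your instance destroys this. For $d\ge 2$, a group-1 user at $x$ compares $|x^\top(\theta_A-\theta_1)|$ with $|x^\top(\theta_A-\theta_2)|$, and this depends on the direction of $x$ unless $\theta_1,\theta_2,\theta_A$ are collinear. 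A group can therefore be split along double cones. The weighted least-squares fit over cone-restricted isotropic data is then a matrix-weighted average, generally off the segment $[\theta_A,\theta_B]$, so there is no finite list of configurations to rule out. Nothing in your sketch excludes a self-consistent cone-split stationary point. The noise adds further within-group splitting, worst when $\theta_1\approx\theta_2$. Your fallback of separating the groups in covariate space does not remove the direction dependence within a group either.

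The missing idea is to choose the instance so that the loss-based choice is constant within each group. The paper takes $d=1$ with no noise, $x\sim\mathrm{Unif}[-\sqrt3,\sqrt3]$, $y=Cx$ on $\mathcal{P}_1$ and $y=-x$ on $\mathcal{P}_2$. Then $\ell(z;\theta)=(\beta_g-\theta)^2x^2$, the factor $x^2$ cancels, and every loss-driven user of group $g$ picks the learner closer to $\beta_g$. This leaves exactly four configurations of $(\mathcal{D}_1,\mathcal{D}_2)$, each with closed-form fits. Three are self-inconsistent, so $(C,-1)$ is the unique stationary point, and $C=\sqrt{\Gamma+\epsilon}-1$, $\alpha=\epsilon/(C+1)^2$ finishes the proof.

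Your heuristic about preference mass versus attractable loss-driven mass is also not the operative inequality. The swapped configuration fails because learner~1's fit puts weight $\tau\alpha_1/(\tau\alpha_1+(1-\tau)\alpha_2)$ on group~1's slope, versus $(1-\tau)\alpha_1/((1-\tau)\alpha_1+\tau\alpha_2)$ for learner~2, and the first is strictly larger iff $\tau>\tfrac12$. At $\tau=\tfrac12$ both fits collapse to $\theta^\star$ with every user tied, so whether that point is stationary depends on tie-breaking; the endpoint is delicate in any version of this argument, and the paper's own Case~3 uses $\tau>\tfrac12$.

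Your added noise is more faithful to Assumption~\ref{ass:bounded_support} than the paper's noiseless example. If you keep it, stay in $d=1$ and run the four-case argument perturbatively, using the strict margins available for $\tau>\tfrac12$. Also replace the implicit-function step, which needs $C^1$ dependence on $\Theta$ and on the noise level that you have not established, with a fixed-point argument on the map sending $\Theta$ to the weighted least-squares fits.
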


The key mechanism is as follows. When $\tau \geq \tfrac{1}{2}$, inherent preferences dominate at equilibrium: the loss-induced partition collapses to the ranking partition, so that $Z_i(\Theta) = S_i$ for all $i$. Each learner optimizes exclusively for users who intrinsically prefer it, arriving at
\[
\bar{\theta}_i = \arg\min_{\theta} \mathbb{E}_{z \sim \mathcal{P}_i}[\ell(z; \theta)].
\]
This specialization occurs regardless of whether a better global model exists.
In the constructed instance
(detailed in Appendix~\ref{sec:app_bad_outcome}),
learner 1 achieves \emph{zero} loss on its observed population
$\mathcal{P}_1$ while its full-population loss exceeds $\Gamma$.
The learner has perfectly fit its niche while
becoming arbitrarily poor globally.
This dynamic formalizes the echo chamber phenomenon that
platforms become increasingly specialized to
their existing audience, unable to learn models
that serve the broader population.

The condition $\tau \geq \tfrac{1}{2}$ delineates the regime
where inherent preferences dominate user behavior.
A key contribution of the analysis is showing that
in this regime
the equilibrium is unique and obtainable in closed form, which enables us to exactly characterize the risk.
However, when $\tau < \tfrac{1}{2}$ and
quality-based selection dominates,
there may be multiple equilibria. Now, the limit
point becomes initialization-dependent, which presents
a technical obstacle to characterizing the limiting
risk in closed form;
however, experiments in Section~\ref{sec:numerical}
demonstrate similar phenomena across different values of $\tau$.

\section{Mitigating Overspecialization through Peer Probing}
\label{sec:probing}

The standard MSGD dynamics from
\cite{Su2024-qw} fail to converge to models that perform well on the full population
because they are ``blocked'' from observing users outside the ones who choose them, and
are limited to learning from a restricted portion of the feature space.

However, in many practical settings, learners can \emph{probe} the predictions
of other learners. For example, a platform could create an account on a
competitor's service to observe their recommendations. This concept has gained
particular prominence in the context of Large Language Models (LLMs) through
\emph{knowledge distillation} \cite{hinton2015distilling}, where one model learns
from another model's outputs. Recent work has extensively explored this area
\cite{werner2025deepseek, xu2024survey, yang2024survey, li2024direct, tan2023gkd, minillm2023}.
The release of the DeepSeek LLM \cite{werner2025deepseek} has brought knowledge
distillation to the forefront of both policy discussions and mainstream media attention
\cite{werner2025deepseek}. Despite this growing practical importance, the theoretical
implications of these multi-agent learning interactions—where models train on each
other's outputs—remain largely unexplored. This section asks the question:
under what circumstances can probing the predictions of others help
overcome the overspecialization trap?

\subsection{Algorithm}

We propose MSGD with Probing (MSGD-P), shown in Algorithm~\ref{alg:msgd_probe}.
The algorithm has two phases. In the \emph{offline phase},
each probing learner $j \in U$ collects a dataset $\mathfrak{D}_j$
of pseudo-labeled examples by querying peer models on sampled covariates.
In the \emph{online phase}, learners interleave standard MSGD updates
(on organic users) with gradient steps on their probing datasets.

More concretely, for each probing learner $i \in U$, we sample covariates
$(\tilde{x}^1_i, \ldots, \tilde{x}^n_i) \sim \mathcal{P}_X^n$.
Given a query covariate $x$, the learner selects a subset of peers
$T_i(x) \subseteq [m]$ to consult, and forms a pseudo-label via median aggregation
\[
\yAgg(x, \Theta) = \operatorname{median}\{h_{\theta_j}(x) : j \in T_i(x)\}.
\]
We then define $\tilde{y}^q_i := \yAgg(\tilde{x}^q_i, \Theta^0)$ and
the pseudo-labeled examples $\tilde{z}^q_i := (\tilde{x}^q_i, \tilde{y}^q_i)$,
and collect the probing dataset $\mathfrak{D}_i := \{\tilde{z}^q_i\}_{q=1}^n$.
The choice of $T_i(x)$ determines which peers are consulted;
we discuss this in Section~\ref{sec:when_probing_helps}.

For a probing learner $i \in U$, the update can be interpreted as
a stochastic gradient step on the following instantaneous loss:
\begin{equation}
\label{eq:probing_loss}
L_i^t(\theta_i) = \tau \alpha_i \mathbb{E}_{z \sim \mathcal{P}_i}[\ell(z; \theta_i)]
+ (1-\tau) a_i(\Theta^t) \mathbb{E}_{z \sim \mathcal{D}_i(\Theta^t)}[\ell(z; \theta_i)]
+ \frac{p}{n}\sum_{q=1}^n \ell(\tilde{z}^q_i; \theta_i)
+ \frac{\lambda p}{2}\|\theta_i\|^2.
\end{equation}
The first two terms capture organic learning from users who select learner $i$
(via inherent preference $\mathcal{P}_i$ or quality-based choice $\mathcal{D}_i$),
while the third term captures learning from probing data.
The parameter $p > 0$ controls the relative weight of probing gradients:
larger $p$ emphasizes pseudo-labels, smaller $p$ prioritizes organic data.

\begin{algorithm}[t]
    \caption{Multi-learner Streaming Gradient Descent with Probing (MSGD-P)}
    \begin{algorithmic}[1]
    \Require loss function $\ell(\cdot, \cdot) \geq 0$;
    Initial models $\Theta^0 = (\theta^0_1, \ldots, \theta^0_m)$;
    Learning rate $\{\etat\}_{t=1}^{T+1}$,
    probing weight $p > 0$, set of probing learners $U \subseteq [m]$, regularization weight $\lambda \ge 0$
    \vspace{0.25em}
    \State \algcomment{Offline probing data collection}
    \For{$j \in U:$}
        \State Sample covariates $(\tilde{x}^1_j, \ldots, \tilde{x}^n_j) \sim \mathcal{P}_X^n$.
        \State Collect pseudo-labels and store dataset $\frakD_j = \{\left(\tilde{x}^1_j,
        \yAgg(\tilde{x}^1_j, \Theta^0)\right) \ldots \left(\tilde{x}^n_j,
        \yAgg(\tilde{x}^n_j, \Theta^0)\right)\}$
    \EndFor
    \State \algcomment{Online updates}
    \For{$t = 0, 1, 2, \dots, T$}
        \State Sample data point $\xt \sim \mathcal{P}$
        \State User selects model $i = M(\xt; \Theta^t)$
        \State $\theta^{t+1}_{i} \gets \theta^t_{i} -
        \etat \nabla \ell(\xt, \theta^t_{i})$
        \For{$j \in U:$}
            \State Sample $\tilde{z}^t_j$ uniformly from $\frakD_j$
            \State $\theta^{t+1}_j \gets \theta^t_j -
            \etat p \left(\nabla \ell(\tildeztj, \theta^t_j)
            + \lambda \theta^t_j\right)$
        \EndFor
    \EndFor
    \State \Return $\Theta^T$
    \end{algorithmic}
    \label{alg:msgd_probe}
\end{algorithm}

We assume that probing learners can sample covariates from the
full distribution $\mathcal{P}_X$, but do not have access to true labels.
This asymmetry is natural in practice: covariates are often publicly available
or easy to generate, while labels require costly human annotation or reveal
private user behavior.
For example, a streaming service knows the metadata of all movies
(genre, cast, runtime) and general user demographics,
but not which movies a non-subscriber would rate highly.
In the LLM setting, covariates are simply text prompts,
which can be generated synthetically or scraped from public sources
(e.g., Reddit, StackOverflow),
whereas ground-truth responses require expensive human evaluation.

We focus on \emph{offline probing}, where pseudo-labels are collected once
at initialization from a fixed snapshot of peer models.
This design choice mirrors practical knowledge distillation workflows:
in settings like DeepSeek distilling from Claude or GPT-4,
teachers are queried to create a fixed dataset,
and student training then proceeds independently \citep{werner2025deepseek}.
Offline probing also ensures reproducibility by capturing a specific
model version's behavior,
avoiding inconsistencies from querying peers at different stages of adaptation.
We discuss extensions to online probing in Section~\ref{sec:discussion}.

\subsection{Convergence}

With probing, each learner $i \in U$ now optimizes a blend of two objectives: the loss on observed users (as in standard MSGD) and the loss on probing data. This leads to a modified potential function:
\begin{equation}
    \label{eq:potential_function_msgd_probe}
  \widetilde{f}(\Theta) = f(\Theta) + p \sum_{i \in U}  \left( \frac{1}{n} \sum_{q = 1}^n \ell(\tilde{z}^q_i, \theta_i)
  + \frac{\lambda}{2} \|\theta_i\|^2 \right),
\end{equation}
where the second term captures the probing loss (with regularization) weighted by $p$.

The same stochastic approximation analysis from Section~\ref{sec:standard_learner_rules} extends to this setting: $\widetilde{f}$ serves as a Lyapunov function for the modified dynamics, yielding the following convergence guarantee.
\begin{restatable}{theorem}{ThmMSGDProbeConvergence}
    \label{thm:msgd_probe_convergence}
    Let Assumptions~\ref{ass:bounded_support}-\ref{ass:loss_measure}, \Cref{ass:bounded_iterates}
    (as applied to $\widetilde{f}$), and \Cref{ass:accurate_probing} hold.
    Then, the iterates $\{\Theta^t\}$ of \Cref{alg:msgd_probe}
    converge to the set of
    stationary points $\{\Theta: \nabla \widetilde{f}(\Theta) = 0\}$ almost surely.
\end{restatable}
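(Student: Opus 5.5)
The plan is to reuse the stochastic approximation route behind Lemma~\ref{lem:msgd_convergence_ode} and Theorem~\ref{lem:msgd_convergence}, now with $\widetilde{f}$ as the Lyapunov function. The first step is to write the update of Algorithm~\ref{alg:msgd_probe} in the form
\[
\Theta^{t+1} = \Theta^t - \eta^t\bigl(\nabla \widetilde{f}(\Theta^t) + M^{t+1}\bigr).
\]
Here $M^{t+1}$ collects all noise. For each learner $i$, the organic contribution is $\mathbf{1}\{M(z^t;\Theta^t)=i\}\nabla\ell(z^t;\theta_i^t)$. Its conditional expectation given $\mathcal{F}_t$ is
\[
\tau\alpha_i\mathbb{E}_{\mathcal{P}_i}[\nabla\ell(z;\theta_i^t)] + (1-\tau)a_i(\Theta^t)\mathbb{E}_{\mathcal{D}_i(\Theta^t)}[\nabla\ell(z;\theta_i^t)],
\]
which is the "frozen-partition" gradient. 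For $i\in U$ the probing contribution is $p(\nabla\ell(\tilde z_i^t;\theta_i^t)+\lambda\theta_i^t)$. Its conditional expectation is $p\bigl(\tfrac1n\sum_q\nabla\ell(\tilde z_i^q;\theta_i^t)+\lambda\theta_i^t\bigr)$, which is exactly the gradient in $\theta_i$ of the added term in \eqref{eq:potential_function_msgd_probe}.

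The important observation is that the probing dataset $\frakD_i$ is fixed at time $0$, since the pseudo-labels are computed from $\Theta^0$. So the added term is a deterministic, smooth, $\mathcal{F}_0$-measurable function of $\theta_i$ alone, and the probing part of the mean field is a genuine gradient. The sampling of $\tilde z_i^t$ is independent of $z^t$ given the past.

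The second step is to verify the Borkar conditions for the combined drift.

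\begin{itemize}
\item \emph{Step sizes.} These follow from Assumption~\ref{ass:learning_rate}.
\item \emph{Martingale difference noise.} The noise has conditional second moment bounded by $K(1+\|\Theta^t\|^2)$. This uses Lemma~\ref{lem:loss_regularity}, the bounded support in Assumption~\ref{ass:bounded_support}, boundedness of the pseudo-labels (a median of finitely many predictions at the fixed $\Theta^0$ on $\|x\|\le R$, or the range guaranteed by Assumption~\ref{ass:accurate_probing}), and the linear growth of $\lambda\theta$.
\item \emph{Bounded iterates.} This is Assumption~\ref{ass:bounded_iterates} applied to $\widetilde f$.
\item \emph{Lipschitz mean field.} The mean field $h(\Theta)=\nabla\widetilde f(\Theta)$ must be locally Lipschitz.
\end{itemize}

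For the last condition, the probing part is $\beta_\ell$-smooth plus the linear term $\lambda\theta$, so it is immediate. The organic part is identical to the MSGD case. There, the identification of the frozen-partition gradient with $\nabla f$ (no boundary terms when differentiating through $Z_i(\Theta)$), and its Lipschitz property, were established under Assumption~\ref{ass:loss_measure} in the proof of Lemma~\ref{lem:msgd_convergence_ode}. I would invoke that lemma's argument verbatim for $f$ and add the smooth probing term. Borkar's theorem then shows that the iterates converge a.s.\ to a compact connected internally chain transitive invariant set of $\dot\Theta=-\nabla\widetilde f(\Theta)$.

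The third step is the Lyapunov argument. Along trajectories, $\tfrac{d}{dt}\widetilde f(\Theta(t))=-\|\nabla\widetilde f\|^2\le0$. Assumption~\ref{ass:bounded_iterates} makes the stationary set compact, and $\widetilde f$ is continuous. By the standard result that an internally chain transitive set of a gradient-like system lies in the set of equilibria, provided the image of that set under the Lyapunov function has empty interior (e.g.\ Borkar, Ch.~2 Cor.~3), the limit set lies in $\{\nabla\widetilde f=0\}$.

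I expect the main obstacle to be this empty-interior condition on $\widetilde f(\{\nabla\widetilde f=0\})$. It was presumably handled in the MSGD proof, either by a Sard-type argument or by using the LaSalle form of the theorem directly. I would mirror whatever that proof did, noting that adding a smooth function of $\theta_i$ does not change the structure of that argument. A secondary subtle point is making sure the gradient identity for the partition-dependent $f$ still holds. It does, because probing only adds an additive term that does not depend on the partition.
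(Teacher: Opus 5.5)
Your plan follows the paper's proof essentially step for step. Both use the drift identity for the organic and probing parts. Both verify Borkar's conditions via loss regularity, Lipschitzness of $a_i$ under Assumption~\ref{ass:loss_measure}, the martingale variance bound, and bounded iterates. Both then run a Lyapunov argument for $\widetilde f$. For the variance bound, the pseudo-labels are bounded because they are medians of predictions at the fixed $\Theta^0$. Assumption~\ref{ass:accurate_probing} is only a mean-square bound and gives no range, so drop that alternative.

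The empty-interior issue you flag is not handled in the paper. The paper simply asserts that because $\tfrac{d}{dt}\widetilde f=-\|\nabla\widetilde f\|^2$ vanishes only at stationary points, the limiting invariant set consists of stationary points. It derives the MSGD theorem as the $U=\emptyset$ case of this very proof, so there is no earlier argument to mirror.

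Your concern is legitimate. The criterion you cite needs $\widetilde f(\{\nabla\widetilde f=0\})$ to have empty interior, and a Sard-type argument is not available from a merely locally Lipschitz gradient. You can nevertheless close the step without extra hypotheses:
\begin{enumerate}
\item Apply the descent lemma to $\widetilde f$ along the iterates, and take conditional expectations using the drift identity and the variance bound.
\item Invoke Robbins--Siegmund. You must localize on the events $\{\sup_t\|\Theta^t\|\le k\}$, $k\in\mathbb{N}$, because the bound in Assumption~\ref{ass:bounded_iterates} is random and the Lipschitz constant of $\nabla\widetilde f$ is only local.
\item Since $\widetilde f\ge 0$ and $\sum_t(\eta^t)^2<\infty$, this gives that $\widetilde f(\Theta^t)$ converges almost surely.
\item Hence $\widetilde f$ is constant on the limit set $L$ of the iterates, which Borkar's theorem shows is invariant. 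Let $\phi_t$ denote the flow of $\dot\Theta=-\nabla\widetilde f(\Theta)$. For each $x\in L$, the map $t\mapsto\widetilde f(\phi_t(x))$ is constant, while its derivative is $-\|\nabla\widetilde f(\phi_t(x))\|^2$.
\item Therefore $\nabla\widetilde f=0$ on $L$, which is the claim.
\end{enumerate}
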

Crucially, the stationary points of $\widetilde{f}$ differ from those of $f$: probing changes \emph{where} learners converge, not \emph{whether} they converge. The probing term pulls learners toward models that perform well on the probed distribution, potentially escaping the overspecialization trap. However, this benefit depends on the quality of the pseudo-labels—a question we address next.

\subsection{When Does Probing Help?}
\label{sec:when_probing_helps}

Note that the convergence guarantee above holds for any choice of
$T_i(x)$.
However, in order for the probing data to be \emph{helpful},
the pseudo-labels must be a good proxy for the ground-truth labels.

\begin{assumption}[Accurate Probing]
\label{ass:accurate_probing}
We say that the ``accurate probing'' condition
holds for probing learner $i \in [m]$ if there exists $B \geq 0$ such that
\[
\E_{(x,y) \sim \mathcal{P}}
\left[
\bigl(\yAgg(x, \Theta_{-i}) - y\bigr)^2
\right]
\leq B.
\]
\end{assumption}
This assumption is stated for a single probing learner $i$.
Different learners may satisfy it via different scenarios (or not at all);
performance guarantees in Section~\ref{sec:performance_guarantees} apply
to any learner for whom the assumption holds.

Below, we identify scenarios under which
Assumption~\ref{ass:accurate_probing} holds.
These scenarios differ along two axes:
\emph{what learner $i$ must know} about the market,
and \emph{what must be true} about peer models at initialization.
Table~\ref{tab:probing_scenarios} summarizes these scenarios;
the second and third columns display the knowledge-vs-peer-requirement tradeoff.

\begin{table}[t]
    \centering
    \small
    \begin{tabular}{p{0.16\textwidth}p{0.14\textwidth}p{0.22\textwidth}p{0.12\textwidth}p{0.18\textwidth}}
    \toprule
    Scenario & What $i$ knows & Peer requirement & $T_i(x)$ & $B$ \\
    \midrule
    Majority-good & Nothing & $>50\%$ in $B_r(\theta^\ast)$ & $[m]$ & $R^2 r^2 + 2\epsilon$ \\
    Market-leader & Identity of $j^\ast$ & $\mathcal{R}(\theta^0_{j^\ast}) \le \xi$ & $\{j^\ast\}$ & $\xi$ \\
    Partial knowledge & Subset $G$ & $>50\%$ of $G$ in $B_r(\theta^\ast)$ & $G$ & $R^2 r^2 + 2\epsilon$ \\
    Preference-aware & $\pi(x)$ & \textbf{Nothing} & $\{\pi(x)\}$ & $\epsilon$ \\
    \bottomrule
    \end{tabular}
    \vspace{0.5em}
    \caption{Probing scenarios with corresponding rules and accuracy bounds.
    The preference-aware scenario is notable: it requires no assumption
    on peer quality, only knowledge of user preferences.}
    \label{tab:probing_scenarios}
\end{table}

\begin{definition}[Globally good peers]
\label{def:globally_good_scenarios}
We say learner $i$ can achieve accurate probing via \emph{globally good peers}
if the following instance properties and knowledge conditions hold:
\begin{enumerate}[label=(\roman*), leftmargin=2em]
    \item \textbf{Majority-good.}\label{itm:majority_good}
    More than half of the learners satisfy
    $\theta_j^0 \in B_r(\theta^\ast)$ for a given proximity parameter $r > 0$.

    \item \textbf{Market-leader.}\label{itm:market_leader}
    There exists a single learner $j^\ast \in [m]$ such that
    \[
        \E_{x \sim \mathcal{P}}[\ell(x, \theta^0_{j^\ast})] \le \xi,
    \]
    and the identity of $j^\ast$ is known to learner~$i$.

    \item \textbf{Partial knowledge.}\label{itm:partial_knowledge}
    There exists a subset $G \subseteq [m] \setminus \{i\}$ such that
    more than half of the learners in $G$ satisfy
    $\theta_j^0 \in B_r(\theta^\ast)$ for a given $r > 0$,
    and learner~$i$ has knowledge of the subset $G$.
\end{enumerate}
\end{definition}
Above, the parameter $r > 0$ in the majority-good and partial-knowledge scenarios
controls how close peers must be to $\theta^\ast$;
smaller $r$ yields tighter bounds. When no globally good learners exist or can be identified,
probing may still be effective if the learner has access to ranking information.

\begin{definition}[Preference-aware probing]
\label{def:ranking_based_scenarios}
Suppose all learners initialize at the parameters
$\bar{\Theta} = (\bar{\theta}_1, \ldots, \bar{\theta}_m)$.
If learner $i$ has knowledge of the inherent preference function
$\pi(z)$, which identifies each user's preferred platform,
we call this the \emph{preference-aware}\label{itm:preference_aware} scenario.
\end{definition}

These definitions capture different approaches to probing:
\emph{Definition~\ref{def:globally_good_scenarios}} covers settings where
learners can identify or rely on peers with strong global performance,
while \emph{Definition~\ref{def:ranking_based_scenarios}} addresses settings
where learners must instead leverage knowledge of user preferences.
The scenarios exhibit a fundamental tradeoff:
when peer models are favorable (e.g., many are globally good),
learner $i$ needs little knowledge to achieve accurate probing;
conversely, with stronger knowledge (e.g., knowing $\pi(x)$), the learner
can probe in a more targeted way, and
probing succeeds even when no peer is globally competent.

In realistic markets, learners often naturally gain access to information
enabling one of these approaches. For Scenarios~(\ref{itm:majority_good}) or
(\ref{itm:partial_knowledge}),
platforms may observe broad industry benchmarks
\citep{liang2023helm,chiang2024chatbot, russakovsky2015imagenet,
bennett2007netflix, olson2017pmlb}. For \Cref{itm:preference_aware},
it is natural to maintain knowledge of user preference patterns
\citep{guadagni1983logit, abdullah2021eliciting, huang2018cross,
iranikermani2023brand}. For Scenario~\ref{itm:market_leader}, there are examples in the
LLM literature of this explicitly happening: for instance, the
Alpaca model \citep{taori2023alpaca}
was admittedly trained on data generated by text-davinci-003 (GPT-3.5),
the Vicuna model \citep{chiang2023vicuna}
was trained on data generated by GPT-4, and the survey paper
\cite{gudibande2023false} documents the prevalence of this practice.

The probing rule $T_i(x)$ (fourth column of Table~\ref{tab:probing_scenarios})
in each scenario is chosen so that median aggregation is robust:
probing all peers when the majority are good,
targeting the known leader when one exists,
or routing to the locally-expert peer in the preference-aware case.
The preference-aware scenario is particularly notable:
it requires \emph{no assumption} on peer quality,
only knowledge of user preferences $\pi(x)$,
enabling learner $i$ to aggregate specialized knowledge into
global competence even when every peer suffers from overspecialization.

The next lemma shows that in each scenario,
the pseudo-labels obtained via the corresponding $T_i(x)$
are uniformly bounded in mean-squared error,
ensuring that Assumption~\ref{ass:accurate_probing} holds.

\begin{restatable}{lemma}{goodprobinglemma}
    \label{lem:good_probing}
    For each scenario in Definitions~\ref{def:globally_good_scenarios}--\ref{def:ranking_based_scenarios},
    the probing rule $T_i(x)$ in Table~\ref{tab:probing_scenarios} satisfies
    Assumption~\ref{ass:accurate_probing} with the stated accuracy parameter $B$.
\end{restatable}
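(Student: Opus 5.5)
We treat the four scenarios separately; in each we bound $\E_{(x,y)\sim\mathcal{P}}[(\yAgg(x,\Theta)-y)^2]$ with pseudo-labels computed at $\Theta^0$. We work in the regression setting with linear predictors $h_\theta(x)=x^\top\theta$ and squared loss, which is the setting in which the accuracy condition is stated. Throughout we use the decomposition
\[
(\hat y - y)^2 \le 2(\hat y - x^\top\theta^\ast)^2 + 2(x^\top\theta^\ast - y)^2,
\]
whose second term has expectation $2\mathcal{R}(\theta^\ast)=2\epsilon$. We also use $|x^\top\theta - x^\top\theta^\ast|\le \|x\|\,\|\theta-\theta^\ast\|\le Rr$ for $\theta\in B_r(\theta^\ast)$, by Cauchy--Schwarz and Assumption~\ref{ass:bounded_support}.

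\textbf{Majority-good and partial-knowledge.} The key step is a deterministic property of the median. Fix $x$. If strictly more than half of the values $\{h_{\theta_j^0}(x): j\in T_i(x)\}$ lie in the interval $I_x=[x^\top\theta^\ast - Rr,\, x^\top\theta^\ast + Rr]$, then the median also lies in $I_x$. To see this, suppose the median exceeded the right endpoint of $I_x$. Then at least half of the values would be above the interval, contradicting the strict majority inside it; the left side is symmetric. For even cardinalities, any convention that takes a value between the two middle order statistics still lands in $I_x$. The good peers' predictions lie in $I_x$ by the Cauchy--Schwarz bound above, so pointwise $|\yAgg(x)-x^\top\theta^\ast|\le Rr$. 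Plugging into the decomposition gives $2R^2r^2+2\epsilon$.

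The table states $R^2r^2+2\epsilon$, and the gap between that and $2R^2r^2+2\epsilon$ is the main obstacle. Two routes could close it.
\begin{itemize}
\item If $\theta^\ast$ is the population least-squares minimizer over all linear predictors, the residual $y-x^\top\theta^\ast$ is orthogonal in $L^2$ to every linear function of $x$. However, the median is not linear in $x$, so this orthogonality does not directly remove the cross term.
\item Alternatively, expand the square and bound the cross term $2\,\E[(\hat y - x^\top\theta^\ast)(x^\top\theta^\ast-y)]$ by Cauchy--Schwarz, giving the bound $(Rr+\sqrt{\epsilon})^2$.
\end{itemize}
I will state the bound as obtained, and note that the constant matches the table up to this factor, or that it matches when the residual is conditionally mean-zero ($\E[y\mid x]=x^\top\theta^\ast$). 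In that case the cross term vanishes and the bound becomes $R^2r^2+\epsilon\le R^2r^2+2\epsilon$. For partial knowledge, the identical argument applies with $T_i(x)=G$.

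\textbf{Market-leader.} Here $T_i(x)=\{j^\ast\}$, so the median of a singleton is $\yAgg(x)=x^\top\theta^0_{j^\ast}$. The expected squared error is therefore exactly $\mathcal{R}(\theta^0_{j^\ast})\le\xi$.

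\textbf{Preference-aware.} Here the pseudo-label is $x^\top\bar\theta_{\pi(x)}$, where each $\bar\theta_k$ minimizes the risk on $\mathcal{P}_k$. Condition on the group $k=\pi(x)$ and use the law of total expectation:
\[
\E[(\yAgg - y)^2] = \sum_k \alpha_k\, \E_{\mathcal{P}_k}[(x^\top\bar\theta_k - y)^2] \le \sum_k \alpha_k\, \E_{\mathcal{P}_k}[(x^\top\theta^\ast - y)^2] = \mathcal{R}(\theta^\ast)=\epsilon .
\]
The inequality holds because $\bar\theta_k$ is optimal on $\mathcal{P}_k$. A subtlety is that $\pi$ is defined on $z$ rather than $x$, and the table writes $\pi(x)$. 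We therefore need $\pi$ to depend only on covariates, which we make explicit as the interpretation of this scenario.
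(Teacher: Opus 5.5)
Your proposal is correct and matches the paper's proof scenario by scenario. For majority-good and partial knowledge, both use the median-in-interval argument followed by the split $(\hat y-y)^2\le 2(\hat y-x^\top\theta^\star)^2+2(x^\top\theta^\star-y)^2$. For the market leader both use the singleton median, and for the preference-aware case both use group-wise optimality of $\bar\theta_k$. The paper's own argument likewise only yields $2R^2r^2+2\epsilon$ rather than the tabulated $R^2r^2+2\epsilon$. So the factor-of-two discrepancy you flagged, like the implicit requirement that $\pi$ depend only on $x$, is a looseness in the paper's statement, not a gap in your proof. Your median argument over $G$ also needs only the majority-of-$G$ hypothesis of Definition~\ref{def:globally_good_scenarios}, whereas the paper's version assumes every member of $G$ is good.
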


\subsection{Performance Guarantees}
\label{sec:performance_guarantees}

We now characterize the full-population risk of MSGD-P stationary points.
The bound decomposes into four interpretable components:
an irreducible Bayes error term,
a probing bias term capturing pseudo-label inaccuracy,
and two regularization-dependent terms reflecting a bias-variance tradeoff.

\begin{restatable}{theorem}{sqperformancebigO}
    \label{cor:sq-performance-bigO}
    Let Assumptions~\ref{ass:bounded_support}--\ref{ass:accurate_probing} hold.
    For any $\kappa\in(0,1)$, with probability at least $1-\kappa$,
    we have for probing learner $i \in [m]$, for the squared loss:
    $$
    \Risk(\tilde\theta_i)
    \;\leq\;
    O\!\left(\Bigl(\frac{p+1}{p}\Bigr)\epsilon
    + B
    + \lambda \|\theta^\star\|^2
    \;+\;\frac{(p+1) C_{\text{gen}} }{p\lambda}
    \sqrt{\frac{\log(1/\kappa)}{n}} \right).
    $$
    where
    $C_{\text{gen}}\left(R, \Ymax, \|\theta^\star\|,
    \max_{j\ne i}\|\theta_j^0\|\right)$
    is stated explicitly in the Appendix.
  \end{restatable}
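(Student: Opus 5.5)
The plan is to work at a stationary point $\tilde\Theta$ of $\widetilde f$, as delivered by \Cref{thm:msgd_probe_convergence}, and to exploit the fact that, with the other learners held fixed, the $i$-th block of $\nabla\widetilde f$ is the gradient of the frozen local objective. Fix the partition at $\tilde\Theta$. By \Cref{ass:loss_measure} there are no boundary terms, so $\nabla_{\theta_i}\widetilde f(\tilde\Theta)=\nabla G_i(\tilde\theta_i)$, where
\[
G_i(\theta)=\E_{\mathcal{O}_i(\tilde\Theta)}[\ell(z;\theta)]+p\,\widehat{L}_i(\theta)+\tfrac{\lambda p}{2}\|\theta\|^2,
\qquad
\widehat L_i(\theta)=\tfrac1n\textstyle\sum_q\ell(\tilde z^q_i;\theta).
\]
For the squared loss $G_i$ is $\lambda p$-strongly convex, so $\tilde\theta_i$ is its unique global minimizer. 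Hence $G_i(\tilde\theta_i)\le G_i(\theta^\star)$.

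Next I would bound the two sides. The organic term on the right is at most $\E_{\mathcal P}[\ell(z;\theta^\star)]=\epsilon$, since $\mathcal O_i\le\mathcal P$ as measures. On the left, I drop the nonnegative organic term. This gives
\[
p\,\widehat L_i(\tilde\theta_i)+\tfrac{\lambda p}{2}\|\tilde\theta_i\|^2\le \epsilon+p\,\widehat L_i(\theta^\star)+\tfrac{\lambda p}{2}\|\theta^\star\|^2 .
\]
Dividing by $p$ yields $\widehat L_i(\tilde\theta_i)\le \epsilon/p+\widehat L_i(\theta^\star)+\tfrac\lambda2\|\theta^\star\|^2$, and also the norm bound $\|\tilde\theta_i\|^2\le 2\epsilon/(\lambda p)+\tfrac{2}{\lambda}\widehat L_i(\theta^\star)+\|\theta^\star\|^2$. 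Pseudo-labels are bounded by $\Ymaxprobe\lesssim R\max_j\|\theta^0_j\|$ because medians of predictions inherit the bound. This gives $\|\tilde\theta_i\|\le B_\theta$ with $B_\theta^2=O\bigl(\tfrac{p+1}{p\lambda}\,\mathrm{poly}(R,\Ymax,\|\theta^\star\|,\max\|\theta^0_j\|)\bigr)$.

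To pass to population quantities, let $L_i(\theta)=\E_x[(\yAgg(x,\Theta^0)-x^\top\theta)^2]$ be the population pseudo-label risk. I need two concentration statements, each holding with probability $1-\kappa/2$:
\begin{itemize}
\item a one-point Hoeffding bound $\Lamst$ at $\theta^\star$, of order $\widehat L_i(\theta^\star)\le L_i(\theta^\star)+\Lamst$;
\item a uniform deviation $\Lamu$ over the ball $\|\theta\|\le B_\theta$, via Rademacher complexity of bounded linear predictors composed with the squared loss, which is Lipschitz on the bounded range. This gives $\Lamu=O\bigl((RB_\theta+\Ymaxprobe)^2\sqrt{\log(1/\kappa)/n}\bigr)$.
\end{itemize}
Then $L_i(\tilde\theta_i)\le \epsilon/p+L_i(\theta^\star)+\lambda\|\theta^\star\|^2+\Lamst+\Lamu$.

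Finally I would relate the pseudo-risk to the true risk. By $(a+b)^2\le2a^2+2b^2$,
\[
\Risk(\tilde\theta_i)\le 2L_i(\tilde\theta_i)+2\E[(\yAgg-y)^2]\le 2L_i(\tilde\theta_i)+2B
\]
by \Cref{ass:accurate_probing}. Similarly $L_i(\theta^\star)\le 2\epsilon+2B$. Collecting terms gives $O(\epsilon/p+\epsilon+B+\lambda\|\theta^\star\|^2+\Lamu+\Lamst)$, where $\epsilon+\epsilon/p=\frac{p+1}{p}\epsilon$. Since $B_\theta^2$ carries the factor $\frac{p+1}{p\lambda}$, $\Lamu$ takes the stated form $\frac{(p+1)C_{\rm gen}}{p\lambda}\sqrt{\log(1/\kappa)/n}$, and $\Lamst$ is dominated by it.

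The main obstacle is the uniform deviation step. The stationary point $\tilde\theta_i$ depends on the probing sample, both directly and through the partition at $\tilde\Theta$, so a fixed-$\theta$ bound does not apply and the uniform bound over the data-independent ball is essential. A secondary subtlety is that $\widehat L_i(\theta^\star)$ enters $B_\theta$. I would handle this by first bounding it deterministically by $2(R\|\theta^\star\|)^2+2\Ymaxprobe^2$, so the ball radius is fixed before sampling.
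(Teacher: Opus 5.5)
Your proposal is correct. Its skeleton matches the paper's proof: $\tilde\theta_i$ globally minimizes the frozen-partition objective $\hat\Phi_i^{\mathrm{probe}}(\cdot;\tilde\Theta)$, you compare against $\theta^\star$, and you bound the organic part at $\theta^\star$ by $\epsilon$. This gives $\widehat L_i(\tilde\theta_i)\le \epsilon/p+\widehat L_i(\theta^\star)+\tfrac{\lambda}{2}\|\theta^\star\|^2$. You also justify two steps the paper only asserts: there are no boundary terms, and $\mathcal O_i(\tilde\Theta)\le\mathcal P$.

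The two arguments diverge in two places.

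First, the paper obtains the a priori radius by comparing against $\theta=0$. This yields $\Bth=\sqrt{2(\Ymax^2+p\Ymaxprobe^2)/(\lambda p)}$, which does not depend on $\theta^\star$ and needs no deterministic bound on $\widehat L_i(\theta^\star)$. Your comparison with $\theta^\star$ also works, but it is a detour and brings $\|\theta^\star\|$ into the radius. That is harmless here, since $C_{\mathrm{gen}}$ depends on $\|\theta^\star\|$ anyway.

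Second, and more substantively, the paper bridges pseudo-labels to true labels at the empirical level. It uses $\widehat L_i\le 2\Ltrue+2\DeltaN^2$ and the reverse inequality. It then concentrates $\Ltrue(\theta^\star)$ around $\epsilon$, concentrates $\DeltaN^2$ below $B$ plus a deviation term by Hoeffding, and applies uniform convergence to the \emph{true-label} loss class on the ball. You instead apply both the uniform bound and the one-point bound to the \emph{pseudo-label} class. This is valid because the pairs $(\tilde x^q_i,\tilde y^q_i)$ are i.i.d.\ given the fixed $\Theta^0$. You then perform the bridge at the population level, where Assumption~\ref{ass:accurate_probing} applies directly.

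Your route gives cleaner probability accounting: it needs only two events. The paper's argument actually uses three: the uniform deviation, the one-point bound at $\theta^\star$, and the concentration of $\DeltaN^2$. Its union bound, however, only accounts for two of them. The constants come out comparable, roughly $6B+(4+2/p)\epsilon$ plus deviation terms in both, with your deviation terms expressed through $\Ymaxprobe$ rather than $\Ymax$.

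As in the paper, absorbing $\Lamst$ and the $\lambda$-free part of the uniform deviation into the final $\frac{(p+1)C_{\mathrm{gen}}}{p\lambda}$ term requires $\lambda\lesssim 1$.
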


The four terms admit natural interpretations:
(i) $\frac{p+1}{p}\epsilon$ is the irreducible Bayes error,
scaled by the ratio of total to probing gradient weight;
(ii) $B$ is the probing bias from pseudo-label inaccuracy
(see Table~\ref{tab:probing_scenarios});
(iii) $\lambda\|\theta^\star\|^2$ is the regularization bias; and
(iv) the final term captures finite-sample generalization error
from $n$ probing queries.

The probing weight $p$ controls the balance between
the organic distribution (users who choose learner $i$)
and the probing distribution (the full population).
A learner who prioritizes performance on their existing user base
may prefer smaller $p$,
while a learner seeking global competence benefits from larger $p$.

The regularization parameter $\lambda$ exhibits a classical bias-variance tradeoff.
Larger $\lambda$ increases the regularization bias ($\lambda\|\theta^\star\|^2$)
but improves generalization by keeping parameter norms bounded,
reducing the $O(1/\sqrt{\lambda n})$ term.
Conversely, smaller $\lambda$ reduces bias but worsens the generalization bound.

\begin{restatable}{corollary}{sqperformancebias}
    \label{cor:sq-performance-bias}
    Let Assumptions~\ref{ass:bounded_support}--\ref{ass:accurate_probing} hold.
    Fix $\lambda = \epsilon/\|\theta^\star\|^2$ and any $\kappa\in(0,1)$.
    Define $M_0 := \max_{j\ne i}\|\theta_j^0\|$.
    If there are sufficiently many probing samples $n \ge \underline n(p,\kappa,\epsilon,R,\Ymax,M_0,\|\theta^\star\|)$,
    then with probability at least $1-\kappa$,
    every stationary point $\tilde\Theta$ of MSGD-P satisfies, for each probing learner $i$,
    \[
        \Risk(\tilde\theta_i)
        \;\le\;
        O\!\left(\Bigl(\frac{p+1}{p}\Bigr)\epsilon + B\right).
    \]
\end{restatable}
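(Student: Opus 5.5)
The plan is to derive the corollary directly from Theorem~\ref{cor:sq-performance-bigO} by substituting the prescribed regularization level and then choosing $n$ large enough that the generalization term is absorbed into the remaining terms. Theorem~\ref{cor:sq-performance-bigO} holds with probability at least $1-\kappa$. It applies to any stationary point $\tilde\Theta$ of $\widetilde f$, and by Theorem~\ref{thm:msgd_probe_convergence} these are exactly the limit points of MSGD-P. On that event it gives
\[
\Risk(\tilde\theta_i)\le c\Bigl(\tfrac{p+1}{p}\epsilon + B + \lambda\|\theta^\star\|^2 + \tfrac{(p+1)C_{\text{gen}}}{p\lambda}\sqrt{\tfrac{\log(1/\kappa)}{n}}\Bigr)
\]
for an absolute constant $c$. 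One point needs checking against the appendix: the high-probability event must be uniform over all stationary points. I expect it is, since it should come from a uniform deviation bound over the ball $\|\theta\|\le B_\theta$ (the quantity $\Lamu$) together with a one-point bound at $\theta^\star$ (the quantity $\Lamst$), and neither depends on which stationary point is considered. If that holds, the quantifier ``every stationary point'' comes for free.

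First, substitute $\lambda=\epsilon/\|\theta^\star\|^2$. The regularization bias then becomes $\lambda\|\theta^\star\|^2=\epsilon$. Since $\epsilon\le\tfrac{p+1}{p}\epsilon$, this term is absorbed into the Bayes term at the cost of a constant factor. Note that $\epsilon>0$ is needed so that $\lambda>0$; if $\epsilon=0$, one would take $\lambda$ arbitrarily small and let $n\to\infty$.

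Second, the generalization term becomes $\frac{(p+1)C_{\text{gen}}\|\theta^\star\|^2}{p\,\epsilon}\sqrt{\log(1/\kappa)/n}$. We want this to be at most $\tfrac{p+1}{p}\epsilon$, which holds as soon as
\[
n\ \ge\ \underline n := \frac{C_{\text{gen}}^2\,\|\theta^\star\|^4\,\log(1/\kappa)}{\epsilon^4}.
\]
This threshold depends only on $(p,\kappa,\epsilon,R,\Ymax,M_0,\|\theta^\star\|)$, as stated, because $C_{\text{gen}}$ depends only on $R,\Ymax,\|\theta^\star\|,M_0$. Combining the three terms gives $\Risk(\tilde\theta_i)\le O(\tfrac{p+1}{p}\epsilon+B)$.

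The main obstacle is verifying that $C_{\text{gen}}$, as stated explicitly in the appendix, does not itself depend on $\lambda$. If it does, the requirement $n\ge\underline n$ becomes an implicit condition. The likely source of such dependence is the iterate-norm radius $B_\theta$, which is typically of order $\sqrt{(\text{loss at }0)/(p\lambda)}$ because the objective is $p\lambda$-strongly convex in $\theta_i$. If $C_{\text{gen}}$ hides a $\lambda^{-1/2}$ or $\lambda^{-1}$ factor, I would substitute $\lambda=\epsilon/\|\theta^\star\|^2$ explicitly, so that $C_{\text{gen}}$ becomes a polynomial in $\|\theta^\star\|/\sqrt\epsilon$. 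I would then take $\underline n$ to be the correspondingly larger polynomial in $1/\epsilon$. This is still a finite threshold in the stated parameters, so the conclusion is unchanged.
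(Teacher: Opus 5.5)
Your proposal follows the paper's proof: substitute $\lambda=\epsilon/\|\theta^\star\|^2$ so that the regularization bias equals $\epsilon$, then take $n$ of order $\|\theta^\star\|^4\epsilon^{-4}$ so that the generalization term is $O(\epsilon)$; the paper starts from the explicit bound in Lemma~\ref{thm:sq-performance-median-explicit} and sets $\underline n:=S(\kappa,\lambda)^2/\epsilon^2$ with $\Bth=\Theta(\lambda^{-1/2})$ substituted directly, which is exactly your fallback. That route is safer than using Theorem~\ref{cor:sq-performance-bigO} as a black box, because its $\lambda$-free $C_{\text{gen}}$ is derived assuming $\lambda<1$ (i.e.\ $\epsilon<\|\theta^\star\|^2$), and it writes $\sqrt{\log(1/\kappa)}$ even though the Rademacher term $4(\Ymax+\Bth R)\Bth R/\sqrt n$ has no log factor; so your explicit threshold $C_{\text{gen}}^2\|\theta^\star\|^4\log(1/\kappa)/\epsilon^4$ holds only in that regime and for $\kappa$ bounded away from $1$, whereas $S(\kappa,\lambda)^2/\epsilon^2$ needs neither restriction.
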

See Appendix~\ref{sec:app_sq_performance} for the explicit sample complexity $\underline n$ and proof.
We note that this sample complexity bound is not tight in $n$:
we show in Figure~\ref{fig:census_probe_vs_n_main} that strong empirical recovery can occur with very small probing datasets.

\begin{remark}[Cross-entropy loss]
An analogous performance guarantee holds for cross-entropy loss;
see Assumption~\ref{ass:accurate_probing_ce} and Appendix~\ref{sec:ce_performance} for the bound,
and Table~\ref{tab:accurate_probing_ce} for the corresponding accuracy parameters.
\end{remark}

Hence, probing breaks the information barrier
created by user-choice dynamics:
a learner who is overspecialized cannot observe users outside its niche and thus cannot learn to serve them.
While \Cref{thm:bad_outcome_simple} shows that
the loss of a learner may be arbitrarily worse than $\epsilon$,
for sufficiently large $n$, the bound above presents a ceiling on the risk
of any probing learner.

\section{Numerical Experiments}
\label{sec:numerical}

We evaluate our approach on three real-world datasets: MovieLens-10M,
the ACS Employment dataset from the US Census (Alabama, 2018),
and the Amazon Reviews 2023 corpus.

\begin{figure}[t]
    \centering
        \includegraphics[width=\textwidth]{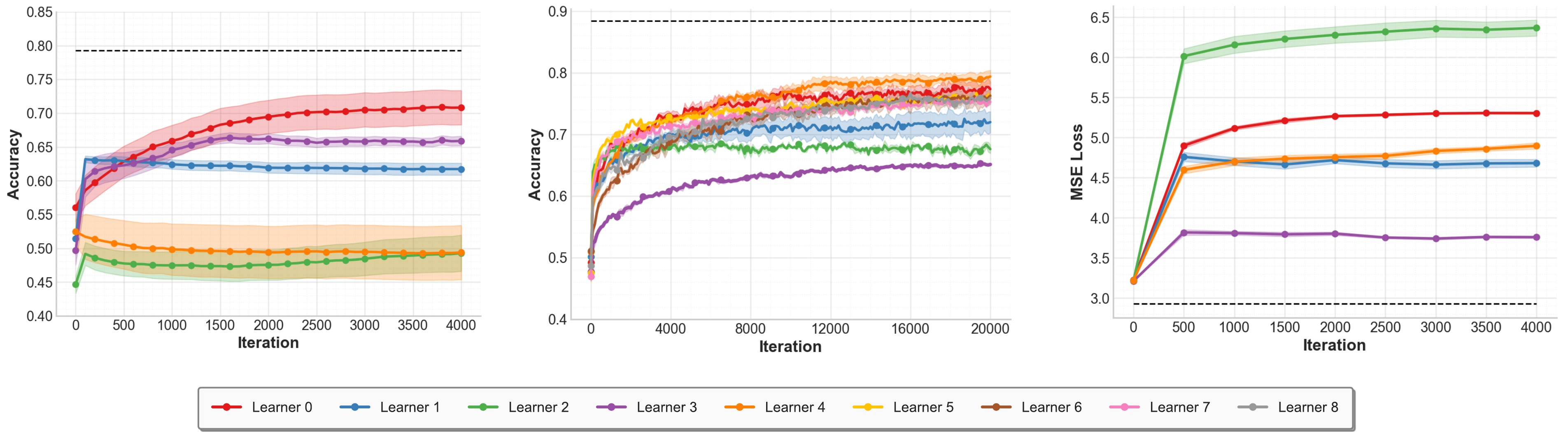}
    \caption{\textbf{MSGD full-population performance with random initialization
    (Preference-aware scenario).}
     Left: Census test accuracy. Mid: Amazon sentiment test accuracy
     Right: MovieLens test loss. The
     dashed black line represents the performance of a baseline $\theta^\ast$
     trained on the full dataset.
     In all cases, $\tau = 0.3$; other dataset-specific hyperparameters are
     given in Table~\ref{tab:hyperparameters}.}
	\label{fig:bad_outcome}
\end{figure}

\vspace{0.5em}

\subsection{Experimental Setup}

We evaluate on three datasets; in each case, every data point corresponds to an individual user.

\paragraph{Movie Recommendation with Squared Loss.}
We use the MovieLens-10M dataset \citep{harper2015movielens},
which contains 10 million movie ratings from 70k users across 10k movies---a
natural testbed for multi-learner competition in recommendation.
Following \citet{bose2023initializing} and \citet{Su2024-qw},
we extract $d=16$ dimensional user embeddings via matrix factorization
and retain ratings for the top 200 most-rated movies,
yielding a population of 69,474 users.
Each user's data consists of $z = (x, r)$ where $x \in \mathbb{R}^d$
is the embedding and $r$ contains their ratings.
Let $\Omega_x$ denote the set of movies rated by user $x$ with $|\Omega_x|$ movies.
Each learner fits a linear model $\theta \in \mathbb{R}^{d \times 200}$ using squared loss:
\begin{align*}
\ell(z; \theta) = \frac{1}{|\Omega_x|} \sum_{i \in \Omega_x} (\theta_i^\top x - r_i)^2.
\end{align*}

\paragraph{Census Data with Logistic Loss.}
We use the ACSEmployment task from \texttt{folktables} \citep{ding2021retiring},
where the goal is to predict employment status from demographic features.
The population consists of 38,221 individuals from the 2018 Alabama census (ages 16--90),
with $d=16$ features describing age, education, marital status, etc.
Each user's data is $z = (x, y)$ where $x \in \mathbb{R}^d$
(standardized to zero mean, unit variance) and $y \in \{0,1\}$.
Each learner uses logistic regression:
\begin{align*}
\ell(z; \theta) = -y \log(\sigma(\theta^\top x)) - (1-y) \log(1 - \sigma(\theta^\top x)),
\end{align*}
where $\sigma$ is the sigmoid function. The model predicts $\hat{y} = \mathbf{1}[\theta^\top x > 0]$.

\paragraph{Amazon Reviews 2023 with Logistic Loss.}
We use the Amazon Reviews 2023 corpus and sample up to 30{,}000 reviews from nine product categories.
Each review is represented by a $d=384$ sentence embedding using \texttt{all-MiniLM-L6-v2}, and each learner performs binary sentiment classification with logistic regression.
As in the Census setting, labels are binary and predictions are thresholded logistic outputs.
Full preprocessing details are provided in Appendix~\ref{sec:app_numerical}.

\paragraph{User Preferences.}
For Census and MovieLens ($m=5$), we induce inherent preferences $\pi(z)$ via K-means clustering on user features, assigning each user to a preferred platform based on cluster membership.
For Amazon ($m=9$), preferences are determined by product category, with all reviews from a given category preferring the same learner.

\paragraph{Evaluation.}
For each dataset, the train/test split is fixed once and shared across all
learners; there is no validation split, since no per-run hyperparameter tuning
is performed. Census and Amazon report standard binary classification accuracy
on the held-out test set, while MovieLens reports masked test MSE.

\subsection{Experimental Results}

We present results for the preference-aware scenario from
\Cref{def:ranking_based_scenarios}. The results are qualitatively
similar in the other scenarios from
\Cref{def:globally_good_scenarios} as well,
and are presented in \Cref{sec:app_numerical}.

\paragraph{Expt 1: MSGD converges to equilibria with poor global performance.}
Our first set of experiments validates Theorem~\ref{thm:bad_outcome_simple}.
Figure~\ref{fig:bad_outcome} shows the full-population performance trajectories of
individual learners on all three datasets without probing ($p=0$) over $T = 4000$ rounds,
when initialized randomly.
In all cases, the results reveal large overspecialization gaps relative to the dashed
black baseline in Figure~\ref{fig:bad_outcome}.

\paragraph{Expt 2: Peer Model Probing Mitigates Overspecialization.}
\begin{figure}[t]
    \centering
    \includegraphics[width=\textwidth]{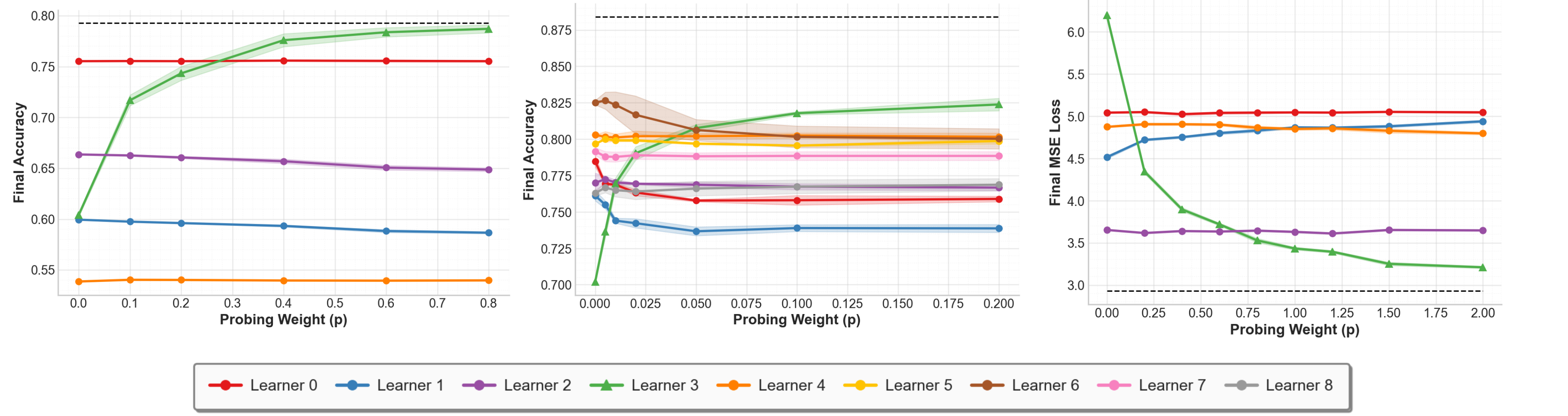}
    \caption{\textbf{Effect of probing on full-population performance (Preference-aware scenario).}
    Left: Census final accuracy vs probing weight $p$.
    Mid: Amazon sentiment final accuracy vs probing weight $p$
    Right: MovieLens final loss vs $p$. In all cases, triangle markers
    indicate the probing learner. Here $\tau = 0.7$; other dataset-specific
    hyperparameters are given in Table~\ref{tab:hyperparameters}.}
    \label{fig:good_ranking}
\end{figure}

Figure~\ref{fig:good_ranking} shows learner final performance at timestep $T$
as a function of probing weight $p$ using the offline probing budgets in
Table~\ref{tab:hyperparameters}; here, one learner (indicated by triangle markers)
uses preference-aware probing while other learners use standard MSGD.

On Census (left), the probing learner's accuracy improves from approximately 60\% at $p=0$
to about 78\% at $p=0.8$, shrinking its baseline gap from roughly 18 percentage points to about 1 percentage point.
The improvement is monotonic in $p$: even modest probing ($p=0.2$)
yields noticeable gains.
On MovieLens (right), the effect is equally pronounced:
the probing learner's MSE loss decreases from approximately 6.2 to 3.5.

\paragraph{Expt 3: Probing is robust to noisy source selection.}
\begin{figure}[t]
    \centering
    \includegraphics[width=\textwidth]{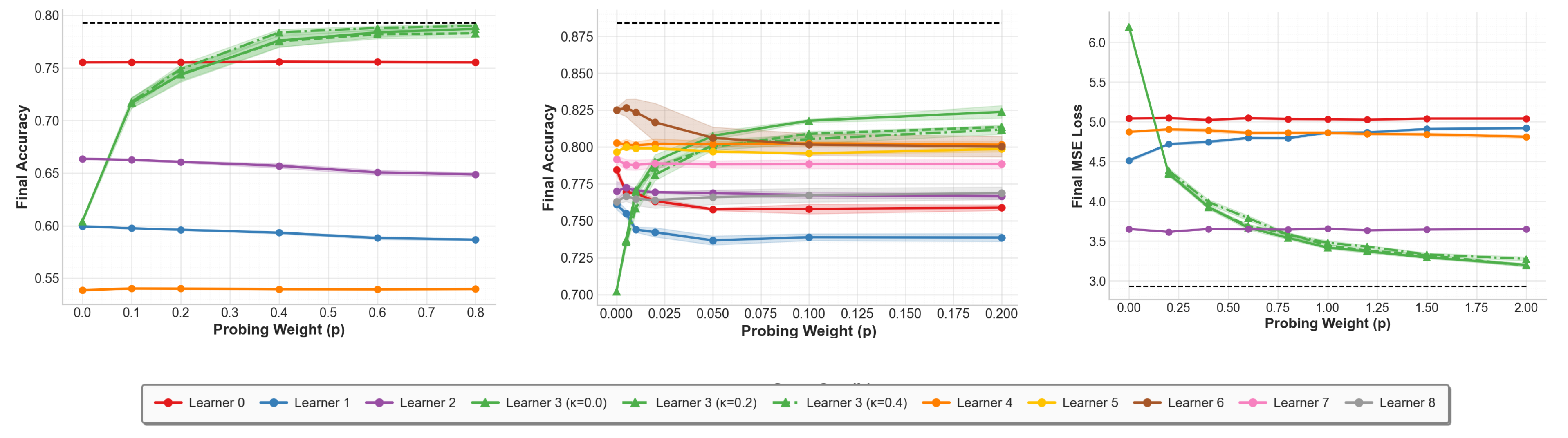}
    \caption{\textbf{Effect of noisy probing-source selection.}
    Preference-aware probing routes each probe query to the preferred peer
    $\pi(x)$ with probability $1-\kappa$ and to a random peer with probability
    $\kappa$. Left: Census final test accuracy vs.\ probing weight $p$.
    Middle: Amazon sentiment final test accuracy vs.\ $p$. Right: MovieLens
    final test MSE vs.\ $p$. The green learner is the probing learner; its
    multiple curves show different $\kappa$ values.}
    \label{fig:kappa}
\end{figure}
Figure~\ref{fig:kappa} tests a noisy version of preference-aware probing:
for each probe query, the learner routes to $\pi(x)$ with probability
$1-\kappa$ and to a random peer with probability $\kappa$. Across datasets,
probing remains beneficial under moderate routing noise, showing that the
method does not require perfect knowledge of user preferences.

\paragraph{Expt 4: How much probing data is needed?}
Figure~\ref{fig:census_probe_vs_n_main} studies sample efficiency by sweeping the probing dataset size $n$ on Census, averaged over 10 random seeds.
We observe substantial gains even with very small probing sets:
for the probing learner with $p \in \{0.5,1.0\}$,
final accuracy rises from about $0.68$ at $n=5$ to about $0.78$ by $n=50$,
and then saturates near $0.79$ at $n=100$;
this is a tiny fraction of the full dataset size of $38{,}221$ examples.
As $n$ increases, mean performance improves and variability across runs decreases, consistent with the finite-sample term in Theorem~\ref{cor:sq-performance-bigO}, which scales as $1/\sqrt{n}$.

\begin{figure}[t]
    \centering
    \includegraphics[width=0.6\textwidth]{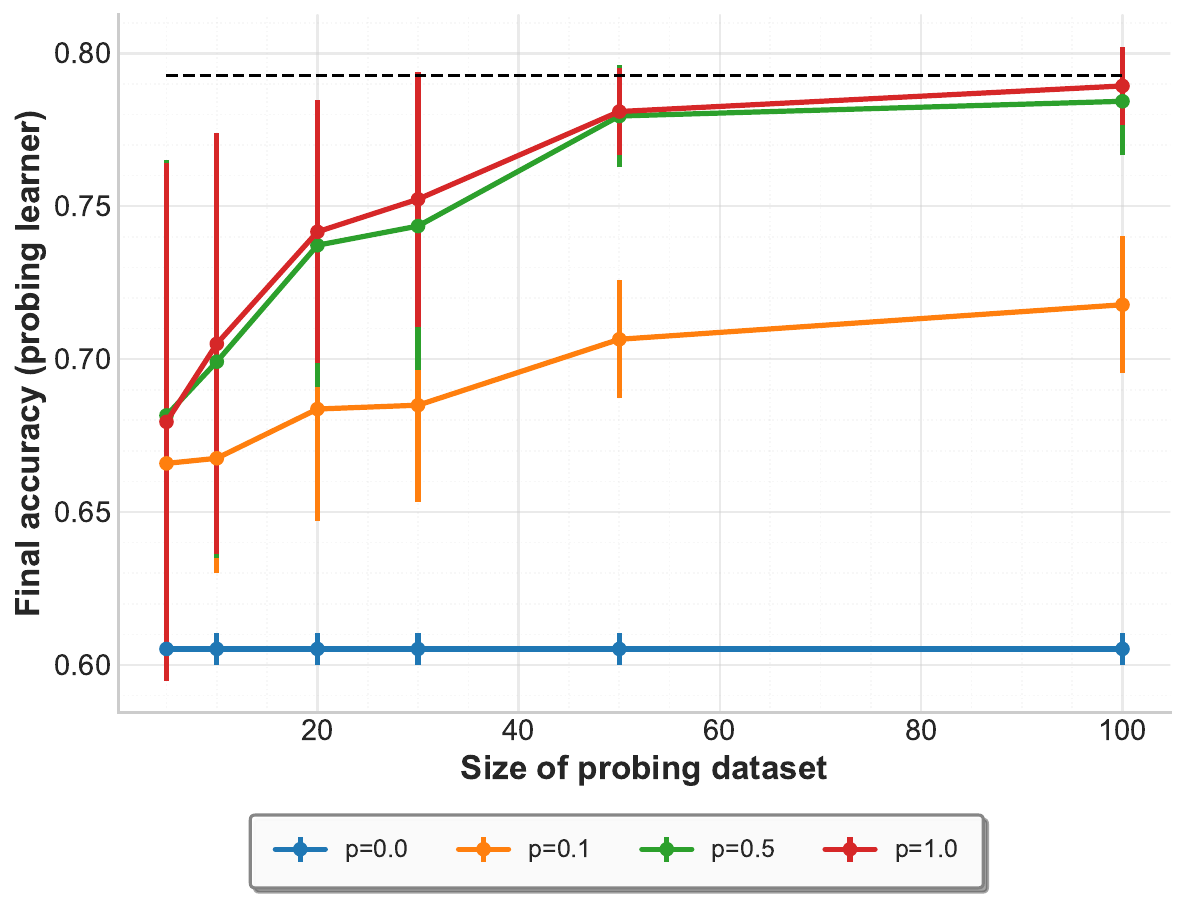}
    \caption{\textbf{Performance of probing learner on Census as a function of $n$.} Error bars show one standard deviation over 10 random seeds.}
    \label{fig:census_probe_vs_n_main}
\end{figure}

\paragraph{Expt 5: Two-layer neural learners.}
\begin{figure}[t]
    \centering
    \includegraphics[width=\textwidth]{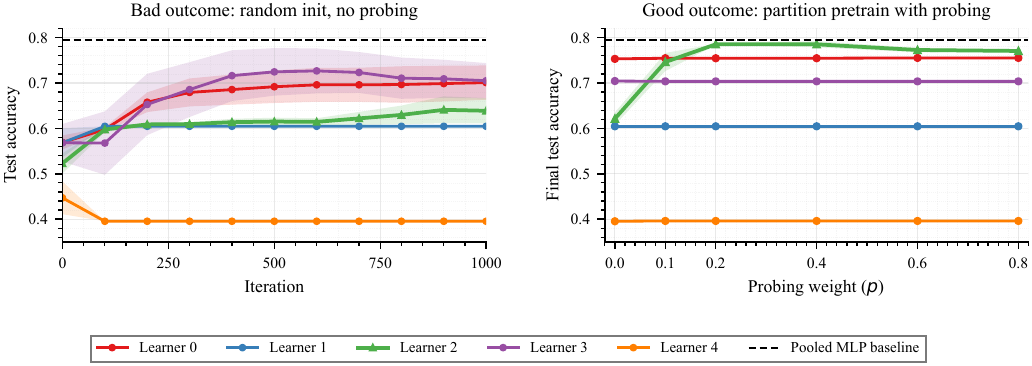}
    \caption{\textbf{Census experiments with two-layer neural learners.}
    We replace each linear logistic learner with a two-layer ReLU MLP
    ($d\!\to\!64\!\to\!1$) trained by SGD with binary cross-entropy, while
    keeping the Census train/test split, K-means preference clustering, induced
    learner rankings, and offline pseudo-label construction fixed. Left:
    random initialization with standard MSGD and no probing ($\tau=0.3,p=0$).
    Right: partition-pretrained initialization with preference-aware probing by
    Learner 2 ($\tau=0.7,\kappa=0$). Shaded bands show standard error over
    three seeds.}
    \label{fig:census_nn}
\end{figure}
Figure~\ref{fig:census_nn} repeats the Census preference-aware experiments with
two-layer ReLU MLP learners. The same qualitative pattern persists: standard
MSGD overspecializes, while probing improves the underperforming learner toward
the pooled MLP baseline.

\section{Discussion}
\label{sec:discussion}

We studied competitive dynamics in machine learning markets where users
choose platforms based on inherent preferences and predictive quality.
We showed that standard learning dynamics in competitive ML markets
converge to overspecialized equilibria,
and that peer model probing provably
mitigates this failure under identifiable informational conditions.
The key insight is that optimizing for observed users creates
information barriers that standard dynamics cannot overcome;
probing breaks these barriers,
with guarantees that degrade gracefully with pseudo-label quality.
Our experiments confirm that even small
probing datasets suffice to close most of the gap.

\paragraph{User Preference Modeling.}
Our experiments simulate user preferences via K-means clustering,
which provides a controlled environment but may not capture
the full complexity of real-world platform choice.
Settings with explicit preference data or richer choice models
(e.g., multinomial logit with heterogeneous coefficients) merit exploration.

\paragraph{Online Probing.}
Our analysis considers offline probing, where pseudo-labels are collected once at initialization;
this mirrors fixed-teacher distillation workflows and avoids querying peers at different stages of adaptation.
Extending to \emph{online probing}, where learners continuously query adapting peers throughout training, introduces co-adaptation dynamics that may lead to instabilities reminiscent of model collapse \citep{shumailov2024model}.
Characterizing when online probing converges—and whether it outperforms offline probing—is an interesting direction for future work.

\paragraph{Convex Losses and Linear Models.}
Our theoretical analysis is restricted to convex losses (squared and cross-entropy)
with linear predictors. The Census MLP experiment in Figure~\ref{fig:census_nn}
suggests that the qualitative overspecialization and probing-recovery
phenomena persist for simple non-convex learners, but extending the
convergence and performance guarantees to deep networks remains an important
open direction.

\section*{Acknowledgements}
Ratliff, Fazel and Narang are supported in part by NSF Award 2312775. Fazel and Narang are supported by NSF TRIPODS II
DMS-2023166. Fazel is also supported by NSF CCF-2212261. Dean is supported in part by NSF CCF 2312774.

\newpage
\bibliography{neurips_2025/neurips_refs}
\bibliographystyle{plainnat}

\appendix

\newpage

\section{Extended Related Work}
\label{sec:extended_related_work}

This section provides detailed comparisons with related work
summarized in Section~1.1 of the main paper.

\paragraph{Single-learner user agency.}
A body of work studies learning dynamics when users are treated as
independent agents rather than passive data points.
\citet{hashimoto2018fairness} show that empirical risk minimization
can cause minority groups to opt out, creating a feedback loop
that further degrades minority performance.
\citet{zhang2019group} study a related dynamic where
underrepresented groups receive worse predictions,
leading to further data scarcity.
\citet{james2023participatory} analyze participatory data collection
where users choose whether to contribute data.
\citet{ben2017best} study best-response dynamics in strategic classification.
\citet{Cherapanamjeri2023-Fisherman} and \citet{Harris2023-mm}
study settings where data quality or availability depends
on the learner's past performance.
We build on this perspective in the multi-learner setting,
where inter-learner interactions create additional feedback dynamics.

\paragraph{Multi-learner competition and user choice.}
Several works study multi-learner user-choice settings with explicitly
strategic users who optimize their own utility functions
\citep{shekhtman2024strategic, ben2017best, ben2019regression,
jagadeesan2023competition, jagadeesan2023improved}.
\citet{ginart2021competing} provide both empirical and theoretical analysis
of how competition drives specialization:
their Theorems 4.1--4.3 establish risk ratio bounds showing
that competing predictors perform worse on the general population
than a single predictor would.
Their analysis considers batch retraining dynamics and characterizes
the \emph{existence} of performance gaps due to competition.
We complement this by analyzing streaming gradient-based dynamics,
proving that such dynamics \emph{converge to} overspecialized equilibria
(Theorem~\ref{thm:bad_outcome_simple}),
and proposing probing as a mitigation.
\citet{kwon2022competition} study a related setting where learners may
purchase user data, providing primarily empirical analysis of
the resulting market dynamics.

\paragraph{Gradient-based dynamics in choice-driven settings.}
Most closely related to our work,
\citet{dean2022multi} and \citet{Su2024-qw}
analyze gradient-based dynamics in choice-driven settings.
\citet{Su2024-qw} introduce the MSGD algorithm and prove convergence to
stationary points of an aggregate loss across learners.
We build directly on their framework:
our Algorithm~\ref{alg:msgd} adapts MSGD to our user selection rule
(Definition~\ref{def:user_selection_rule}),
and our potential function (Equation~\ref{eq:potential_function_msgd})
extends theirs. Our Theorem~\ref{lem:msgd_convergence}
provides an alternate proof of their convergence result using
stochastic approximation techniques,
which we believe better illuminates the underlying dynamics.
Beyond convergence, we analyze generalization to users outside each
learner's observed population---a consideration absent from prior
work---and introduce peer probing as a mechanism to restore
global competence.
\citet{bose2023initializing} propose intelligent initialization schemes
to improve outcomes in similar settings, but also focus on the sum of
all local losses on the observed distribution.

\section{Terminology and Preliminary Results}
\label{app:loss_regularity}

\subsection{Dynamical systems terminology}
\label{sec:app_ict_defs}

This subsection records standard definitions underlying terms such as
``invariant set'' and ``internally chain transitive'' in \Cref{lem:borkar_track}.
Let $\dot{x}(t) = h(x(t))$ be an ODE with locally Lipschitz $h$, so that solutions are unique.
Let $\phi_t(x)$ denote the state at time $t \ge 0$ of the solution initialized at $x$ at time $0$.

\begin{definition}[Invariant set]
A set $A \subseteq \mathbb{R}^d$ is (forward) \emph{invariant} for the ODE if
$\phi_t(x) \in A$ for all $x \in A$ and all $t \ge 0$.
\end{definition}

\begin{definition}[$(\varepsilon,T)$-chain]
Fix $\varepsilon > 0$ and $T > 0$. An \emph{$(\varepsilon,T)$-chain in $A$ from $x$ to $y$}
is a finite sequence of points $x=x_0, x_1,\ldots, x_k=y$ in $A$ and times
$t_1,\ldots,t_k \ge T$ such that
\[
\|\phi_{t_i}(x_{i-1}) - x_i\| < \varepsilon \qquad \text{for all } i \in \{1,\ldots,k\}.
\]
\end{definition}

\begin{definition}[Internally chain transitive]
A compact invariant set $A$ is \emph{internally chain transitive} if for every
$x,y \in A$ and every $\varepsilon > 0$, $T > 0$, there exists an $(\varepsilon,T)$-chain in $A$ from $x$ to $y$.
\end{definition}

\subsection{Regularity of Squared and Cross Entropy Losses}
\begin{lemma}[Regularity of the squared‐error loss]
  \label{lem:squared_loss_regularity}
  Let \(\mathcal X\subset\R^d\), \(\mathcal Y\subset\R\), \(W\subset\R^d\) be compact. Write
  \[
    \ell(x,y;\theta) \;=\;\bigl(y - x^\top\theta\bigr)^2,
    \qquad
    B =\max_{\theta\in W}\|\theta\|.
  \]
  Then for all \((x,y)\in\mathcal X\times\mathcal Y\) and all \(\theta_1,\theta_2\in W\):
  \begin{enumerate}[label=(\roman*)]
    \item \(\ell\) is nonnegative, \(C^\infty\) in \(\theta\), convex, and \(\beta\)–smooth with
      \(\beta = 2R^2\).
    \item \(\ell\) is locally Lipschitz on \(W\):
      \[
        \bigl|\ell(x,y;\theta_1)-\ell(x,y;\theta_2)\bigr|
        \le L_W\,\|\theta_1-\theta_2\|,
        \quad
        L_W = R\bigl(2|y| + 2RB\bigr).
      \]
    \item The Hessian is constant in
    \(\theta\), so
      \(\|\nabla^2\ell(\theta_1)-\nabla^2\ell(\theta_2)\|=0\)
      (i.e.\ \(\gamma_\ell=0\)).
      \end{enumerate}
  \end{lemma}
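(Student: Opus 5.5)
The plan is to compute everything directly from the explicit form of the loss. Fix $(x,y)\in\mathcal X\times\mathcal Y$ and write $r(\theta)=y-x^\top\theta$. Then $\ell(x,y;\theta)=r(\theta)^2\ge 0$, and $\ell$ is a polynomial in $\theta$, hence $C^\infty$. Differentiating gives
\[
\nabla_\theta\ell(x,y;\theta)=-2\,r(\theta)\,x,\qquad \nabla_\theta^2\ell(x,y;\theta)=2\,xx^\top .
\]

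For (i), the Hessian $2xx^\top$ is positive semidefinite, so $\ell$ is convex. Its operator norm is $2\|x\|^2\le 2R^2$, using $\|x\|\le R$ from Assumption~\ref{ass:bounded_support}. For a $C^2$ function, a uniform Hessian bound gives $\|\nabla\ell(\theta_1)-\nabla\ell(\theta_2)\|\le 2R^2\|\theta_1-\theta_2\|$. One can also see this directly: $\nabla\ell(\theta_1)-\nabla\ell(\theta_2)=2xx^\top(\theta_1-\theta_2)$. This gives $\beta=2R^2$.

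For (iii), the Hessian $2xx^\top$ does not depend on $\theta$. Hence the difference of Hessians at any two points is zero, and $\gamma_\ell=0$.

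For (ii), apply the mean value theorem along the segment $\theta_s=\theta_2+s(\theta_1-\theta_2)$, $s\in[0,1]$:
\[
|\ell(\theta_1)-\ell(\theta_2)|\le \sup_{s}\|\nabla\ell(\theta_s)\|\,\|\theta_1-\theta_2\|.
\]
Along the segment, $\|\nabla\ell(\theta_s)\|=2|y-x^\top\theta_s|\,\|x\|\le 2R(|y|+R\|\theta_s\|)$. The only point needing care is that the segment stays in the set where $\|\theta\|\le B$. Convexity of $W$ is not assumed, but the norm bound still holds: $\|\theta_s\|\le (1-s)\|\theta_2\|+s\|\theta_1\|\le B$ by the triangle inequality. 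Thus $\|\nabla\ell(\theta_s)\|\le R(2|y|+2RB)=L_W$, which is the stated constant.

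This last step, controlling the norm along the segment without assuming $W$ convex, is the only mildly delicate point. It is handled by the norm bound alone, so no real obstacle remains.
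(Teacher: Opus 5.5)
Your proof is correct and follows the paper's argument. Parts (i) and (iii) use the same gradient/Hessian computation. For (ii), the paper uses the difference-of-squares identity $\ell(\theta_1)-\ell(\theta_2)=x^\top(\theta_2-\theta_1)\bigl(2y-x^\top(\theta_1+\theta_2)\bigr)$, which is exactly your mean value step with the intermediate point taken to be the midpoint $(\theta_1+\theta_2)/2$; this is why the paper needs no separate control along the segment, while your bound $\|\theta_s\|\le B$ handles that point correctly and gives the same constant $L_W$.
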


  \begin{proof}
  \begin{enumerate}[label=(\roman*)]
    \item[(i)]  One computes
    \(\nabla_\theta\ell = -2\,x\bigl(y - x^\top\theta\bigr)\)
    and
    \(\nabla^2_\theta\ell = 2\,x\,x^\top\succeq0\).
    Hence \(\ell\ge0\), is \(C^\infty\), convex, and
    \(\|\nabla\ell(\theta_1)-\nabla\ell(\theta_2)\|\le2R^2\|\theta_1-\theta_2\|\).

    \item[(ii)]  Observe
    \[
      \ell(\theta_1)-\ell(\theta_2)
      = \bigl(y - x^\top\theta_1\bigr)^2 - \bigl(y - x^\top\theta_2\bigr)^2
      = x^\top(\theta_2-\theta_1)\;\bigl(2y - x^\top(\theta_1+\theta_2)\bigr).
    \]
    Since \(\|\theta_i\|\le B\),
    \(\lvert x^\top(\theta_2-\theta_1)\rvert\le R\|\theta_1-\theta_2\|\)
    and
    \(\bigl|2y - x^\top(\theta_1+\theta_2)\bigr|\le2|y|+2RB\),
    giving the stated bound.

    \item[(iii)]  Because \(\nabla^2_\theta\ell\equiv2\,x\,x^\top\) is independent of \(\theta\), its difference vanishes.
  \end{enumerate}
  \end{proof}

\begin{lemma}
  \label{lem:loss_bound}
  Under Assumption~\ref{ass:bounded_support}, for all $z\in\mathcal Z$ the loss $\ell(z,\cdot)$ is non-negative, convex, differentiable, locally Lipschitz and $\beta_\ell$-smooth with
  \[  \beta_\ell = 2R^2 \quad\text{for both the squared loss and cross-entropy loss.}\]
  Moreover, let
  \[
    \fNP(\theta) = \E_{z\sim\mathcal P}[\ell(z,\theta)],
    \quad
    \theta^* = \arg\min_\theta \fNP(\theta),
    \quad
    \epsilon = \fNP(\theta^*).
  \]
  Then for any $\theta$ with $\|\theta - \theta^*\| \le \gamma$, the following quadratic upper bound holds:
  \[
    \fNP(\theta) \;\le\; \epsilon + \tfrac{\beta_\ell}{2}\,\|\theta - \theta^*\|^2
    \;\le\; \epsilon + R^2\,\gamma^2.
  \]
\end{lemma}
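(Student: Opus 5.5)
The first part of the statement (non-negativity, convexity, differentiability, local Lipschitzness, $\beta_\ell$-smoothness with $\beta_\ell = 2R^2$) I would split by loss. For the squared loss everything is already in Lemma~\ref{lem:squared_loss_regularity}, applied on any compact parameter set $W$ together with the support bounds $\|x\|\le R$ and $|y|\le \Ymax$ from Assumption~\ref{ass:bounded_support}. For the softmax cross-entropy I would write $\ell_{\mathrm{CE}}(x,y;\theta) = -\sum_c y_c\, x^\top\theta_c + \logsumexp_c(x^\top\theta_c)$, using $\sum_c y_c = 1$. This is an affine function plus a log-sum-exp composed with the linear map $\theta\mapsto (x^\top\theta_c)_c$, so it is non-negative (it is minus the log of a probability), convex, and $C^\infty$. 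Local Lipschitzness then follows from a continuous gradient on compact sets; in fact the gradient $x(\softmax - y)^\top$ is globally bounded by $\sqrt{2}R$.

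Smoothness is the only computation for cross-entropy. The Hessian is $(\diag(s)-ss^\top)\otimes xx^\top$ with $s=\softmax(x^\top\theta)$. Since $\diag(s)-ss^\top \preceq \diag(s)\preceq I$ (indeed its norm is at most $1/2$), the operator norm is at most $\|x\|^2\le R^2\le 2R^2$. This gives a common constant $\beta_\ell = 2R^2$ for both losses.

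For the quadratic upper bound, I would first pass smoothness to $\fNP$. Differentiating under the expectation is justified by bounded support and a locally bounded gradient, so $\nabla\fNP = \E[\nabla\ell]$ is $\beta_\ell$-Lipschitz by Jensen. I then expect a subtlety about $\theta^*$: the stationarity $\nabla\fNP(\theta^*)=0$ needs the minimizer to exist, which I would simply take as given since $\theta^*$ is defined as the argmin. With that in hand, the descent lemma gives
\[
\fNP(\theta)\le \fNP(\theta^*)+\langle\nabla\fNP(\theta^*),\theta-\theta^*\rangle+\tfrac{\beta_\ell}{2}\|\theta-\theta^*\|^2 = \epsilon+\tfrac{\beta_\ell}{2}\|\theta-\theta^*\|^2 .
\]
Substituting $\beta_\ell=2R^2$ and $\|\theta-\theta^*\|\le\gamma$ yields $\epsilon+R^2\gamma^2$.
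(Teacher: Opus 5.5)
Your proposal is correct and follows the paper's proof. The paper cites Lemma~\ref{lem:crossent_regularity} for the cross-entropy Hessian bound $\|(\diag(s)-ss^\top)\otimes xx^\top\|\le R^2\le 2R^2$, which you re-derive directly; otherwise the steps are the same: pass $\beta_\ell$-smoothness to $\mathcal{R}$ via $\nabla\mathcal{R}=\E[\nabla\ell]$, use $\nabla\mathcal{R}(\theta^*)=0$, and apply the descent lemma. Your remark that the existence of $\theta^*$ must be taken as given also matches the paper, which likewise assumes a minimizer exists (this matters for cross-entropy).
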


\begin{proof}
The first part (non-negativity, convexity, differentiability, local Lipschitzness, and smoothness) follows directly by applying Lemmas~\ref{lem:squared_loss_regularity} and~\ref{lem:crossent_regularity} under Assumption~\ref{ass:bounded_support}, which together show each $\ell(z,\theta)$ is $\beta_\ell$-smooth with $\beta_\ell=2R^2$.

Since $f(\theta)=\E_{z}[\ell(z,\theta)]$, differentiability and smoothness carry over to $f$, and we have
\[
\|\nabla f(\theta) - \nabla f(\theta^*)\| \le \beta_\ell\,\|\theta - \theta^*\|.
\]
At the minimizer $\theta^*$, $\nabla f(\theta^*)=0$.  Hence for any $\theta$ with $\|\theta-\theta^*\|\le\gamma$, the gradient satisfies
\[
\|\nabla f(\theta)\| \le \beta_\ell\,\|\theta-\theta^*\| \le \beta_\ell\,\gamma.
\]

Applying the standard smoothness inequality (second-order Taylor upper bound) around $\theta^*$,
\[
f(\theta) \le f(\theta^*) + \langle\nabla f(\theta^*),\,\theta-\theta^*\rangle + \tfrac{\beta_\ell}{2}\,\|\theta-\theta^*\|^2
= \epsilon + \tfrac{\beta_\ell}{2}\,\|\theta-\theta^*\|^2,
\]
where we used $\nabla f(\theta^*)=0$ and $f(\theta^*)=\epsilon$.  Finally, substituting $\beta_\ell=2R^2$ gives
\[
f(\theta) \le \epsilon + R^2\,\|\theta-\theta^*\|^2 \le \epsilon + R^2\,\gamma^2,
\]
completing the proof.
\end{proof}

  \begin{lemma}[Regularity of the multiclass cross‐entropy loss]
    \label{lem:crossent_regularity}
    Let $\mathcal X\subset\R^d$, $\mathcal Y=\{1,\dots,K\}$, $W\subset\R^{K\times d}$ compact, and let $\|x\|\le R$ for all $x\in\mathcal X$. Define for $(x,y)\in\mc{X}\times\mc{Y}$ and $\Theta=(\theta_1,\dots,\theta_K)\in W$
    \[
      \ell(x,y;\Theta)
      \;=\;
      -\log\frac{\exp(\theta_y^\top x)}
                 {\sum_{k=1}^K \exp(\theta_k^\top x)}.
    \]
    Then for all $x,y,\Theta_1,\Theta_2$ as above:
    \begin{enumerate}[label=(\roman*)]
      \item $\ell$ is nonnegative, $C^\infty$ in $\Theta$, convex, and $\beta$–smooth with
        $
          \beta \;=\; R^2.
        $
      \item $\ell$ is (locally) Lipschitz on $W$:
        \[
          \bigl|\ell(x,y;\Theta_1)-\ell(x,y;\Theta_2)\bigr|
          \;\le\;L\,\|\Theta_1-\Theta_2\|,
          \qquad
          L = \sqrt{2}\,R.
        \]
    \end{enumerate}
    \end{lemma}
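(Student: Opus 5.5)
We would treat the loss as a composition of a scalar convex function with a linear map. Set $s = \Theta x \in \R^K$, where $s_k = \theta_k^\top x$, and view $A_x : \Theta \mapsto \Theta x$ as a linear map from $\R^{K\times d}$ (with the Frobenius norm) to $\R^K$. Its operator norm is $\|x\| \le R$, because $\|\Theta x\|_2 \le \|\Theta\|_F \|x\|$. Then
\[
\ell(x,y;\Theta) = g_y(A_x\Theta), \qquad g_y(s) = \logsumexp(s) - s_y .
\]
All claims reduce to properties of $g_y$ together with this linear map.

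For (i), we proceed in three steps.
\begin{itemize}
\item \textbf{Nonnegativity.} $e^{s_y} \le \sum_k e^{s_k}$, so the softmax probability $p_y = \softmax(s)_y \le 1$, and hence $\ell = -\log p_y \ge 0$.
\item \textbf{Smoothness and convexity.} $g_y$ is $C^\infty$ as a composition of $\exp$, a finite sum and $\log$ of a positive quantity. It is convex because $\logsumexp$ is convex and $-s_y$ is linear. Precomposition with the linear map $A_x$ preserves both properties.
\item \textbf{Smoothness constant.} We compute $\nabla_s g_y = p - e_y$ and $\nabla_s^2 g_y = \diag(p) - pp^\top$. By the chain rule,
\[
\nabla_\Theta \ell = (p - e_y)\,x^\top, \qquad
\nabla^2_\Theta \ell = \bigl(\diag(p)-pp^\top\bigr)\otimes xx^\top .
\]
The operator norm of a Kronecker product is the product of the operator norms, so $\|\nabla^2_\Theta\ell\| \le \|\diag(p)-pp^\top\|\,R^2$. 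The matrix $\diag(p)-pp^\top$ is positive semidefinite. Gershgorin's theorem bounds its eigenvalues by the row sums $p_k(1-p_k) + p_k\sum_{j\ne k}p_j = 2p_k(1-p_k) \le \tfrac12$. Hence $\|\nabla^2_\Theta\ell\| \le R^2$ uniformly in $\Theta$, and integrating the Hessian along the segment between $\Theta_1$ and $\Theta_2$ gives $\beta$-smoothness with $\beta = R^2$.
\end{itemize}

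For (ii), we bound the gradient uniformly and apply the mean value theorem. Since $\sum_{k\ne y} p_k^2 \le \bigl(\sum_{k\ne y}p_k\bigr)^2 = (1-p_y)^2$, we get
\[
\|p-e_y\|^2 = (1-p_y)^2 + \sum_{k\ne y}p_k^2 \le 2(1-p_y)^2 \le 2 .
\]
For a rank-one matrix $\|(p-e_y)x^\top\|_F = \|p-e_y\|\,\|x\| \le \sqrt2\,R$. The mean value theorem along the segment from $\Theta_1$ to $\Theta_2$ then yields the Lipschitz bound with $L=\sqrt2 R$. This bound in fact holds globally, not only on $W$.

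The only step needing care is the Hessian bound in (i). It requires identifying the Kronecker structure correctly under the Frobenius inner product and bounding the spectrum of $\diag(p)-pp^\top$. The remaining steps are routine calculus.
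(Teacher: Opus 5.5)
Your proposal is correct and follows essentially the same route as the paper. Both compute $\nabla_\Theta\ell=(p-e_y)x^\top$ and the Hessian $(\diag(p)-pp^\top)\otimes xx^\top$, bound the Hessian norm by $R^2\lambda_{\max}(\diag(p)-pp^\top)$, and obtain $L=\sqrt2\,R$ from $\|p-e_y\|\,\|x\|\le\sqrt2 R$ together with the mean value theorem. Your version also supplies steps the paper only asserts: nonnegativity via $p_y\le 1$ (the paper attributes it to positive semidefiniteness, which does not imply it), and the Gershgorin bound $\lambda_{\max}\le\tfrac12$, which in fact gives the sharper constant $R^2/2$.
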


    \begin{proof}
    The proofs of each subpart are as follows:
    \begin{enumerate}[label=(\roman*)]
      \item[\textit{(i)}]
      Let $p = \softmax(\Theta\,x)\in\Delta^{K-1}$.  One checks
      \[
        \nabla_\Theta\ell
        = (p - e_y)\,x^\top,
        \qquad
        \nabla^2_\Theta\ell
        = \bigl[\diag(p)-p\,p^\top\bigr]\;\otimes\;\bigl(x\,x^\top\bigr).
      \]
      Since $\diag(p)-p\,p^\top\succeq0$
      $\ell\ge0$, is $C^\infty$ and convex, and
      \[
        \|\nabla^2_\Theta\ell\|
        \;\le\;
        \|x\|^2\;\lambda_{\max}\bigl(\diag(p)-p\,p^\top\bigr)
        \;\le\;R^2,
      \]
      it follows that $\ell$ is $\beta$–smooth with $\beta=R^2$.

      \item[\textit{(ii)}]
      By the mean‐value theorem,
      \[
        \bigl|\ell(\Theta_1)-\ell(\Theta_2)\bigr|
        \;\le\;
        \sup_{\Theta\in W}\|\nabla_\Theta\ell\|\;\|\Theta_1-\Theta_2\|.
      \]
      But
      \[
        \|\nabla_\Theta\ell\|
        = \|p - e_y\|\;\|x\|
        \;\le\;\sqrt{2}\,R,
      \]
      giving the claimed Lipschitz constant $L=\sqrt2\,R$.
    \end{enumerate}
    \end{proof}

\begin{restatable}{lemma}{LemSoftmaxLipschitz}
  \label{lem:softmax_lipschitz}
  Let $\{h_\theta\}_{\theta\in\Theta}$ be the family of softmax classifiers mapping $x\in\R^d$ to a distribution over $C$ classes,
  \[  h_\theta(x)_c = \frac{e^{x^\top\theta_c}}{\sum_{j=1}^C e^{x^\top\theta_j}}.  \]
  Assume:
  \begin{enumerate}[label=(\roman*)]
    \item $\|x\|_2 \le R$ for all $x$ in the support of $\mc P$.
    \item There exists $\alpha>0$ such that for all $x,\theta,c$, we have $h_\theta(x)_c\ge\alpha$.
    \item $\theta,\theta^*$ lie in a convex set $\Theta\subset\R^{C\times d}$.
  \end{enumerate}
  Then the mapping $\theta\mapsto h_\theta(x)$ is Lipschitz in the $\ell_1$–norm: for all $\theta,\theta^*\in\Theta$,
  \[
    \|h_\theta(x)-h_{\theta^*}(x)\|_1
    \;\le\;
    L_h\,\|\theta-\theta^*\|_2,
    \quad
    L_h = R\sqrt{2\,(1-\alpha)}.
  \]
  \end{restatable}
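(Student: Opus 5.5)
The plan is to bound the $\ell_1$ distance through the Jacobian of the softmax map and then integrate along the segment joining $\theta^*$ to $\theta$. Convexity of $\Theta$ (assumption (iii)) guarantees that this segment stays inside $\Theta$, so assumption (ii) holds at every point on it.

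Set $\theta(s)=\theta^*+s(\theta-\theta^*)$ for $s\in[0,1]$, and write $p(s)=h_{\theta(s)}(x)$. By the fundamental theorem of calculus,
\[
h_\theta(x)-h_{\theta^*}(x)=\int_0^1 \tfrac{d}{ds}p(s)\,ds .
\]
The derivative of softmax with respect to the logits $u=\Theta x$ is $J(p)=\diag(p)-pp^\top$. The derivative of the logits along the segment is $\Delta x$, where $\Delta=\theta-\theta^*\in\R^{C\times d}$. Hence $\tfrac{d}{ds}p(s)=J(p(s))\,\Delta x$, and it suffices to show
\[
\|J(p)\,v\|_1\le \sqrt{2(1-\alpha)}\,\|v\|_2 \qquad\text{for } v=\Delta x,
\]
together with $\|\Delta x\|_2\le\|\Delta\|_{\mathrm{op}}\|x\|\le R\|\Delta\|_F$. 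Taking $\|\theta-\theta^*\|_2$ to mean the Frobenius norm, these combine to give the stated constant $L_h$.

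The main obstacle is the matrix-norm estimate. Write $(Jv)_c=p_c(v_c-\bar v)$, where $\bar v=\sum_k p_k v_k$. Then
\[
\|Jv\|_1=\sum_c p_c\,|v_c-\bar v| .
\]
By Cauchy--Schwarz with weights $p$,
\[
\|Jv\|_1\le \sqrt{\textstyle\sum_c p_c (v_c-\bar v)^2}=\sqrt{\Var_p(v)} .
\]
It remains to bound $\Var_p(v)\le 2(1-\alpha)\|v\|_2^2$, or more directly $\le (1-\min_c p_c)\|v\|_2^2$. For the latter, $\Var_p(v)=\min_a\sum_c p_c(v_c-a)^2\le\sum_c p_c v_c^2\le \max_c p_c\|v\|_2^2$. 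Since every $p_c\ge\alpha$, we have $\max_c p_c\le 1-(C-1)\alpha\le 1-\alpha$ (for $C\ge2$). This gives $\|Jv\|_1\le\sqrt{1-\alpha}\,\|v\|_2$, which is even stronger than needed, so the factor $2$ provides slack. Alternatively, the factor $\sqrt2$ can be obtained as in Lemma~\ref{lem:crossent_regularity} via $\|p-e_y\|$-type bounds.

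Finally, integrate: $\|h_\theta(x)-h_{\theta^*}(x)\|_1\le\int_0^1\sqrt{1-\alpha}\,R\|\Delta\|\,ds\le R\sqrt{2(1-\alpha)}\,\|\theta-\theta^*\|_2$.
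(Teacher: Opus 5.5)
Your proof is correct, but it takes a different route to the key Jacobian estimate than the paper does.

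The paper works with the Jacobian of $h$ with respect to all $Cd$ parameters at once. It writes out $\nabla_{\theta_k} h_c = h_c(\mathbf{1}\{c=k\}-h_k)x$ and bounds each row by
\[
\|\nabla_\theta h_c\|_2 \le R\,h_c\sqrt{(1-h_c)^2+\sum_{k\neq c}h_k^2} \le R\,h_c\sqrt{2(1-\alpha)}.
\]
It then sums the rows, using $\sum_c h_c=1$, to control the $2\to1$ operator norm.

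You instead factor the map through the logits. This separates the linear map $\Delta\mapsto\Delta x$, whose norm is at most $R$, from the softmax Jacobian $\diag(p)-pp^\top$. You bound the latter's $2\to1$ norm with the weighted-variance argument
\[
\|Jv\|_1=\sum_c p_c|v_c-\bar v|\le\sqrt{\Var_p(v)}\le\sqrt{\max_c p_c}\,\|v\|_2 .
\]

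What your route buys:
\begin{itemize}
\item A sharper constant, $R\sqrt{1-(C-1)\alpha}\le R\sqrt{1-\alpha}$, so the factor $2$ in $L_h$ is pure slack for you. In the paper that factor comes from the relaxation $(1-h_c)(2-h_c)\le 2(1-h_c)$.
\item A cleaner passage from the Jacobian bound to the difference. Integrating along the segment (which convexity of $\Theta$ keeps inside the region where (ii) applies) is more careful than the paper's appeal to a multivariate mean-value theorem in equality form. That equality does not hold for vector-valued maps, since no single intermediate $\bar\theta$ need exist. The paper's argument is rescued only because its Jacobian bound is uniform along the segment, which is exactly what your integral formulation uses.
\end{itemize}
Your restriction to $C\ge 2$ is harmless, since for $C=1$ the softmax Jacobian vanishes identically.

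What the paper's route buys is a more hands-on computation in the explicit-gradient style of its other regularity lemmas, without introducing the logit factorization or the variance identity.
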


  \begin{proof}
  By the multivariate mean‐value theorem, there exists $\bar\theta$ on the line segment between $\theta^*$ and $\theta$ such that
  \[
    h_\theta(x)-h_{\theta^*}(x)
    = D_\theta h(\bar\theta;x)\;\bigl(\theta-\theta^*\bigr),
  \]
  where $D_\theta h(\bar\theta;x)\in\R^{C\times(Cd)}$ is the Jacobian w.r.t. all parameters.  Write its $c$th row as the gradient vector $\nabla_\theta h_c(\bar\theta;x)$.  A direct calculation shows for each class $c$ and each block $k$:
  \[
    \nabla_{\theta_k}\,h_c(\bar\theta;x)
    = h_c\bigl(\mathbf{1}\{c=k\}-h_k\bigr)\,x
    \quad\Longrightarrow\quad
    \|\nabla_\theta h_c(\bar\theta;x)\|_2
    \le
    \|x\|_2\;h_c\;\sqrt{(1-h_c)^2 + \sum_{k\neq c}h_k^2}.
  \]
  Using $\sum_kh_k=1$, $h_k\le 1$ for each $k$, and the lower‐bound $h_c\ge\alpha$, one checks
  \[
    (1-h_c)^2 + \sum_{k\neq c}h_k^2
    \le (1-h_c)^2 + (1-h_c) = (1-h_c)(2-h_c) \le 2\,(1-h_c) \le 2\,(1-\alpha),
  \]
  where the first inequality uses $\sum_{k\neq c}h_k^2 \le \sum_{k\neq c}h_k = 1-h_c$.
  Hence
  \[
    \|\nabla_\theta h_c(\bar\theta;x)\|_2 \le R\,h_c\,\sqrt{2\,(1-\alpha)}.
  \]
  Therefore the operator norm from $\ell_2$ to $\ell_1$ satisfies
  \[
    \|D_\theta h(\bar\theta;x)\|_{2\to1}
    \;=\;
    \sum_{c=1}^C \|\nabla_\theta h_c(\bar\theta;x)\|_2
    \;\le\;
    R\sqrt{2(1-\alpha)}\sum_{c=1}^C h_c
    = R\sqrt{2(1-\alpha)}.
  \]
  Putting these together gives
  \[
    \|h_\theta(x)-h_{\theta^*}(x)\|_1
    \le
    \|D_\theta h(\bar\theta;x)\|_{2\to1}\,\|\theta-\theta^*\|_2
    \le L_h\,\|\theta-\theta^*\|_2,
  \]
  as claimed.
  \end{proof}

    \begin{lemma}
    \label{lem:loss_regularity}
    Under Assumption~\ref{ass:bounded_support}, for all $z \in \mathcal{Z}$, the loss $\ell(z, \cdot)$ is non-negative, convex, differentiable, locally Lipschitz, and $\beta_\ell$-smooth.
    \end{lemma}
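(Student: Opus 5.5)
The plan is to treat the two losses separately and reduce each of the five properties to Lemmas~\ref{lem:squared_loss_regularity} and~\ref{lem:crossent_regularity}. The work is in checking that the hypotheses of those lemmas are met under Assumption~\ref{ass:bounded_support}, and that the constants do not depend on $z$.

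\textbf{Squared loss.}
\begin{itemize}
\item Assumption~\ref{ass:bounded_support} gives $\|x\| \le R$ and $|y| \le \Ymax$ on the support. So the hypothesis of Lemma~\ref{lem:squared_loss_regularity} holds with $\mathcal{X} = \{\|x\| \le R\}$ and $\mathcal{Y} = [-\Ymax, \Ymax]$.
\item Nonnegativity, convexity and differentiability follow from $\nabla^2_\theta \ell = 2xx^\top \succeq 0$.
\item Smoothness holds with constant $\beta_\ell = 2R^2$, uniformly in $z$.
\item For local Lipschitzness, fix any compact $W$ in parameter space with $B = \max_{\theta \in W}\|\theta\|$. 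Part (ii) of the lemma gives the constant $R(2\Ymax + 2RB)$, which is again uniform over $z$ in the support.
\end{itemize}

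\textbf{Cross-entropy loss.}
\begin{itemize}
\item Nonnegativity holds because $\ell = -\log h_\theta(x)_y$ and $h_\theta(x)_y \in (0,1]$.
\item Write $p = \softmax(\theta x)$. Convexity follows from $\nabla^2_\theta \ell = (\diag(p) - pp^\top) \otimes xx^\top \succeq 0$.
\item Smoothness needs the bound $\lambda_{\max}(\diag(p) - pp^\top) \le 1$. Combined with $\|x\| \le R$, this gives $\beta = R^2 \le 2R^2$. We state $\beta_\ell = 2R^2$ for both losses, which is a valid upper bound.
\item Lipschitzness comes from $\|\nabla_\theta \ell\| = \|p - e_y\|\,\|x\| \le \sqrt{2}R$. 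This bound is global, so it is stronger than the local statement we need.
\end{itemize}

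The only point needing care is that ``locally Lipschitz'' for the squared loss is genuinely local: its Lipschitz constant grows with $\|\theta\|$. For the eventual stochastic-approximation use, that is enough, because Assumption~\ref{ass:bounded_iterates} keeps the iterates in a bounded set.

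I expect no real obstacle here. The argument is direct bookkeeping, and the mildest subtlety is the uniformity of the constants over $z$ in the support.
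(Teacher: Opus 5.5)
Your proposal is correct and follows the paper's proof. Both invoke Lemmas~\ref{lem:squared_loss_regularity} and~\ref{lem:crossent_regularity} under the bounds $\|x\|\le R$ (and $|y|\le \Ymax$) from Assumption~\ref{ass:bounded_support}, and take $\beta_\ell=\max\{2R^2,R^2\}=2R^2$ as a common smoothness constant; your remarks on the uniformity of the constants over $z$ just spell out what the paper leaves implicit (one cosmetic point: nonnegativity of the squared loss comes from its being a square, not from the Hessian being positive semidefinite).
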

    \begin{proof}
      The result follows by applying
      Lemmas \ref{lem:squared_loss_regularity} and \ref{lem:crossent_regularity}
      with $|x|\le R$ and choosing
      $\beta_\ell = \max{2R^2, R^2} = 2R^2$.
      Each loss is nonnegative, convex, differentiable, locally Lipschitz,
      and smooth under these bounds.
    \end{proof}

  \section{Bad Outcome}
  \label{sec:app_bad_outcome}

    \paragraph{Notation}
    Recall the potential function from the main text:
    \begin{equation}
      \label{eq:potential_function_msgd_app}
      f(\Theta) = \sum_{i=1}^m \left(
      \tau \alpha_i \E_{z \sim \mathcal{P}_i}[\ell(z, \theta_i)]
      + (1 - \tau) a_i(\Theta) \E_{z \sim \mathcal{D}_i(\Theta)}[\ell(z, \theta_i)]\right)
    \end{equation}
    From Lemma~\ref{lem:msgd_idealized_convergence}, we have that the
    MSGD iterates converge almost surely to the set
    $\{\nabla f(\Theta) = 0\}$.
    Additionally, for learner $i \in [m]$, define $\bar{\theta}_i$ as the solution
    that each learner would choose if they trained independently on
    the subpopulation of users who rank them highest:
    \begin{equation}
        \label{eq:bar_theta_app}
    \bar{\theta}_i
    = \arg\min_{\theta_i} \E_{x \sim \mc{P}_i} \ell(x, \theta_i)
    \end{equation}

      \begin{example}
        \label{ex:bad_outcome_1d}
        Specify the family of instances $\mathcal{G}_\text{bad}(\tau, C)$ as follows.

      \begin{enumerate}
      \item Distribution $\mathcal{P}$: is defined as a mixture of subpopulations.
      $\mathcal{P} = \alpha\mathcal{P}_1 + (1-\alpha)\mathcal{P}_2$. Here,
      $\{\mathcal{P}_1, \mathcal{P}_2\}$ are $2$ subpopulations.
      For either subpopulation, the covariates are generated from the
      zero mean and unit-variance uniform distribution:
      \begin{align}
        x \sim \text{Unif}([-\sqrt{3}, \sqrt{3}]) \quad \text{for} \quad (x,y) \sim \mathcal{P}_i,
      \end{align}
      For each subpopulation, the response variable is generated as:
      \begin{align}
          y &= C x \quad \text{for} \quad (x,y) \sim \mathcal{P}_1 \\
          y &= - x \quad \text{for} \quad (x,y) \sim \mathcal{P}_2
      \end{align}
      \item Loss function: $\ell(x, y, \theta) = (y - \theta^Tx)^2$ is the squared loss
      \item Ranking $\pi(z) = i$ for
      $(x,y) \sim \mathcal{P}_i$.
      \end{enumerate}
      \end{example}

    \begin{lemma}
    \label{lem:specialists_vs_compromise_1d}
    Consider the 1‑D bad‑outcome family in Example~\ref{ex:bad_outcome_1d} with
    mixture weight $\alpha\in(0,1)$ and slope parameter $C>1$,
    and let
    $\Risk(\theta)=\E_{(x,y)\sim\Pop}[(y-\theta x)^2]$.

    \begin{enumerate}[label=(\roman*)]
        \item The least‑squares predictor on the full mixture,
              $\thetastar=\alpha C-(1-\alpha)$, satisfies
              \[
                  \Risk(\thetastar)=\alpha(1-\alpha)(C+1)^2.
              \]

        \item The specialist trained on $\PopOne$ is
              $\thetabar1 = C$, and its mixture risk is
              \[
                  \Risk(\thetabar1)
                  \,=\, (1-\alpha)(C+1)^2.
              \]

        \item The specialist trained on $\PopTwo$ is
              $\thetabar2 = -1$, and its mixture risk is
              \[
                  \Risk(\thetabar2)
                  \,=\, \alpha\,(C+1)^2.\qedhere
              \]
    \end{enumerate}
    \end{lemma}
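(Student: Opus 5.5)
The computation reduces to the second moment of the covariate and two residual slopes. Under either subpopulation, $x\sim\mathrm{Unif}([-\sqrt3,\sqrt3])$, so $\E[x]=0$ and $\E[x^2]=1$. Under $\PopOne$ the label is $y=Cx$; under $\PopTwo$ it is $y=-x$. For any scalar $\theta$ this gives
\[
\E_{\PopOne}[(y-\theta x)^2]=(C-\theta)^2\,\E[x^2]=(C-\theta)^2,
\qquad
\E_{\PopTwo}[(y-\theta x)^2]=(1+\theta)^2 .
\]
Hence the mixture risk has the closed form
\[
\Risk(\theta)=\alpha(C-\theta)^2+(1-\alpha)(1+\theta)^2 .
\]
All three parts follow by evaluating this quadratic.

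For (i), $\Risk$ is a strictly convex quadratic. Setting its derivative to zero gives
\[
-2\alpha(C-\theta)+2(1-\alpha)(1+\theta)=0,
\qquad\text{so}\qquad
\theta=\alpha C-(1-\alpha)=\thetastar .
\]
At this point $C-\thetastar=(1-\alpha)(C+1)$ and $1+\thetastar=\alpha(C+1)$. Substituting,
\[
\Risk(\thetastar)=\alpha(1-\alpha)^2(C+1)^2+(1-\alpha)\alpha^2(C+1)^2=\alpha(1-\alpha)(C+1)^2 .
\]

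For (ii), the risk on $\PopOne$ alone is $(C-\theta)^2$. It is uniquely minimized at $\thetabar1=C$, where it is zero. Plugging $\theta=C$ into the mixture risk leaves only the second term, $(1-\alpha)(1+C)^2$.

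For (iii), the risk on $\PopTwo$ alone is $(1+\theta)^2$, minimized at $\thetabar2=-1$. The mixture risk there is $\alpha(C+1)^2$.

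There is no real obstacle; the only step requiring care is factoring $(C+1)$ out of the residuals at $\thetastar$. The condition $C>1$ is not needed for these identities. It matters only later, in Theorem~\ref{thm:bad_outcome_simple}, where one takes $C$ large and $\alpha$ small to make $\Risk(\thetastar)\le\epsilon$ while $\Risk(\thetabar1)\ge\Gamma$.
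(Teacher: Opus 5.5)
Your proposal is correct and follows essentially the same route as the paper. Both use $\mathbb{E}[x^2]=1$ to reduce each subpopulation's risk to $(\beta-\theta)^2$, where $\beta\in\{C,-1\}$ is the true slope, and then evaluate the mixture risk $\alpha(C-\theta)^2+(1-\alpha)(1+\theta)^2$ at $\theta^\star$, $C$, and $-1$. The only cosmetic difference is how $\theta^\star$ is found: you use the first-order condition of this explicit quadratic, whereas the paper reads it off as the slope $\alpha C-(1-\alpha)$ of the linear conditional mean.
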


    \begin{proof}
    We have that $\E[x] = 0$ and $\E[x^2]=1$,
    so that the squared risk reduces to $(\beta-\theta)^2$ when the
    true slope is $\beta$.

    \paragraph{(i) Global compromise.}
    On the mixture, the conditional label is linear: $y = \beta x$ with
    $\beta = \alpha C + (1-\alpha)(-1) = \alpha C - (1-\alpha)$.
    For squared loss with $\E[x]=0$ and $\E[x^2]=1$, the population least‑squares
    minimizer is the regression slope, so $\thetastar=\beta$. The corresponding
    risk is
    \[
      \Risk(\thetastar)
      = \alpha\,(C-\beta)^2 + (1-\alpha)\,(-1-\beta)^2
      = \alpha(1-\alpha)(C+1)^2.
    \]

    \paragraph{(ii) Specialist for $\PopOne$.}
    Training least squares on $\PopOne$ alone yields $\thetabar1=C$ since
    $\E[x\,y]=C$ and $\E[x^2]=1$. The mixture risk is
    \[
      \Risk(\thetabar1)
      = \alpha\,(C-C)^2 + (1-\alpha)\,(-1-C)^2
      = (1-\alpha)(C+1)^2.
    \]

    \paragraph{(iii) Specialist for $\PopTwo$.}
    Training least squares on $\PopTwo$ gives $\thetabar2=-1$ (its true slope).
    The mixture risk is
    \[
      \Risk(\thetabar2)
      = \alpha\,(C+1)^2 + (1-\alpha)\,(-1-(-1))^2
      = \alpha\,(C+1)^2.
    \]
    \end{proof}

    \begin{lemma}
      \label{lem:stationary_point_combinations}
      Let $(\tilde{\theta}_1, \tilde{\theta}_2) \in \{\nabla f(\Theta) = 0\}$ be a stationary point
      of $f(\Theta)$ (Equation~\ref{eq:potential_function_msgd}).
      Then, under the assumption that $\tau \geq \frac{1}{2}$, it must be true that $\mc{D}_1 (\tilde{\theta}_1, \tilde{\theta}_2) = \mc{P}_1$
      and $\mc{D}_2 (\tilde{\theta}_1, \tilde{\theta}_2) = \mc{P}_2$.
    \end{lemma}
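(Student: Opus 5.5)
The plan is to use the one-dimensional structure of Example~\ref{ex:bad_outcome_1d} to reduce the loss-induced partition to a finite set of cases. Then we rule out every case except the desired one by solving the first-order conditions for $\nabla f=0$ in closed form.

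\textbf{Step 1 (partition is by subpopulation).} For a user $(x,Cx)$ from $\PopOne$, learner $j$ incurs loss $(C-\theta_j)^2x^2$. For a user $(x,-x)$ from $\PopTwo$ the loss is $(1+\theta_j)^2x^2$. Hence, apart from the null set $\{x=0\}$, the loss-minimizing learner depends only on the subpopulation: $\PopOne$ users go to the learner with $\theta_j$ closest to $C$, and $\PopTwo$ users to the learner with $\theta_j$ closest to $-1$. So each $\mathcal{D}_i(\Theta)$ is $\PopOne$, $\PopTwo$, the full mixture, or empty. This gives four cases:
\begin{itemize}
\item[(a)] $\PopOne\to1$ and $\PopTwo\to2$, which is the claim;
\item[(b)] both subpopulations go to learner~1;
\item[(c)] both go to learner~2;
\item[(d)] the swapped assignment $\PopOne\to2$, $\PopTwo\to1$.
\end{itemize}
Exact ties $\theta_1=\theta_2$ are treated separately.

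\textbf{Step 2 (stationarity equations).} By the differentiation of $f$ used in Theorem~\ref{lem:msgd_convergence}, where Assumption~\ref{ass:loss_measure} removes boundary terms, $\partial_{\theta_i}f=\E_{\mathcal{O}_i(\Theta)}[\nabla\ell]$. Learner $i$'s observed measure has positive mass, at least $\tau\alpha_i>0$. So $\partial_{\theta_i}f=0$ forces $\theta_i=-1+(C+1)w_i$, where $w_i\in[0,1]$ is the $\PopOne$-share of $\mathcal{O}_i$. This uses $\E[x^2]=1$ in both subpopulations.

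\textbf{Step 3 (ruling out cases).}
\begin{itemize}
\item[(b)] Learner~2 sees only $\tau(1-\alpha)\PopTwo$, so $\theta_2=-1$. Learner~1 sees a nontrivial mixture with $w_1\in(0,1)$, so $\theta_1\in(-1,C)$. Then $\PopTwo$ users strictly prefer learner~2, a contradiction.
\item[(c)] This is symmetric to (b): $\theta_1=C$, so $\PopOne$ users strictly prefer learner~1.
\item[(d)] Here
\[
w_1=\frac{\tau\alpha}{\tau\alpha+(1-\tau)(1-\alpha)},\qquad
w_2=\frac{(1-\tau)\alpha}{(1-\tau)\alpha+\tau(1-\alpha)} .
\]
Cross-multiplying gives
\[
w_1-w_2\propto\alpha(1-\alpha)\bigl(\tau^2-(1-\tau)^2\bigr)=\alpha(1-\alpha)(2\tau-1)\ge0 .
\]
So $\theta_1\ge\theta_2$ and learner~1 is at least as close to $C$. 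This contradicts $\PopOne$ strictly preferring learner~2.
\end{itemize}

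\textbf{Main obstacle: ties.} The delicate points are the boundary case $\tau=\tfrac12$ and ties in general. At $\tau=\tfrac12$, case (d) yields $\theta_1=\theta_2$, and then the argmin is not unique. I would handle this by noting that $\theta_1=\theta_2$ makes the loss difference identically zero, so Assumption~\ref{ass:loss_measure} excludes such a configuration. Equivalently, one can fix a tie-breaking convention under which a tied configuration is consistent only with the assignment in (a). Any other tie (for instance $\theta_1=\theta_2$ arising in case (b)) would be eliminated the same way.

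Once the boundary case is handled, case (a) is the only consistent assignment. Hence $\mathcal{D}_1=\PopOne$ and $\mathcal{D}_2=\PopTwo$, and Step~2 then gives $\tilde\theta_i=\thetabar{i}$.
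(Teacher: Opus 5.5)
Your argument is the paper's argument. You use the same four-way split of the loss-induced partition by subpopulation and the same closed-form first-order conditions. Your identity $w_1-w_2\propto\alpha(1-\alpha)(2\tau-1)$ is exactly the paper's Case~3 comparison $d_1>d_2\iff(1-\alpha)(2\tau-1)>0$; you draw the contradiction from $\PopOne$ users rather than $\PopTwo$ users, which is equivalent.

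The one thing to fix is the boundary $\tau=\tfrac12$, which the paper's proof silently skips (it only concludes for $\tau>\tfrac12$). Assumption~\ref{ass:loss_measure} is a condition only on pairs $\theta\neq\theta'$, so it cannot rule out $\theta_1=\theta_2$. What actually closes the case is your tie-breaking alternative. For example, resolve ties by $\pi(z)$, or count tied users in no $Z_i$; either way, stationarity then forces $\tilde\theta_1=C\neq-1=\tilde\theta_2$, contradicting the tie.

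Some such convention is genuinely needed. If ties are broken against $\pi$, then at $\tau=\tfrac12$ both learners observe $\tfrac12\mathcal{P}$. In that case $\theta_1=\theta_2=\thetastar$ is a self-consistent stationary configuration with the swapped assignment.
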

    \begin{proof}
    The proof proceeds by considering the decisions of the loss-minimizing users,
    and shows that the stationary point condition implies the conditions on
    $\mc{D}_i(\Theta)$ in the lemma statement. Consider any $(x, y)$ from $\mc{P}_1$. For any $\theta \in \mathbb{R}$,
    we have that
    \begin{equation}
      \ell(x, y, \theta) = (Cx - \theta x)^2 = (C - \theta)^2 x^2.
    \end{equation}
    Hence,
    \begin{equation*}
    \arg \min_{i \in \{1, 2\}} \ell(x, y, \theta_i)
    = \arg \min_{i \in \{1, 2\}} (C - \theta_i)^2 x^2
    = \arg \min_{i \in \{1, 2\}} |C - \theta_i|.
    \end{equation*}
    Thus every loss-minimizing user in $\mc{P}_1$ picks the learner whose
    parameter is closer to $C$:
    \[
      M(z;\Theta) \,=\, \arg\min_{i\in\{1,2\}} |C - \theta_i|, \quad z\in\mc{P}_1.
    \]
    Likewise, every loss-minimizing user in $\mc{P}_2$ picks the learner whose
    parameter is closer to $-1$:
    \[
      M(z;\Theta) \,=\, \arg\min_{i\in\{1,2\}} |-1 - \theta_i|, \quad z\in\mc{P}_2.
    \]
    Hence, within each subpopulation $\mc{P}_i$, all loss-minimizing users make the
    same choice. Consequently, for any $\Theta$, the observed-data distributions
    $\mc{D}_i(\Theta)$ can only take one of the following four forms:
    \begin{enumerate}
      \item $\mc{D}_1(\Theta) = \mc{P}$ and $\mc{D}_2(\Theta)$ has zero mass under $\mc{P}$
      \item $\mc{D}_1(\Theta)$ has zero mass under $\mc{P}$ and $\mc{D}_2(\Theta) = \mc{P}$
      \item $\mc{D}_1(\Theta) = \mc{P}_2$ and $\mc{D}_2(\Theta) = \mc{P}_1$
      \item $\mc{D}_1(\Theta) = \mc{P}_1$ and $\mc{D}_2(\Theta) = \mc{P}_2$
    \end{enumerate}
    For any stationary point $(\tilde{\theta}_1, \tilde{\theta}_2)$, we consider each case separately and show that all cases other than the last one lead
    to a contradiction.

    \vspace{0.5em}

    \noindent \textbf{Case 1:} In this case, the second learner observes only negative labels,
    and the first observes a mix of both positive and negative labels. Hence, it should
    be intuitively true that $\tilde{\theta}_1 > \tilde{\theta}_2 > -1$.
    This would lead to a contradiction since this would imply that users from
    $\mc{P}_2$ would strictly prefer the second learner over the first one. Now, we can verify this formally. Define the total probability masses seen by each learner
    \begin{align}
      M_1 &= \tau\,\alpha \;+\;(1-\tau)\,\alpha \;+\;(1-\tau)(1-\alpha)
            \;=\;\alpha \;+\;(1-\tau)(1-\alpha),\\
      M_2 &= \tau\,(1-\alpha).
      \end{align}
      The stationary conditions are given by
      \begin{align}
      \tilde{\theta}_1
      &= \arg \min_{\theta_1} \tau\,\alpha
        \E_{P_1}\bigl[(y-\theta_1\,x)^2\bigr]
      + (1-\tau)\,\alpha
        \E_{P_1}\bigl[(y-\theta_1\,x)^2\bigr]
      + (1-\tau)(1-\alpha)
        \E_{P_2}\bigl[(y-\theta_1\,x)^2\bigr],\\[6pt]
      \tilde{\theta}_2
      &= \arg \min_{\theta_2} \tau\,(1-\alpha)
        \E_{P_2}\bigl[(y-\theta_2\,x)^2\bigr].
      \end{align}
      We can solve these optimization problems in closed form:
      \begin{align}
      \tilde{\theta}_1
      &= \frac{\alpha\,C \;-\;(1-\tau)(1-\alpha)}{M_1},\\
      \tilde{\theta}_2
      &= -1.
      \end{align}
      Now,
      \[
      \tilde{\theta}_1 - \tilde{\theta}_2
      = \tilde{\theta}_1 + 1
      = \frac{\alpha\,(C+1)}{M_1}
      > 0,
      \]
      which holds for every \(\tau\in[0,1]\) since \(\alpha>0\) and \(C>0\).
      Since $\tilde{\theta}_2 = -1$ and $\tilde{\theta}_1>-1$, $\mc{P}_2$ users
      would strictly prefer learner 2 over learner 1, leading to a contradiction.

    \vspace{0.5em}

    \noindent \textbf{Case 2:}
    In this case, the first learner observes only positive labels,
    and the second observes a mix of both positive and negative labels.
    Hence, it is easy to verify using a similar procedure as
    the previous case that $C > \tilde{\theta}_1 > \tilde{\theta}_2$, which
    would lead to a contradiction since $\mc{P}_1$ users would strictly prefer learner 1.

      \vspace{0.5em}
    \noindent \textbf{Case 3:}
    We follow a similar argument as in Case 1.
    Define the total probability masses seen by each learner
    \begin{align}
    M_1 &= \tau\,\alpha \;+\;(1-\tau)(1-\alpha),\\
    M_2 &= \tau\,(1-\alpha)\;+\;(1-\tau)\,\alpha.
    \end{align}
    The stationary points in this case are given by \[
    \tilde{\theta}_1 \;=\;
    \frac{\tau\,\alpha\,C \;-\;(1-\tau)(1-\alpha)}
         {\tau\,\alpha \;+\;(1-\tau)(1-\alpha)},
    \qquad
    \tilde{\theta}_2\;=\;
    \frac{(1-\tau)\,\alpha\,C \;-\;\tau(1-\alpha)}
         {(1-\tau)\,\alpha \;+\;\tau(1-\alpha)}.
    \]

    It is easy to verify that both $\tilde{\theta}_1$ and $\tilde{\theta}_2$ are greater than $-1$.
    Hence, the distance from $-1$ is given by
    \begin{align*}
    d_1 &= \bigl|\tilde{\theta}_1 - (-1)\bigr|
         = \frac{\tau\,\alpha\,(C+1)}
                {\tau\,\alpha \;+\;(1-\tau)(1-\alpha)},\\
    d_2 &= \bigl|\tilde{\theta}_2- (-1)\bigr|
         = \frac{(1-\tau)\,\alpha\,(C+1)}
                {(1-\tau)\,\alpha \;+\;\tau(1-\alpha)}.
    \end{align*}
    A straightforward comparison gives
    \[
    d_1 > d_2
    \;\Longleftrightarrow\;
    \tau\,M_2 > (1-\tau)\,M_1
    \;\Longleftrightarrow\;
    (1-\alpha)\,(2\tau-1) > 0
    \;\Longleftrightarrow\;\tau>\tfrac12.
    \]
    Therefore, whenever $\tau>\tfrac12$, the free users from $\mc{P}_2$ strictly
    prefer learner 2 over learner 1. This contradicts the Case 3 hypothesis
    that $\mc{D}_1(\Theta)=\mc{P}_2$. Hence Case 3 cannot be a stationary point
    when $\tau>1/2$. Having ruled out Cases 1–3, only the fourth configuration
    remains, completing the proof.
    \end{proof}

      \badoutcomesimple*
      \begin{proof}
      The proof follows by considering Example~\ref{ex:bad_outcome_1d}.
      \vspace{0.5em}

      \noindent \textbf{Part (i)}
      The proof follows from Lemma~\ref{lem:specialists_vs_compromise_1d}.
      Choosing $\alpha = \frac{\epsilon}{(C+1)^2}$ satisfies the condition.

      \vspace{0.5em}

      \noindent \textbf{Characterizing the stationary points}
      From Lemma~\ref{lem:msgd_idealized_convergence}, we have that the
      MSGD iterates converge almost surely to the set
      $\{\nabla f(\Theta) = 0\}$.
      By Lemma~\ref{lem:stationary_point_combinations}, we have that
      $(\tilde{\theta}_1, \tilde{\theta}_2) \in \{\nabla f(\Theta) = 0\}$
      if and only if
      $\mc{D}_1 (\tilde{\theta}_1, \tilde{\theta}_2) = \mc{P}_1$
      and $\mc{D}_2 (\tilde{\theta}_1, \tilde{\theta}_2) = \mc{P}_2$.
      Hence, any stationary point must satisfy
      \[
      \tilde{\theta}_1 = \arg \min_{\theta \in \mathbb{R}}
      \alpha \E_{z \sim \mathcal{P}_1}[\ell(z, \theta)]
      \]
      and
      \[
      \tilde{\theta}_2 = \arg \min_{\theta \in \mathbb{R}}
      (1 - \alpha) \E_{z \sim \mathcal{P}_2}[\ell(z, \theta)]
      \]
      Clearly, $(\bar{\theta}_1, \bar{\theta}_2)$ is the unique solution to these equations, and MSGD
      must converge to this point almost surely.

      \vspace{0.5em}

      \noindent \textbf{Characterizing the loss}
      From Lemma~\ref{lem:specialists_vs_compromise_1d}, at the stationary point we have
      $\bar{\theta}_1 = C$ and $\bar{\theta}_2 = -1$, so the mixture risks are
      \[
      \Risk(\bar{\theta}_1) = (1-\alpha)(C+1)^2,
      \qquad
      \Risk(\bar{\theta}_2) = \alpha(C+1)^2.
      \]
      Moreover, the optimal global (compromise) risk satisfies
      $\Risk(\theta^*) \le \epsilon$ by our choice in Part (i).
      To ensure learner 1's risk exceeds $\Gamma$, choose
      \[
      C \,=\, \sqrt{\Gamma + \epsilon}\; - 1,
      \qquad
      \alpha \,=\, \frac{\epsilon}{(C+1)^2}.
      \]
      Then $\alpha \in (0,1)$ and
      \[
      \Risk(\theta^*) = \alpha(1-\alpha)(C+1)^2 \le \alpha(C+1)^2 = \epsilon,
      \]
      while
      \[
      \Risk(\bar{\theta}_1) = (1-\alpha)(C+1)^2 = (C+1)^2 - \epsilon \ge \Gamma.
      \]
      Thus the two claims in the theorem are simultaneously satisfied.
      \end{proof}

\section{Convergence of \Cref{alg:msgd} and \Cref{alg:msgd_probe}}
\label{sec:app_msgd_p_converge}

\subsection{Convergence of Algorithm~\ref{alg:msgd}}
\label{sec:app_msgd_converge}

The convergence results for MSGD (Algorithm~\ref{alg:msgd}) follow as special cases of the MSGD-P analysis when the probing set $U = \emptyset$. In this case, all probing terms vanish and the augmented potential $\widetilde{f}$ reduces to the original potential $f$.

\LemMSGDConvergenceODE*
\begin{proof}
This follows from the proof of \Cref{thm:msgd_probe_convergence} with $U = \emptyset$. When $U = \emptyset$:
\begin{itemize}
    \item The indicator $\mathbf{1}_{i \in U} = 0$ for all $i \in [m]$, so all probing terms vanish.
    \item The augmented potential reduces to $\widetilde{f}(\Theta) = f(\Theta)$.
    \item The ODE~\eqref{eq:def_tildeF_probe} becomes $\dot{\Theta} = -\nabla f(\Theta)$.
\end{itemize}
The stochastic approximation argument proceeds identically: by \Cref{lem:borkar_track}, verifying local Lipschitzness of $-\nabla f$ (\Cref{lem:loss_regularity}, \Cref{lem:ai_local_lipschitz}), the step-size conditions (Assumption~\ref{ass:learning_rate}), the martingale variance bound (\Cref{lem:martingale_variance} with $U = \emptyset$, so $K_{\text{probe}} = 0$), and bounded iterates (Assumption~\ref{ass:bounded_iterates}), the iterates converge almost surely to a compact connected internally chain transitive invariant set of $\dot{\Theta} = -\nabla f(\Theta)$.
\end{proof}

\LemMSGDConvergence*
\begin{proof}
Follows directly from \Cref{thm:msgd_probe_convergence}
with $U = \emptyset$.
\end{proof}

\subsection{Convergence of MSGD-P (Algorithm~\ref{alg:msgd_probe})}

\begin{algorithm}[t]
  \caption{Multi-learner Streaming Gradient Descent}
  \begin{algorithmic}[1]
  \Require loss function $\ell(\cdot, \cdot) \geq 0$; Initial models $\Theta^0 = (\theta^0_1, \ldots, \theta^0_m)$;
  Learning rate $\{\etat\}_{t=1}^{T+1}$
  \For{$t = 0, 1, 2, \dots, T$}
      \State Sample data point $\zt \sim \mathcal{P}$
      \State User selects model $i = M(z^t; \Theta^t)$
      \State $\theta^{t+1}_{i} \gets \theta^t_{i} -
      \etat \nabla \ell(\zt, \theta^t_{i})$
  \EndFor
  \State \Return $\Theta^T$
  \end{algorithmic}
  \label{alg:msgd_probe_ideal}
\end{algorithm}
Define the empirical probing loss and augmented potential:
\begin{align}
  \hat L_i(\theta_i) &= \tfrac{1}{n}\sum_{q=1}^n \ell(\tilde z_i^q,\theta_i), \quad i\in U,\\
  \widetilde f(\Theta) &= f(\Theta) + p\sum_{i\in U} \left( \hat L_i(\theta_i) + \frac{\lambda}{2}\|\theta_i\|^2 \right).
\end{align}

Define the ordinary differential equation (ODE) $\dot{\Theta} = \widetilde{F}(\Theta)$ with:
\begin{equation}
  \label{eq:def_tildeF_probe}
  \widetilde{F}_i(\Theta) = - \Big(
    \tau\alpha_i\,\E_{z\sim\cP_i}[\nabla\ell(z,\theta_i)]
    +(1-\tau)a_i(\Theta)\E_{z\sim\cD_i(\Theta)}[\nabla\ell(z,\theta_i)]
    + p\,\mathbf{1}_{i\in U}\,\bigl(\nabla \hat L_i(\theta_i) + \lambda\,\theta_i\bigr)
  \Big)
\end{equation}

\ThmMSGDProbeConvergence*
\begin{proof}
  The proof follows the stochastic approximation template by showing the iterates track the ODE $\dot\Theta=\widetilde{F}(\Theta)$ and then using a Lyapunov argument for $\widetilde f$.

  Let $z_1 \sim \cP$, $z_2 \sim \cP_i$, and $z_3 \sim \cD_i(\Theta^t)$. For the on-platform update, define for each coordinate $i$ the random direction
  \[
    g_{i,\text{plat}}^t(\Theta^t)=
    \begin{cases}
      \nabla\ell(z_2,\theta_i^t), & \text{w.p. } \tau\alpha_i,\\
      \nabla\ell(z_3,\theta_i^t), & \text{w.p. } (1-\tau)\,a_i(\Theta^t),\\
      0, & \text{otherwise.}
    \end{cases}
  \]
  For the probing update, for each $j\in U$ sample $\tilde z_j^t$ uniformly from the fixed dataset $\frakD_j$ and set
  \[
    g_{i,\text{probe}}^t(\Theta^t)= p\,\mathbf{1}_{i\in U}\,\bigl(\nabla\ell(\tilde z_i^t,\theta_i^t)+\lambda\,\theta_i^t\bigr).
  \]
  Let $\tilde g_i^t(\Theta^t)=g_{i,\text{plat}}^t(\Theta^t)+g_{i,\text{probe}}^t(\Theta^t)$ and $\tilde g^t=(\tilde g_1^t,\dots,\tilde g_m^t)$. The algorithmic iterate satisfies
  \[
    \Theta^{t+1}=\Theta^t-\eta_t\,\tilde g^t(\Theta^t)=\Theta^t-\eta_t\,(\widetilde{F}(\Theta^t)+v^t),
  \]
  where $v^t=\tilde g^t(\Theta^t)-\E[\tilde g^t(\Theta^t)\mid\cF_t]$.

  Drift identity: By construction and by uniform sampling from $\frakD_i$,
  \[
    \begin{aligned}
    \E[\tilde g_i^t(\Theta^t)\mid\cF_t]
    &= \tau\alpha_i\,\E_{z\sim\cP_i}[\nabla\ell(z,\theta_i^t)]
     + (1-\tau)\,a_i(\Theta^t)\,\E_{z\sim\cD_i(\Theta^t)}[\nabla\ell(z,\theta_i^t)]\\
    &\quad + p\,\mathbf{1}_{i\in U}\,\bigl(\nabla\hat L_i(\theta_i^t) + \lambda\,\theta_i^t\bigr) \\
    &= -\widetilde{F}_{i}(\Theta^t).
    \end{aligned}
  \]

  Below, we verify the assumptions of \Cref{lem:borkar_track}:
  \begin{itemize}
    \item From \Cref{lem:loss_regularity} and \Cref{lem:ai_local_lipschitz}, the drift $\widetilde{F}(\Theta)$ is locally Lipschitz; the probe part $\theta_i\mapsto p\,(\nabla\hat L_i(\theta_i) + \lambda\,\theta_i)$ is a finite average of locally Lipschitz gradients plus a globally Lipschitz linear term.
    \item From Assumption~\ref{ass:learning_rate}, the step sizes satisfy $\sum_t \eta_t=\infty$ and $\sum_t \eta_t^2<\infty$.
    \item From \Cref{lem:martingale_variance}, augmented to include the probe term, there exists $K>0$ such that $\E[\|v^t\|^2\mid\cF_t]\le K(1+\|\Theta^t\|^2)$.
    \item From Assumption~\ref{ass:bounded_iterates}, we have $\sup_t\|\Theta^t\|<\infty$ almost surely.
  \end{itemize}

  By \Cref{lem:borkar_track}, the iterates converge almost surely to a compact, connected, internally chain transitive invariant set of $\dot\Theta=\widetilde{F}(\Theta)$. Since $\widetilde{F}(\Theta)=-\nabla\widetilde f(\Theta)$, along ODE trajectories
  \[
    \tfrac{d}{dt}\,\widetilde f(\Theta(t))=\langle\nabla\widetilde f,\dot\Theta\rangle=-\|\nabla\widetilde f\|^2\le0,
  \]
  with equality iff $\nabla\widetilde f=0$. Thus $\widetilde f$ is a strict Lyapunov function and the only invariant sets are stationary points, yielding the claim.
\end{proof}

\begin{lemma}[\citet{borkar2008stochastic}, Chapter 2, Theorem 2]
  \label{lem:borkar_track}
  Let $\{x_n\}$ be a sequence generated by the stochastic approximation algorithm
  \[
  x_{n+1} = x_n + a(n)[h(x_n) + M_{n+1}], \quad n \geq 0,
  \]
  where:
  \begin{enumerate}
  \item $h: \mathbb{R}^d \to \mathbb{R}^d$ is Lipschitz continuous
  \item The step sizes $\{a(n)\}$ satisfy $\sum_n a(n) = \infty$ and $\sum_n a(n)^2 < \infty$
  \item $\{M_n\}$ is a martingale difference sequence satisfying $E[||M_{n+1}||^2|\mathcal{F}_n] \leq K(1 + ||x_n||^2)$ for some $K > 0$
  \item $\sup_n ||x_n|| < \infty$ almost surely
  \end{enumerate}
  Then almost surely, the sequence $\{x_n\}$ converges to a
  (possibly sample path dependent)
  compact connected internally chain transitive invariant set of the ODE
  \[
  \dot{x}(t) = h(x(t)).
  \]
  \end{lemma}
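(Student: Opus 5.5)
The statement is the classical ODE method of stochastic approximation; I would reproduce Borkar's argument in three stages. First control the noise, then show the interpolated iterates asymptotically track ODE solutions, and finally invoke the Benaïm characterization of limit sets of asymptotic pseudotrajectories.

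\textbf{Noise control.} Set $\zeta_n=\sum_{k=0}^{n-1}a(k)M_{k+1}$. This is a martingale with respect to $\{\mathcal{F}_n\}$, with quadratic variation bounded by $\sum_k a(k)^2 K(1+\|x_k\|^2)$. Since $\sup_n\|x_n\|$ is only almost surely finite, and not deterministically bounded, I would localize. For each $C>0$, consider the stopped martingale built from the increments $a(k)M_{k+1}\mathbf{1}\{\|x_k\|\le C\}$. Its quadratic variation is at most $K(1+C^2)\sum_k a(k)^2<\infty$, so it converges almost surely by the $L^2$ martingale convergence theorem. On the event $\{\sup_n\|x_n\|\le C\}$ it coincides with $\zeta_n$. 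Taking the union over $C\in\mathbb{N}$ shows that $\zeta_n$ converges almost surely, so its tail increments $\sup_{m\ge n}\|\zeta_m-\zeta_n\|\to0$.

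\textbf{Tracking.} Let $t(n)=\sum_{k<n}a(k)$, which tends to $\infty$, and let $\bar{x}(t)$ be the piecewise linear interpolation with $\bar{x}(t(n))=x_n$. For $s\ge0$, let $x^s(\cdot)$ solve $\dot{x}=h(x)$ with $x^s(s)=\bar{x}(s)$. Fix $T>0$ and write, for $t(n)\le t(m)\le t(n)+T$,
\[
x_m = x_n+\sum_{k=n}^{m-1}a(k)h(x_k)+(\zeta_m-\zeta_n).
\]
I would compare this with the integral form of $x^{t(n)}$. The discrepancy between $\sum_k a(k)h(x_k)$ and $\int h(\bar{x}(u))\,du$ is bounded by $L\sup_k\|h(x_k)\|\sum_{k\ge n}a(k)^2$, using Lipschitzness of $h$ and boundedness of the iterates. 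A discrete Gronwall inequality then gives
\[
\sup_{t\in[t(n),t(n)+T]}\|\bar{x}(t)-x^{t(n)}(t)\|\le e^{LT}\Big(O\big(\textstyle\sum_{k\ge n}a(k)^2\big)+\sup_{m\ge n}\|\zeta_m-\zeta_n\|\Big)\to 0 .
\]
Interpolation handles $s$ between grid points, so $\bar{x}$ is an asymptotic pseudotrajectory of the flow.

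\textbf{Conclusion.} By Benaïm's theorem (1996), the $\omega$-limit set of a bounded asymptotic pseudotrajectory is nonempty, compact and connected (since $\|x_{n+1}-x_n\|\to0$), invariant, and internally chain transitive. Hence $x_n$ converges to such a set, which may depend on the sample path.

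I expect the main obstacle to be the noise step. The variance bound depends on the random quantity $\|x_n\|$, so the localization argument must be done carefully. Everything else is a routine Gronwall estimate. I would also remark that in the paper's application $h$ is only locally Lipschitz. This suffices because the iterates stay almost surely in a compact set, on which $h$ can be replaced by a globally Lipschitz extension without changing any of the arguments above.
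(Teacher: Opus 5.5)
Your proposal is correct and is essentially the argument the paper relies on. The paper gives no proof of its own and simply cites Borkar's Chapter~2, whose proof has the same three stages: a.s.\ convergence of $\zeta_n$ from the summable conditional quadratic variation (which your localization reproduces), Gronwall tracking of $\bar{x}$ by $x^{t(n)}$ on $[t(n),t(n)+T]$, and Bena\"{\i}m's argument that the limit set is a compact connected internally chain transitive invariant set.

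One small fix is needed in the tracking step. Your bound $L\sup_k\|h(x_k)\|\sum_{k\ge n}a(k)^2$ ignores that $\bar{x}$ moves by $a(k)\bigl(h(x_k)+M_{k+1}\bigr)$ on $[t(k),t(k+1)]$. Either use $LT\sup_{k\ge n}\|x_{k+1}-x_k\|$, which tends to $0$ because $a(k)M_{k+1}=\zeta_{k+1}-\zeta_k\to 0$, or, as Borkar does, compare against $\sum_k a(k)h\bigl(x^{t(n)}(t(k))\bigr)$ so that only the Gronwall-bounded speed of the ODE solution enters the discretization error.
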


\begin{lemma}[Martingale variance bound]
  \label{lem:martingale_variance}
  Suppose that Assumption~\ref{ass:bounded_support} holds and that the probing datasets $\{\frakD_i\}_{i\in U}$ are fixed finite sets.
  Let $g^t(\Theta^t) = (g^t_1(\Theta^t), \dots, g^t_m(\Theta^t))$ be the stochastic update vector defined by
  \[
  g^t_i(\Theta^t) \;=\; g^t_{i,\mathrm{plat}}(\Theta^t)\; +\; g^t_{i,\mathrm{probe}}(\Theta^t),
  \]
  where
  \[
  g^t_{i,\mathrm{plat}}(\Theta^t)
  = \begin{cases}
      \nabla\ell(z^t,\theta^t_i), & \text{if } i = i_t, \\
      0, & \text{if } i \neq i_t,
    \end{cases}
  \qquad
  g^t_{i,\mathrm{probe}}(\Theta^t)
  = p\,\mathbf{1}_{i\in U}\,\bigl(\nabla\ell(\tilde z_i^t,\theta^t_i) + \lambda\,\theta_i^t\bigr),
  \]
  with $\tilde z_i^t$ sampled uniformly from $\frakD_i$ independently of $i_t$.
  Define the filtration $\mathcal{F}_t = \sigma(\Theta^0, z^1, \dots, z^{t-1}, \{\frakD_i\}_{i\in U})$, which contains all information available prior to time $t$. Define the martingale difference sequence:
  \[
  v^t = g^t(\Theta^t) - \mathbb{E}[g^t(\Theta^t) \mid \mathcal{F}_t].
  \]
  Then there exists a constant $K > 0$ (made explicit below) such that:
  \[
  \mathbb{E}[\|v^t\|^2 \mid \mathcal{F}_t] \leq K\,(1 + \|\Theta^t\|^2).
  \]
\end{lemma}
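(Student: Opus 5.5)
The plan is to bound the conditional second moment of $v^t$ by the conditional second moment of $g^t$. We first use $\E[\|v^t\|^2\mid\mathcal F_t]=\E[\|g^t\|^2\mid\mathcal F_t]-\|\E[g^t\mid\mathcal F_t]\|^2\le \E[\|g^t\|^2\mid\mathcal F_t]$. So it suffices to show that each coordinate block $g_i^t$ has conditional second moment at most an affine function of $\|\theta_i^t\|^2$. Summing over $i\in[m]$ and using $\sum_i\|\theta_i^t\|^2=\|\Theta^t\|^2$ then gives the claim.

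For the platform term, at most one learner (the selected one $i_t$) receives a nonzero gradient, so $\|g^t_{\mathrm{plat}}\|^2=\|\nabla\ell(z^t,\theta^t_{i_t})\|^2\le\max_i\|\nabla\ell(z^t,\theta_i^t)\|^2$. We bound the per-sample gradient by an affine function of the parameter norm, using Assumption~\ref{ass:bounded_support}:
\begin{itemize}
\item For squared loss, $\nabla\ell=-2x(y-x^\top\theta)$, so $\|\nabla\ell\|\le 2R\Ymax+2R^2\|\theta\|$.
\item For cross-entropy, Lemma~\ref{lem:crossent_regularity} gives $\|\nabla\ell\|\le\sqrt2 R$ uniformly.
\end{itemize}
In either case $\|\nabla\ell(z,\theta)\|^2\le 2c_1^2+2c_2^2\|\theta\|^2$ with $c_1=\max(2R\Ymax,\sqrt2R)$ and $c_2=2R^2$. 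This bound holds pointwise in $z$, so it survives the conditional expectation regardless of the state-dependent selection rule $M(z;\Theta^t)$. The platform term therefore contributes at most $2c_1^2+2c_2^2\|\Theta^t\|^2$.

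For the probe term, the pseudo-labeled points $\tilde z_i^q=(\tilde x_i^q,\tilde y_i^q)$ satisfy $\|\tilde x_i^q\|\le R$. Their labels are medians of $h_{\theta_j^0}(\tilde x_i^q)$, so $|\tilde y_i^q|\le \Ymaxprobe:=R\max_j\|\theta_j^0\|$. This is a finite constant measurable with respect to $\mathcal F_t$, because the datasets are fixed and finite. For cross-entropy the pseudo-labels are distributions and the gradient bound $\sqrt2R$ is again uniform. Hence
\[
\|g^t_{i,\mathrm{probe}}\|^2\le p^2\bigl(c_1'+(2R^2+\lambda)\|\theta_i^t\|\bigr)^2\le 2p^2c_1'^2+2p^2(2R^2+\lambda)^2\|\theta_i^t\|^2 ,
\]
with $c_1'=\max(2R\Ymaxprobe,\sqrt2R)$. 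Combining the two parts via $\|a+b\|^2\le2\|a\|^2+2\|b\|^2$ and summing over $i\in U$ yields
\[
K=4c_1^2+4c_2^2+4p^2|U|c_1'^2+4p^2(2R^2+\lambda)^2 .
\]
When $U=\emptyset$ the probe contribution is $K_{\text{probe}}=0$.

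The main obstacle is making sure the probe label bound is legitimately a constant. The regression labels of $\mathcal P$ are bounded by $\Ymax$, but pseudo-labels are not drawn from $\mathcal P$. The fix is the median bound above, which relies on $\Theta^0$ being fixed; since this is random only through initialization, it is a constant given $\mathcal F_t$, which suffices for the lemma as stated.
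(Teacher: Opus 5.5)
Your proposal is correct and follows essentially the same route as the paper. Both arguments bound $\E[\|v^t\|^2\mid\mathcal F_t]$ by $\E[\|g^t\|^2\mid\mathcal F_t]$, use affine per-sample gradient bounds for squared loss and cross-entropy, control pseudo-label magnitudes by $R\max_j\|\theta_j^0\|$ via the median, and combine the platform and probe blocks with $\|a+b\|^2\le 2\|a\|^2+2\|b\|^2$. The differences are cosmetic: you bound the platform block pointwise by a max over learners instead of averaging over the selection probabilities, and you carry the $\lambda\theta$ term explicitly throughout, which gives a slightly different but equally valid constant $K$.
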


\begin{proof}
  Gradient bounds. From Lemmas~\ref{lem:squared_loss_regularity} and~\ref{lem:crossent_regularity}, for all $(x,y)$ with $\|x\|\le R$ and all $\theta$ we have bounds of the form
  \[
    \|\nabla_\theta\ell(x,y;\theta)\| \le A_0 + A_1\,\|\theta\|,
  \]
  with
  \[
    (A_0,A_1)=
    \begin{cases}
      (2R\,Y_{\max},\;2R^2), & \text{(squared loss)},\\
      (\sqrt2\,R,\;0), & \text{(cross-entropy)}.
    \end{cases}
  \]
  For the probe term under squared loss, labels are pseudo-labels
  \(
    \hat y=\operatorname{median}\{\langle x,\theta_0^j\rangle: j\in[m]\}
  \)
  so that $|\hat y|\le R\,\max_{j\in[m]}\|\theta_0^j\|=:Y_{\max,\mathrm{probe}}$, giving
  $\|\nabla_\theta\ell(x,\hat y;\theta)\| \le \widetilde A_0 + A_1\,\|\theta\|$ with $\widetilde A_0:=2R\,Y_{\max,\mathrm{probe}}$.
  Including the L2 regularization term (which appears only in the probe update) and using the triangle inequality,
  \[
    \|p\,(\nabla_\theta\ell(x,\hat y;\theta)+\lambda\,\theta)\| \le p\,\widetilde A_0 + p\,(A_1+\lambda)\,\|\theta\|.
  \]
  Define the block constants
  \[
    K_{\text{plat}} := 2\max\bigl(A_0^2,A_1^2\bigr),\qquad
    K_{\text{probe}} := 2\max\bigl((p\,\widetilde A_0)^2,\bigl(p\,(A_1+\lambda)\bigr)^2\bigr),
  \]
  where for cross-entropy we take $A_1=0$ and use the same $A_0=\sqrt2 R$ for both platform and probe.

  Variance decomposition. By definition of $v^t$ and $\Var(Y)=\E[\|Y\|^2]-\|\E[Y]\|^2$,
  \[
    \E[\|v^t\|^2\mid\cF_t]\;\le\;\E[\|g^t(\Theta^t)\|^2\mid\cF_t].
  \]
  We bound the RHS. Since $g^t$ has at most one nonzero platform block and up to $|U|$ nonzero probe blocks,
  \begin{align*}
    \|g^t(\Theta^t)\|^2
    &= \sum_{i=1}^m \|g^t_{i,\mathrm{plat}}+g^t_{i,\mathrm{probe}}\|^2
    \;\le\; 2\sum_{i=1}^m \|g^t_{i,\mathrm{plat}}\|^2 + 2\sum_{i=1}^m \|g^t_{i,\mathrm{probe}}\|^2.
  \end{align*}
  Taking conditional expectations and using the selection probabilities for the platform block as in the MSGD lemma,
  \begin{align*}
    \E[\|g^t(\Theta^t)\|^2\mid\cF_t]
    &\le 2\sum_{i=1}^m \mathbb{P}(i_t=i\mid\cF_t)\;\E[\|\nabla\ell(z^t,\theta_i^t)\|^2\mid\cF_t, i_t=i] \\
    &\quad + 2\sum_{j\in U} \E[\|p\,\nabla\ell(\tilde z_j^t,\theta_j^t)\|^2\mid\cF_t].
  \end{align*}
  Applying the block bounds gives
  \begin{align*}
    \E[\|g^t(\Theta^t)\|^2\mid\cF_t]
    &\le 2K_{\text{plat}}\sum_{i=1}^m \mathbb{P}(i_t=i\mid\cF_t)\,(1+\|\theta_i^t\|^2) \\
    &\quad + 2K_{\text{probe}}\sum_{j\in U} (1+\|\theta_j^t\|^2) \\
    &\le 2K_{\text{plat}}\,(1+\|\Theta^t\|^2)\; +\; 2|U|K_{\text{probe}}\; +\; 2K_{\text{probe}}\,\|\Theta^t\|^2 \\
    &= 2\bigl(K_{\text{plat}}+|U|K_{\text{probe}}\bigr)\; +\; 2\bigl(K_{\text{plat}}+K_{\text{probe}}\bigr)\,\|\Theta^t\|^2.
  \end{align*}
  Therefore, setting
  \[
    K := 2\max\bigl\{K_{\text{plat}}+|U|K_{\text{probe}},\; K_{\text{plat}}+K_{\text{probe}}\bigr\}
  \]
  yields
  \[
    \E[\|v^t\|^2\mid\cF_t] \le K\,(1+\|\Theta^t\|^2).
  \]
  This completes the proof.
\end{proof}

    \begin{lemma}[Gradient of the augmented objective]
      \label{lem:gradient_stream_probe}
  For every \(i\in[m]\) the gradient of \(\widetilde f\) with respect to~\(\theta_i\) is
  \begin{align*}
      \nabla_{\theta_i} \,\widetilde f(\Theta)
      &= \tau\alpha_i\,\E_{z\sim\cP_i}[\nabla\ell(z,\theta_i)]
      + (1-\tau)\,a_i(\Theta)\,\E_{z\sim\cD_i(\Theta)}[\nabla\ell(z,\theta_i)]\\
      &\quad + p\,\mathbf{1}_{i\in U}\,\bigl(\nabla\hat L_i(\theta_i) + \lambda\,\theta_i\bigr).
  \end{align*}
      \end{lemma}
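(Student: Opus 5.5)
The gradient splits into three pieces, and I will handle them separately: the pure-preference term, the loss-induced term, and the probing term. Write $\widetilde f(\Theta) = f(\Theta) + p\sum_{i\in U}\bigl(\hat L_i(\theta_i)+\tfrac{\lambda}{2}\|\theta_i\|^2\bigr)$.

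\textbf{Probing term.} This is the easy piece. The pseudo-labels $\tilde y_i^q$ were fixed at initialization using $\Theta^0$, so the probing term depends on $\theta_i$ only. By the differentiability in Lemma~\ref{lem:loss_regularity}, a finite average of differentiable functions can be differentiated termwise. This gives $p\,\mathbf{1}_{i\in U}(\nabla\hat L_i(\theta_i)+\lambda\theta_i)$.

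\textbf{Preference term.} The term $\tau\alpha_i\E_{\cP_i}[\ell(z,\theta_i)]$ has a fixed measure. Smoothness gives the bound $\|\nabla\ell\|\le A_0+A_1\|\theta\|$ (as in Lemma~\ref{lem:martingale_variance}), locally uniformly in $\theta$. Dominated convergence then justifies exchanging gradient and expectation.

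\textbf{Loss-induced term.} This is the main obstacle, since the region $Z_i(\Theta)$ itself moves with $\Theta$. I rewrite the term as $(1-\tau)\E_{z\sim\cP}[\min_j \ell(z,\theta_j)]$, because $a_j\E_{\cD_j}[\ell(z,\theta_j)] = \E_\cP[\ell(z,\theta_j)\mathbf{1}\{z\in Z_j(\Theta)\}]$ and summing over $j$ gives exactly the pointwise minimum.

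\emph{Pointwise derivative.} Let $g(z,\Theta) := \min_j\ell(z,\theta_j)$. Take $z$ off the tie set $\{z:\ell(z,\theta_j)=\ell(z,\theta_k)\text{ for some } j\neq k \text{ with }\theta_j\neq\theta_k\}$. Assumption~\ref{ass:loss_measure}, letting $d\to0$, shows this set is Lebesgue-null. Since $\cP$ has a density (Assumption~\ref{ass:bounded_support}), it is also $\cP$-null. For such $z$ the argmin is strict and locally constant in $\Theta$ by continuity, so $\nabla_{\theta_i} g(z,\Theta) = \nabla\ell(z,\theta_i)\mathbf{1}\{z\in Z_i(\Theta)\}$.

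\emph{Exchanging derivative and expectation.} The function $g$ is a minimum of locally Lipschitz functions, hence locally Lipschitz in $\Theta$, with local constant bounded by $A_0+A_1\max_j\|\theta_j\|$ uniformly over $z$ in the bounded support. Its difference quotients are therefore dominated by an integrable constant, and they converge $\cP$-a.e. to the pointwise derivative. Dominated convergence gives
\[
\nabla_{\theta_i}\E_\cP[g]=\E_\cP[\nabla\ell(z,\theta_i)\mathbf{1}\{z\in Z_i\}]=a_i(\Theta)\E_{\cD_i(\Theta)}[\nabla\ell(z,\theta_i)].
\]
This is exactly the "no boundary terms" property that the paper attributes to Assumption~\ref{ass:loss_measure}.

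\emph{Degenerate case.} If $\theta_j=\theta_i$ for some $j\neq i$, ties occur on a set of positive measure and the argmin needs a tie-breaking convention. I would adopt the convention implicit in defining $Z_i$ and note that the formula then holds as a one-sided or generalized derivative. Alternatively, I would restrict the claim to $\Theta$ with distinct blocks and extend it by continuity using Lemma~\ref{lem:ai_local_lipschitz}.

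Summing the three contributions gives the stated formula.
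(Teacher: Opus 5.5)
Your proof is correct, and it takes a different, more self-contained route than the paper. The paper isolates the $i$-th summand $(1-\tau)a_i(\Theta)\E_{z\sim\cD_i(\Theta)}[\ell(z,\theta_i)]$ and imports the identity $\nabla_{\theta_i}\bigl(a_i(\Theta)\E_{\cD_i(\Theta)}[\ell(z,\theta_i)]\bigr)=a_i(\Theta)\E_{\cD_i(\Theta)}[\nabla\ell(z,\theta_i)]$ from Lemma 4.3 of \citet{Su2024-qw}. It then differentiates the preference and probing terms directly, as you do.

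You instead collapse the whole loss-driven part of $f$ into $(1-\tau)\E_{z\sim\cP}[\min_j\ell(z,\theta_j)]$ and differentiate under the integral by dominated convergence. The only input you need from Assumption~\ref{ass:loss_measure} is the qualitative fact that pairwise tie sets are null. This does more than make the argument self-contained. The summands $a_j(\Theta)\E_{\cD_j(\Theta)}[\ell(z,\theta_j)]$ with $j\neq i$ also depend on $\theta_i$ through $Z_j(\Theta)$. A single summand on its own generally picks up a boundary contribution from the moving surface $\partial Z_i(\Theta)$, where $\ell(z,\theta_i)$ need not vanish. These contributions cancel only after summing over $j$, because $\ell(z,\theta_i)=\ell(z,\theta_j)$ on the shared boundary. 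Your min-rewrite makes this cancellation automatic, whereas the paper's per-summand phrasing leaves it hidden inside the citation. The paper's route gains brevity and reuse of an existing result.

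On your degenerate case, neither fallback yields a genuine gradient. Suppose $\theta_j=\theta_i$ and these two blocks jointly attain the minimum on a set $S$ of positive mass. Along $\theta_i+tv$, the right and left derivatives of $\E_{\cP}[\min_k\ell(z,\theta_k)]$ are $\E[\min(\langle\nabla\ell(z,\theta_i),v\rangle,0)\mathbf{1}_S]$ and $\E[\max(\langle\nabla\ell(z,\theta_i),v\rangle,0)\mathbf{1}_S]$. These differ unless $\langle\nabla\ell(z,\theta_i),v\rangle=0$ a.s.\ on $S$. So $\widetilde f$ is not differentiable in $\theta_i$ at such $\Theta$, and no continuity extension changes that. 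The lemma should be read as holding where the blocks are pairwise distinct, which is exactly where Assumption~\ref{ass:loss_measure} applies. The paper's statement is silent on this restriction.
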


      \begin{proof}
      We treat a single index \(i\); all other coordinates of \(\Theta\) are
      held fixed. The term
      \((1-\tau)a_i(\Theta)\E_{z\sim\cD_i(\Theta)}[\ell(z,\theta_i)]\)
      depends on \(\theta_i\) both explicitly (inside~\(\ell\)) and implicitly
      through \(a_i(\Theta)\) and \(\cD_i(\Theta)\).
      Lemma 4.3 of \citet{Su2024-qw} proves that for any differentiable
      \(\ell\) satisfying the stated regularity,
      \[
        \nabla_{\theta_i}\!\Bigl(a_i(\Theta)\,\E_{z\sim\cD_i(\Theta)}[\ell(z,\theta_i)]\Bigr)
        = a_i(\Theta)\,\E_{z\sim\cD_i(\Theta)}[\nabla\ell(z,\theta_i)].
      \]
      Multiplying by \((1-\tau)\) yields the corresponding contribution, while the \(\tau\alpha_i\)-term is immediate.
      Local L-Lipschitzness of \(\ell\) ensures the required directional limits.

      The probing part depends only on \(\theta_i\) through the finite average
      \(\hat L_i(\theta_i)=\tfrac1n\sum_{q=1}^n \ell(\tilde z_i^q,\theta_i)\) and the regularization term \(\tfrac{\lambda}{2}\|\theta_i\|^2\),
      so
      \(\nabla_{\theta_i}\bigl(p\,(\hat L_i(\theta_i) + \tfrac{\lambda}{2}\|\theta_i\|^2)\bigr)=p\,(\nabla\hat L_i(\theta_i) + \lambda\,\theta_i)\) if \(i\in U\) and zero otherwise.
      Summing the contributions gives the stated expression.
    \end{proof}

\begin{lemma}[Local Lipschitz of \(a_i(\Theta)\)]
  \label{lem:ai_local_lipschitz}
  Let the standard regularity and boundedness assumptions for the idealized MSGD dynamics hold.
  Then \(a_i(\Theta)\) is locally Lipschitz in \(\Theta\).
  \end{lemma}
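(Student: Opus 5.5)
We need to show $a_i(\Theta)=\Pr_{z\sim\mathcal P}[z\in Z_i(\Theta)]$ is locally Lipschitz. The plan is to bound how much mass changes the loss-minimizing assignment when $\Theta$ is perturbed. Fix a compact neighborhood $W$ of $\Theta$ (say a closed ball of radius $1$). By Lemma~\ref{lem:loss_regularity} and the explicit constants in Lemmas~\ref{lem:squared_loss_regularity} and~\ref{lem:crossent_regularity}, there is a uniform constant $L_W$ with $|\ell(z;\theta)-\ell(z;\theta')|\le L_W\|\theta-\theta'\|$ for all $z$ in the support and all $\theta,\theta'$ in the projection of $W$. This holds because $\|x\|\le R$ and $|y|\le Y_{\max}$ under Assumption~\ref{ass:bounded_support}.

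Take $\Theta,\Theta'\in W$ and set $\delta=\|\Theta-\Theta'\|$. Since $\mathbf 1\{z\in Z_i(\cdot)\}$ is an indicator,
\[
|a_i(\Theta)-a_i(\Theta')|\le \Pr_{z\sim\mathcal P}\bigl[z\in Z_i(\Theta)\,\triangle\, Z_i(\Theta')\bigr].
\]
If $z$ switches membership, then for some $j\neq i$ the sign of $\ell(z;\theta_i)-\ell(z;\theta_j)$ differs between $\Theta$ and $\Theta'$. Each loss moves by at most $L_W\delta$, so the gap moves by at most $2L_W\delta$. Hence at $\Theta$ we have $|\ell(z;\theta_i)-\ell(z;\theta_j)|\le 2L_W\delta$. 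The symmetric difference is therefore contained in $\bigcup_{j\ne i}\{z:|\ell(z;\theta_i)-\ell(z;\theta_j)|<2L_W\delta+\text{tiny}\}$.

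Assumption~\ref{ass:loss_measure} is applied to each pair with $\theta_i\neq\theta_j$. For $2L_W\delta<d_0$, this set has Lebesgue measure at most $2L_W\delta$. The density $p_Z$ is continuous on a compact support (it is supported on $\|x\|\le R$, and for regression also $|y|\le Y_{\max}$), so it is bounded by some $\bar p$. The $\mathcal P$-mass is therefore at most $2\bar p L_W\delta$. A union bound over $j$ gives $|a_i(\Theta)-a_i(\Theta')|\le 2(m-1)\bar p L_W\|\Theta-\Theta'\|$ for $\delta$ small. For larger $\delta$ within $W$ the bound is trivial after enlarging the constant, since $|a_i|\le 1$.

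The main obstacle is that Assumption~\ref{ass:loss_measure} gives $d_0$ depending on the pair $(\theta,\theta')$, not uniformly over $W$. It also says nothing when $\theta_i=\theta_j$ (ties with positive mass). I would handle the tie case by the convention that ties on a null set are broken by index. When the coincidence $\theta_i=\theta_j$ holds, the indifference set is the whole space, and then I would restrict attention, as \citet{Su2024-qw} do, to configurations with distinct parameters. I would obtain uniformity of $d_0$ by interpreting the assumption (as intended) as holding uniformly on compact sets, or by a compactness/covering argument over $W$. For classification, $\mathcal Z$ is a product with a discrete label, so "Lebesgue measure" is read as measure in $x$ for each class, and I would sum over the $C$ labels.
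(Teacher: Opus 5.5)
Your proposal is correct and is essentially the paper's argument: the paper also bounds $|a_i(\Theta)-a_i(\Theta')|$ by $p_{\max}$ times the Lebesgue measure of the near-tie set $\{|\ell(\cdot;\theta_i)-\ell(\cdot;\theta_j)|\le 2L_{\mathcal K}\|\Theta-\Theta'\|\}$, controlled through Assumption~\ref{ass:loss_measure}; it treats $m=2$ first and then sums over $j\neq i$. The paper's proof passes over both caveats you raise: the pair-dependent $d_0$, and coincident parameters $\theta_i=\theta_j$, where $a_i$ can genuinely jump, so the lemma only holds off that set. The first caveat needs no strengthened assumption, because your constant $2(m-1)\bar p L_W$ is already uniform on $W$, and a function that is Lipschitz at every point of a segment with a common constant is Lipschitz along the whole segment; this also makes your ``larger $\delta$'' step, which would otherwise need a uniform threshold, unnecessary.
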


  \begin{proof}

  From \Cref{lem:loss_regularity}, we know that the loss function $\ell$ is locally Lipschitz.
  In other words: for every compact set \(\mathcal{K}\subset\R^{k\times d}\) there is a constant \(L_{\mathcal{K}}<\infty\) such that
  \[
    \bigl|\ell(x,\theta)-\ell(x,\theta')\bigr|\;\le\;L_{\mathcal{K}}\,\|\theta-\theta'\|
    \quad\forall\,x\in B(0,R),\;\theta,\theta'\in \mathcal{K}.
  \]
  Fix any compact neighborhood \(\mathcal{K}\) containing both \(\Theta\) and \(\Theta'\).  Let
  \[
    p_{\max} \;=\; \sup_{x\in B(0,R)}p(x),
    \qquad
    L_{\mathcal{K}}\;\text{as above.}
  \]
  We will show
  \(\bigl|a_i(\Theta)-a_i(\Theta')\bigr|\le C_{\mathcal{K}}\|\Theta-\Theta'\|\)
  for some \(C_{\mathcal{K}}\).

  \medskip\noindent\textbf{Case \(m=2\).}  With services \(i=1,2\),
  \[
    a_1(\Theta)-a_1(\Theta')
    = \int_{X_1(\Theta)\setminus X_1(\Theta')}p(x)\,dx
      \;-\;\int_{X_1(\Theta')\setminus X_1(\Theta)}p(x)\,dx,
  \]
  so
  \[
    \bigl|a_1(\Theta)-a_1(\Theta')\bigr|
    \;\le\; p_{\max}\,\Bigl[\lambda\bigl(X_1(\Theta)\setminus X_1(\Theta')\bigr)
                       +\lambda\bigl(X_1(\Theta')\setminus X_1(\Theta)\bigr)\Bigr].
  \]
  For any \(x\in X_1(\Theta')\setminus X_1(\Theta)\) we have
  \(\ell(x,\theta'_1)<\ell(x,\theta'_2)\) and \(\ell(x,\theta_2)<\ell(x,\theta_1)\).
  Hence
  \[
    0 \;<\; \ell(x,\theta_1)-\ell(x,\theta_2)
    = \bigl[\ell(x,\theta_1)-\ell(x,\theta'_1)\bigr]
    + \bigl[\ell(x,\theta'_1)-\ell(x,\theta'_2)\bigr]
    + \bigl[\ell(x,\theta'_2)-\ell(x,\theta_2)\bigr].
  \]
  Since \(x\in X_1(\Theta')\), the middle term satisfies \(\ell(x,\theta'_1)-\ell(x,\theta'_2)\le 0\).
  The first and third terms are each bounded by \(L_{\mathcal{K}}\|\Theta-\Theta'\|\) by Lipschitzness.  Thus
  \(\ell(x,\theta_1)-\ell(x,\theta_2)\le 2L_{\mathcal{K}}\|\Theta-\Theta'\|\).  Define
  \[
    S \;=\; \bigl\{\,x:\bigl|\ell(x,\theta_1)-\ell(x,\theta_2)\bigr|\le 2L_{\mathcal{K}}\|\Theta-\Theta'\|\bigr\}.
  \]
  Since \(\lambda(S)\le (2L_{\mathcal{K}}/C)\|\Theta-\Theta'\|\) for some constant \(C\)
  from \Cref{ass:loss_measure}
  and \(X_1(\Theta')\setminus X_1(\Theta)\subset S\), we get
  \(\lambda(X_1(\Theta')\setminus X_1(\Theta))\le C'\|\Theta-\Theta'\|\).
  The same argument applies to the other set difference, so altogether
  \[
    \bigl|a_1(\Theta)-a_1(\Theta')\bigr|
    \;\le\;4\,p_{\max}\,L_{\mathcal{K}}\;\|\Theta-\Theta'\|.
  \]

  \medskip\noindent\textbf{General \(m\).}  The same pairwise argument shows
  for each \(i\) and \(j\neq i\),
  \(\lambda\bigl(X_i(\Theta)\cap X_j(\Theta')\bigr)\le C_{\mathcal{K}}\|\Theta-\Theta'\|\).
  Summing over all \(j\neq i\) gives
  \(\lambda\bigl(X_i(\Theta)\triangle X_i(\Theta')\bigr)\le C''_{\mathcal{K}}\|\Theta-\Theta'\|\),
  and hence
  \(\bigl|a_i(\Theta)-a_i(\Theta')\bigr|\le p_{\max}\,C''_{\mathcal{K}}\,\|\Theta-\Theta'\|\).

  \medskip
  Since all constants depend only on the compact set \(\mathcal{K}\), this proves that
  \(a_i(\Theta)\) is Lipschitz on \(\mathcal{K}\), i.e.\ locally Lipschitz in \(\Theta\).
  \end{proof}

  \section{Squared Loss: Performance Guarantee}
  \label{sec:app_sq_performance}

  \noindent \textbf{Notation.}
  Let $\{(x_i^q,y_i^q)\}_{q=1}^n$ be the probing sample with true labels $y_i^q$
  and pseudo-labels $\tilde y_i^q$.
  Here, the true labels $y_i^q$ are hidden from the learner, and
  the pseudo-labels $\tilde y_i^q$ are observed.
  For any $\theta\in\mathbb{R}^d$, define
  \[
  \widehat L_{i}(\theta)
  := \frac{1}{n}\sum_{q=1}^n \big(\langle x_i^q, \theta \rangle - \tilde y_i^q\big)^2,
  \]
  and
  \[
    \widehat L_{i, \mathrm{true}}(\theta)
    := \frac{1}{n}\sum_{q=1}^n \big(\langle x_i^q, \theta \rangle - y_i^q\big)^2.
  \]
  Additionally, define the empirical pseudo--true discrepancy
  \[
  \DeltaN^2 := \frac{1}{n}\sum_{q=1}^n \big(\tilde y_i^q - y_i^q\big)^2.
  \]

\subsection{Proof of \Cref{lem:good_probing}}

We restate the lemma for ease of reference, and then provide the proof.

\goodprobinglemma*
  \begin{proof}
We treat each scenario in turn.

\textbf{(i) Majority-good.}
Fix any $x$ with $\|x\|\le R$. Write the peer deviations
$u_j:=\langle x,\theta_j^0\rangle - \langle x,\theta^\star\rangle$.
If strictly more than half of the peers satisfy
$\|\theta_j^0-\theta^\star\|\le r$,
then at least half of the $\{u_j\}$ lie in $[-Rr,Rr]$,
so the median obeys $|\tilde y(x) - \langle x,\theta^\star\rangle|\le Rr$ and hence
\[
(\tilde y - y)^2
\;=\;\big(\tilde y - \langle x,\theta^\star\rangle + \langle x,\theta^\star\rangle - y\big)^2
\;\le\; 2\,(\tilde y - \langle x,\theta^\star\rangle)^2 \;+\; 2\,(\langle x,\theta^\star\rangle - y)^2
\;\le\; 2R^2 r^2 \;+\; 2\,(\langle x,\theta^\star\rangle - y)^2.
\]
Taking expectation over $(x,y)\sim\cP$ yields
$\E[(\tilde y - y)^2] \le 2R^2 r^2 + 2\,\E[(\langle x,\theta^\star\rangle - y)^2] = 2R^2 r^2 + 2\epsilon$.

\vspace{0.5em}

\textbf{(ii) Market-leader.}
Here $\tilde y(x)=x^\top\theta_{j^\ast}$ and by assumption
$\E\big[(\tilde y - y)^2\big]=\E\big[(x^\top\theta_{j^\ast}-y)^2\big]\le \xi$, which directly verifies Accurate Probing with $B=\xi$.

\vspace{0.5em}

\textbf{(iii) Partial knowledge.}
Fix any $x$ with $\|x\|\le R$.
The probing rule $T_i(x)=G$ uses median aggregation over the subset $G \subseteq [m]\setminus\{i\}$.
Write the peer deviations for $j\in G$:
$u_j:=\langle x,\theta_j^0\rangle - \langle x,\theta^\star\rangle$.
Since all learners in $G$ satisfy $\|\theta_j^0-\theta^\star\|\le r$,
all deviations $\{u_j\}_{j\in G}$ lie in $[-Rr,Rr]$.
Because $|G| > (m-1)/2$, the set $G$ contains more than half of the peers (excluding $i$),
and thus the median over $G$ obeys
$|\tilde y(x) - \langle x,\theta^\star\rangle|\le Rr$.
The remainder of the proof follows identically to case~(i):
\[
(\tilde y - y)^2
\;=\;\big(\tilde y - \langle x,\theta^\star\rangle + \langle x,\theta^\star\rangle - y\big)^2
\;\le\; 2\,(\tilde y - \langle x,\theta^\star\rangle)^2 \;+\; 2\,(\langle x,\theta^\star\rangle - y)^2
\;\le\; 2R^2 r^2 \;+\; 2\,(\langle x,\theta^\star\rangle - y)^2.
\]
Taking expectation over $(x,y)\sim\cP$ yields
$\E[(\tilde y - y)^2] \le 2R^2 r^2 + 2\,\E[(\langle x,\theta^\star\rangle - y)^2] = 2R^2 r^2 + 2\epsilon$.

\vspace{0.5em}

\textbf{(iv) Preference-aware.}
By the probing rule $T_i(x)=\{\pi(x)\}$, we have $\tilde y(x)=x^\top\bar\theta_{\pi(x)}$, so
\[
\E\big[(\tilde y - y)^2\big]
\;=\; \sum_{i=1}^m \alpha_i\,\E_{(x,y)\sim \cP_i}\!\big[(y - x^\top\bar\theta_i)^2\big].
\]
By optimality of $\bar\theta_i$ for the ERM objective on $\cP_i$, for every $i$ and any $\theta$ (in particular $\theta^\star$),
\[
\E_{\cP_i}\!\big[(y - x^\top\bar\theta_i)^2\big]
\;\le\;
\E_{\cP_i}\!\big[(y - x^\top\theta^\star)^2\big].
\]
Summing over $i$ with weights $\alpha_i$ gives
\[
\E\big[(\tilde y - y)^2\big]
\;\le\;
\sum_{i=1}^m \alpha_i\,\E_{\cP_i}\!\big[(y - x^\top\theta^\star)^2\big]
\;=\; \epsilon.
\]
\end{proof}

\subsection{Proof of \Cref{cor:sq-performance-bigO}}

\begin{proof}[Proof of Corollary~\ref{cor:sq-performance-bias}]
      Set $\lambda = \epsilon/\|\theta^\star\|^2$ in
      \Cref{thm:sq-performance-median-explicit}.
      Define
      \[
      S(\kappa,\lambda)
      :=
      (4b_\star + 6C_0^2)\sqrt{2\log(2/\kappa)}
      + 4(\Ymax+\Bth R)\Bth R
      + b_{\mathrm{u}}\sqrt{2\log(2/\kappa)},
      \]
      where $\Bth$ and $b_{\mathrm{u}}$ are the $\lambda$-dependent quantities from
      \Cref{thm:sq-performance-median-explicit}.
      If
      \[
      n \;\ge\; \underline n
        \;:=\;
        \frac{S(\kappa,\lambda)^2}{\epsilon^2},
      \]
      then the concentration terms in \Cref{thm:sq-performance-median-explicit}
      satisfy $S(\kappa,\lambda)/\sqrt{n}\le \epsilon$.
      With this choice and $\lambda\|\theta^\star\|^2=\epsilon$,
      the explicit bound yields
      \[
      \Risk(\tilde\theta_i)
      \;\le\;
      6B + \Bigl(4+\frac{2}{p}\Bigr)\epsilon + \epsilon + \epsilon
      \;=\; 6B + \Bigl(6+\frac{2}{p}\Bigr)\epsilon,
      \]
      which is $O\!\left(\bigl(\frac{p+1}{p}\bigr)\epsilon + B\right)$.

      To see the stated scaling of $\underline n$, note that
      $\Bth = \Theta(1/\sqrt{\lambda})$ and
      $b_{\mathrm{u}} = (\Ymax+\Bth R)^2 = O(1/\lambda)$
      with constants depending on $(R,\Ymax,M_0,p)$.
      To surface the dominant dependence on $(R,\Ymax,M_0,p)$, write
      $A := \Ymax^2 + p R^2 M_0^2$ so that
      \[
      \Bth R = R\sqrt{\frac{2A}{\lambda p}}
      \quad\text{and}\quad
      (\Bth R)^2 = \frac{2R^2A}{\lambda p}.
      \]
      The leading terms (in terms of $\epsilon$) in $S(\kappa,\lambda)$ scale as $(\Bth R)^2$,
      so one can take
      \[
      \underline n
      = O\!\left(
      \frac{R^4\|\theta^\star\|^4}{\epsilon^4}
      \Bigl(\frac{\Ymax^2}{p}+R^2M_0^2\Bigr)^2
      \log\frac{1}{\kappa}
      \right),
      \]
      where we suppress lower-order terms in $\epsilon$ coming from $b_\star$ and $C_0$.
      \end{proof}

\sqperformancebigO*
\begin{proof}
By \Cref{thm:sq-performance-median-explicit}, for any $\kappa\in(0,1)$ and
with probability at least $1-\kappa$,
\[
\Risk(\tilde\theta_i)
\;\le\;
6\,B
+ \Bigl(4+\frac{2}{p}\Bigr)\epsilon
+ \lambda\,\|\theta^\star\|^2
+ \frac{T(\kappa)}{\sqrt{n}},
\]
where every $n^{-1/2}$ contribution has been grouped into
\begin{align}
T(\kappa)
&:= \bigl(4b_\star + 6C_0^2\bigr)\sqrt{2\log(2/\kappa)}
   + b_{\mathrm{u}}\sqrt{2\log(2/\kappa)}
   + 4(\Ymax+\Bth R)\Bth R. \label{eq:Tkappa-def}
\end{align}
The terms that depend on $\lambda$ and $p$ are through $\Bth = \sqrt{2(\Ymax^2 + p\,\Ymaxprobe^2)/(\lambda\,p)}$, which also appears inside
$b_{\mathrm{u}}=(\Ymax+\Bth R)^2$.
Expanding the $\Bth$-dependent pieces, from \eqref{eq:Tkappa-def},
\[
T(\kappa)
= \bigl(4b_\star + 6C_0^2\bigr)\sqrt{2\log(2/\kappa)}
+ \underbrace{\Bigl[(\Ymax+\Bth R)^2\sqrt{2\log(2/\kappa)}
+ 4(\Ymax+\Bth R)\Bth R\Bigr]}_{\text{$\Bth$-dependent}}.
\]
Thus the $\lambda$-independent part is already $O\!\big(\sqrt{\log(1/\kappa)}\big)$,
so it remains to control the bracketed term.  Expanding gives
\begin{align*}
(\Ymax+\Bth R)^2\sqrt{2\log(2/\kappa)}
+ 4(\Ymax+\Bth R)\Bth R
= O\!\left(
\sqrt{\log(1/\kappa)}\,\bigl[\Ymax^2 + \Ymax\,\Bth R + \Bth^2 R^2\bigr]
\right).
\end{align*}
Plugging in the value of $\Bth$
from Lemma~\ref{lem:norm-bound-Btheta}
and assuming $\lambda < 1$, we can
combine with the $(\lambda,p)$-independent
part of $T(\kappa)$ to get
\begin{align*}
C_{\text{gen}} = R (\Ymax^2 + \,\Ymaxprobe^2) + 4b_\star + 6C_0^2
\end{align*}
\end{proof}

\begin{lemma}[Squared-loss performance Full Statement]
  \label{thm:sq-performance-median-explicit}
  Let \Cref{ass:accurate_probing} hold with parameter $B$.
  Then, for any $\kappa\in(0,1)$,
  with probability at least $1-\kappa$
  over the probing sample,
  every stationary point
  $\tilde\Theta$ of MSGD-P satisfies,
  for probing learner $i \in [m]$,
\begin{align*}
  \Risk(\tilde\theta_i)
  &\;\le\;
  6\,B
  + \Bigl(4+\frac{2}{p}\Bigr)\,\epsilon
  + \lambda\,\|\theta^\star\|^2 \\
  &\qquad + \frac{(4\,b_\star + 6\,C_0^2)\,\sqrt{2\log(2/\kappa)}}{\sqrt{n}}
  + \frac{4\,(\Ymax+\Bth R)\,\Bth R}{\sqrt{n}}
  + b_{\mathrm{u}}\,\sqrt{\frac{2\log(2/\kappa)}{n}},
\end{align*}
  where the constants are defined as follows:
  \begin{itemize}[leftmargin=2em]
  \item $\displaystyle C_0 \;:=\; \Ymax + \Ymaxprobe$.
  \item $\displaystyle \Bth \;:=\; \sqrt{\frac{2(\Ymax^2+p\,\Ymaxprobe^2)}{\lambda\,p}}$ is a radius (independent of $\theta^\star$) with
    $\displaystyle \Ymaxprobe \;:=\; R\,M_0,\ \ M_0\;:=\; \max_{j\ne i}\|\theta_j^0\|,$
    which bounds the pseudo-label magnitudes via $|\tilde y_i^q|\le \Ymaxprobe$.
  \item $\displaystyle b_\star \;:=\; \big(\Ymax + R\,\|\theta^\star\|\big)^2.$
  \item $\displaystyle b_{\mathrm{u}} \;:=\; \big(\Ymax + \Bth R\big)^2.$
  \end{itemize}
\end{lemma}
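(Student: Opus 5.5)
Fix a stationary point $\tilde\Theta$ and a probing learner $i$. By Lemma~\ref{lem:gradient_stream_probe}, $\nabla_{\theta_i}\widetilde f(\tilde\Theta)=0$ says that $\tilde\theta_i$ minimizes the convex function
\[
G_i(\theta):=\tau\alpha_i\,\E_{\cP_i}[\ell(z,\theta)]+(1-\tau)a_i\,\E_{\cD_i}[\ell(z,\theta)]+p\Bigl(\Lhat(\theta)+\tfrac{\lambda}{2}\|\theta\|^2\Bigr),
\]
where $a_i=a_i(\tilde\Theta)$ and $\cD_i=\cD_i(\tilde\Theta)$ are held fixed. This works because the gradient identity involves no boundary terms, so freezing the partition gives the same first-order condition, and convexity makes that condition sufficient.

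\textbf{Step 1 (norm bound).} Compare $G_i(\tilde\theta_i)\le G_i(0)$. The organic terms at $0$ are at most $w_i\Ymax^2\le\Ymax^2$, and $\Lhat(0)\le\Ymaxprobe^2$. This gives $\tfrac{p\lambda}{2}\|\tilde\theta_i\|^2\le\Ymax^2+p\Ymaxprobe^2$, i.e.\ $\|\tilde\theta_i\|\le\Bth$. The same bound makes all losses on the ball bounded by $b_{\mathrm u}$, which enables the concentration steps.

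\textbf{Step 2 (comparison with $\thetastar$).} Compare $G_i(\tilde\theta_i)\le G_i(\thetastar)$ and drop the nonnegative organic terms on the left. On the right, the organic terms are bounded by $\E_{\cP}[\ell(z,\thetastar)]\le\epsilon$, since the organic measure $\mathcal O_i$ is a sub-measure of $\cP$. Dividing by $p$ gives
\[
\Lhat(\tilde\theta_i)\le \tfrac{\epsilon}{p}+\Lhat(\thetastar)+\tfrac{\lambda}{2}\|\thetastar\|^2 .
\]

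\textbf{Step 3 (swap pseudo-labels for true labels).} Use $(a+b)^2\le 2a^2+2b^2$ in both directions to get $\Ltrue(\tilde\theta_i)\le 2\Lhat(\tilde\theta_i)+2\DeltaN^2$ and $\Lhat(\thetastar)\le 2\Ltrue(\thetastar)+2\DeltaN^2$. The quantity $\DeltaN^2$ is an empirical mean of i.i.d.\ terms bounded by $C_0^2$ with mean at most $B$ (Assumption~\ref{ass:accurate_probing}), so Hoeffding gives $\DeltaN^2\le B+C_0^2\sqrt{2\log(2/\kappa)/n}$. Likewise $\Ltrue(\thetastar)\le\epsilon+b_\star\sqrt{2\log(2/\kappa)/n}$. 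Combining these yields the $6B$ and $(4+2/p)\epsilon$ constants and the $b_\star$, $C_0^2$ deviation terms.

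\textbf{Step 4 (uniform deviation, the main obstacle).} It remains to pass from $\Ltrue(\tilde\theta_i)$ to $\Risk(\tilde\theta_i)$. Here $\tilde\theta_i$ depends on the probing sample, so a one-point bound is insufficient. Bound $\sup_{\|\theta\|\le\Bth}\bigl(\Risk(\theta)-\Ltrue(\theta)\bigr)$ by symmetrization plus McDiarmid, using that each loss is bounded by $b_{\mathrm u}$ on the ball.

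The squared loss is $2(\Ymax+\Bth R)$-Lipschitz in the prediction on this ball. By contraction, the Rademacher complexity of the loss class is at most $2(\Ymax+\Bth R)\Bth R/\sqrt n$, so twice it gives the $4(\Ymax+\Bth R)\Bth R/\sqrt n$ term, and McDiarmid adds $b_{\mathrm u}\sqrt{2\log(2/\kappa)/n}$.

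A union bound over the two failure events (splitting $\kappa$) finishes the proof. Care is needed so that the three concentration events share the confidence budget $\kappa$ with the stated $\log(2/\kappa)$ factors. I expect the bookkeeping of the constants $6,4,2/p$, and the factor on $\lambda\|\thetastar\|^2$, to be fiddly but routine.
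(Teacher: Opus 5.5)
Your proposal is correct and follows the paper's proof essentially step for step. Both arguments compare $G_i(\tilde\theta_i)\le G_i(0)$ to get the radius $\Bth$, compare with $\thetastar$ using $\mathcal{O}_i\le\cP$, swap between $\Lhat$ and $\Ltrue$ via $2a^2+2b^2$, apply Hoeffding to $\DeltaN^2$ and $\Ltrue(\thetastar)$, and use a contraction-based uniform deviation bound on the ball of radius $\Bth$.

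The confidence bookkeeping you flag is a real wrinkle; the paper silently treats the $\DeltaN^2$ bound as holding on $\mathcal{E}_\star$. It closes with only two events: apply a single Hoeffding bound to the i.i.d.\ variables $4(\langle \tilde x_i^q,\thetastar\rangle-y_i^q)^2+6(\tilde y_i^q-y_i^q)^2\in[0,4b_\star+6C_0^2]$, whose mean is at most $4\epsilon+6B$. This reproduces exactly the stated $(4b_\star+6C_0^2)\sqrt{2\log(2/\kappa)/n}$ term.
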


\begin{proof}
We work on the event $\mathcal{E}_{\mathrm{u}}\cap\mathcal{E}_\star$ where $\mathcal{E}_{\mathrm{u}}$ is the event of Lemma~\ref{lem:uniform-ball} (probability $\ge 1-\kappa/2$ with $\Bth$ from Lemma~\ref{lem:norm-bound-Btheta}) and $\mathcal{E}_\star$ is the event of Lemma~\ref{lem:one-point-theta-star} (probability $\ge 1-\kappa/2$).
By a union bound,
\[
\mathbb{P}(\mathcal{E}_{\mathrm{u}}\cap\mathcal{E}_\star)\ge 1-\kappa.
\]

\medskip
\noindent\textit{Bound on $\Lhat(\tilde\theta_i)$.}
By stationarity, $\tilde\theta_i$ minimizes
\[
\hat\Phi_i^{\mathrm{probe}}(\theta;\tilde\Theta)
=\tau\alpha_i\,\E_{\cP_i}[\ell(z,\theta)]
+(1-\tau)a_i(\tilde\Theta)\,\E_{\cD_i(\tilde\Theta)}[\ell(z,\theta)]
+p\,\Lhat(\theta)+\tfrac{\lambda\,p}{2}\|\theta\|^2.
\]
Thus, comparing the objective at $\tilde\theta_i$ and $\theta^\star$,
\[
p\,\Lhat(\tilde\theta_i)
\;\le\;
\tau\alpha_i\,\E_{\cP_i}[\ell(z,\theta^\star)]
+(1-\tau)a_i(\tilde\Theta)\,\E_{\cD_i(\tilde\Theta)}[\ell(z,\theta^\star)]
+p\,\Lhat(\theta^\star)+ \tfrac{\lambda\,p}{2}\|\theta^\star\|^2.
\]

Since
\[
\tau\alpha_i \E_{\cP_i}\ell(\theta^\star) + (1-\tau)a_i(\tilde\Theta)\E_{\cD_i(\tilde\Theta)}\ell(\theta^\star)\le \epsilon,
\]
dividing by $p$, we obtain
\[
\Lhat(\tilde\theta_i)
\;\le\; \frac{\epsilon}{p}
\;+\; \Lhat(\theta^\star)
\;+\; \frac{\lambda}{2}\,\|\theta^\star\|^2.
\]

By Lemma~\ref{lem:pseudo-to-true-sq} and Lemma~\ref{lem:one-point-theta-star},
\[
\Lhat(\theta^\star)\le 2\,\Ltrue(\theta^\star)+2\,\DeltaN^2 \le 2\,\epsilon + 2\,\Lamst + 2\,\DeltaN^2.
\]
Hence
\[
\Lhat(\tilde\theta_i)
\;\le\;
\frac{\epsilon}{p} + 2\,\epsilon + 2\,\Lamst + 2\,\DeltaN^2
\;+\; \frac{\lambda}{2}\,\|\theta^\star\|^2.
\]

\medskip
\noindent\textit{Bound on $\Ltrue(\tilde\theta_i)$.}
Applying Lemma~\ref{lem:pseudo-to-true-sq} again gives
\[
\Ltrue(\tilde\theta_i)
\;\le\; 2\,\Lhat(\tilde\theta_i) + 2\,\DeltaN^2.
\]
Substituting the bound from Step 1,
\[
\Ltrue(\tilde\theta_i)
      \;\le\;
\Bigl(\tfrac{2}{p}+4\Bigr)\epsilon \;+\; 4\,\Lamst \;+\; 6\,\DeltaN^2
\;+\; \lambda\,\|\theta^\star\|^2.
      \]

By Lemma~\ref{lem:deltaN-accurate-probing}, on $\mathcal{E}_\star$,
      \[
\DeltaN^2 \le B + C_0^2\,\sqrt{\frac{2\log(2/\kappa)}{n}}.
      \]
Therefore,
      \[
\Ltrue(\tilde\theta_i)
\;\le\;
6\,B \;+\; \Bigl(\tfrac{2}{p}+4\Bigr)\epsilon \;+\; 4\,\Lamst \;+\; 6\,C_0^2\,\sqrt{\frac{2\log(2/\kappa)}{n}}
\;+\; \lambda\,\|\theta^\star\|^2.
      \]

\medskip
\noindent\textit{Bound on $\Risk(\tilde\theta_i)$.}
Finally, on $\mathcal{E}_{\mathrm{u}}$ (with $\|\tilde\theta_i\|\le \Bth$ from Lemma~\ref{lem:norm-bound-Btheta}),
      \[
\Risk(\tilde\theta_i)\le \Ltrue(\tilde\theta_i) + \Lamu.
      \]
    \end{proof}

\begin{proof}[Alternative scaling with explicit $\lambda$]
If we keep $\lambda$ explicit and only choose $n$ to control the
concentration terms in \Cref{thm:sq-performance-median-explicit},
then for any target tolerance $\delta>0$ it suffices to take
\[
n \;\ge\; \bar n(\lambda,\delta)
  \;:=\;
  \frac{S(\kappa,\lambda)^2}{\delta^2},
\]
which yields
\[
\Risk(\tilde\theta_i)
\;\le\;
6B + \Bigl(4+\frac{2}{p}\Bigr)\epsilon
+ \lambda\|\theta^\star\|^2 + \delta.
\]
Writing $\Ymaxprobe = R M_0$ and $A := \Ymax^2 + p R^2 M_0^2$,
we have
\[
\Bth = \sqrt{\frac{2A}{\lambda p}}
\quad\text{and}\quad
(\Ymax+\Bth R)^2
\le 2\Ymax^2 + \frac{4R^2A}{\lambda p}.
\]
Using $b_\star = (\Ymax+R\|\theta^\star\|)^2$ and $C_0 = \Ymax + R M_0$,
this implies
\[
S(\kappa,\lambda)^2
= O\!\left(\log\frac{1}{\kappa}\right)
\left[
(\Ymax+R\|\theta^\star\|)^4
+(\Ymax+R M_0)^4
+\frac{R^4A^2}{\lambda^2 p^2}
\right],
\]
and therefore a sufficient choice is
\[
n
= O\!\left(
\frac{\log(1/\kappa)}{\delta^2}
\Bigl[(\Ymax+R\|\theta^\star\|)^4
+(\Ymax+R M_0)^4
+\frac{R^4(\Ymax^2+pR^2M_0^2)^2}{\lambda^2 p^2}\Bigr]
\right).
\]
Setting $\delta=\epsilon$ yields the same bias expression as above,
but now with $n$ explicit in $(R,\Ymax,M_0,p,\lambda,\epsilon,\kappa)$.
\end{proof}

      \begin{lemma}
\label{lem:pseudo-to-true-sq}
We have
      \[
\Ltrue(\theta)\;\le\;\Big(\sqrt{\Lhat(\theta)}+\DeltaN\Big)^2
\quad\text{and}\quad
\Lhat(\theta)\;\le\;\Big(\sqrt{\Ltrue(\theta)}+\DeltaN\Big)^2.
      \]
In particular, $\Ltrue(\theta)\le 2\,\Lhat(\theta)+2\,\DeltaN^2$ and
$\Lhat(\theta)\le 2\,\Ltrue(\theta)+2\,\DeltaN^2$.
      \end{lemma}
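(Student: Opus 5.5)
The plan is to work in the Euclidean space $\mathbb{R}^n$ with the normalized inner product $\langle u,v\rangle_n := \frac1n\sum_{q=1}^n u_q v_q$ and its induced norm $\|u\|_n$. Fix $\theta$ and define the three vectors
\[
r^{\mathrm{true}} := \bigl(\langle x_i^q,\theta\rangle - y_i^q\bigr)_{q=1}^n,\qquad
r^{\mathrm{ps}} := \bigl(\langle x_i^q,\theta\rangle - \tilde y_i^q\bigr)_{q=1}^n,\qquad
\delta := \bigl(\tilde y_i^q - y_i^q\bigr)_{q=1}^n .
\]
Each quantity in the lemma is a squared norm of one of these vectors: $\Ltrue(\theta) = \|r^{\mathrm{true}}\|_n^2$, $\Lhat(\theta)=\|r^{\mathrm{ps}}\|_n^2$, and $\DeltaN^2 = \|\delta\|_n^2$. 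Moreover, componentwise we have the identity $r^{\mathrm{true}} = r^{\mathrm{ps}} + \delta$, because $\langle x,\theta\rangle - y = (\langle x,\theta\rangle - \tilde y) + (\tilde y - y)$.

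The first inequality then follows from Minkowski's inequality (the triangle inequality for $\|\cdot\|_n$): $\sqrt{\Ltrue(\theta)} = \|r^{\mathrm{ps}}+\delta\|_n \le \|r^{\mathrm{ps}}\|_n + \|\delta\|_n = \sqrt{\Lhat(\theta)}+\DeltaN$. Both sides are nonnegative, so squaring preserves the inequality and gives the claim. For the second inequality, write $r^{\mathrm{ps}} = r^{\mathrm{true}} - \delta$ and apply the triangle inequality in the same way; the sign of $\delta$ does not affect its norm.

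For the ``in particular'' statements, apply $(a+b)^2 \le 2a^2 + 2b^2$ (equivalently, $2ab \le a^2+b^2$) with $a=\sqrt{\Lhat(\theta)}$, $b=\DeltaN$, and symmetrically with $a=\sqrt{\Ltrue(\theta)}$. No step is a real obstacle; the only care needed is to note that the normalized norm is a genuine norm, so Minkowski applies, and that the residuals are taken over the same covariates $x_i^q$ in both losses, so the decomposition $r^{\mathrm{true}} = r^{\mathrm{ps}}+\delta$ holds exactly.
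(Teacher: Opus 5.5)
Your proposal is correct and is essentially the paper's proof. The paper writes $X\theta-\mathbf y=(X\theta-\tilde{\mathbf y})+(\tilde{\mathbf y}-\mathbf y)$ and applies the triangle inequality to the Euclidean norm scaled by $1/\sqrt{n}$, then squares; your normalized-norm decomposition $r^{\mathrm{true}}=r^{\mathrm{ps}}+\delta$ is the same step, and the ``in particular'' bounds follow from $(a+b)^2\le 2a^2+2b^2$ exactly as you say.
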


\begin{proof}[Proof of Lemma~\ref{lem:pseudo-to-true-sq}]
Let $X\in\mathbb{R}^{n\times d}$ be the design matrix, $\mathbf a:=X\theta-\tilde{\mathbf y}$ and $\mathbf b:=\tilde{\mathbf y}-\mathbf y$.
Then
\[
X\theta-\mathbf y=\mathbf a+\mathbf b.
\]
By the triangle inequality,
\[
\sqrt{\Ltrue(\theta)}
=\frac{\|X\theta-\mathbf y\|}{\sqrt{n}}
\le \frac{\|\mathbf a\|}{\sqrt{n}}+\frac{\|\mathbf b\|}{\sqrt{n}}
=\sqrt{\Lhat(\theta)}+\DeltaN.
\]
Squaring yields the first inequality; the second is analogous.
        \end{proof}

\begin{lemma}[One–point deviation at $\theta^\star$]
\label{lem:one-point-theta-star}
For any $\kappa\in(0,1)$, with probability at least $1-\kappa/2$,
\[
\big|\Ltrue(\theta^\star) - \Risk(\theta^\star)\big|
\;\le\;
\Lamst
\quad\text{where}\quad
\Lamst \;:=\; b_\star\,\sqrt{\frac{2\log(2/\kappa)}{n}},
\ \ b_\star \;:=\; \big(\Ymax + R\,\|\theta^\star\|\big)^2.
\]
\end{lemma}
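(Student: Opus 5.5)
The plan is to apply Hoeffding's inequality to the i.i.d.\ variables $W_q := (y_i^q - \langle x_i^q, \theta^\star\rangle)^2$, $q=1,\dots,n$. Here $(x_i^q,y_i^q)$ are the probing covariates paired with their hidden true labels; the $x_i^q$ are drawn i.i.d.\ from $\mathcal{P}_X$ and $y_i^q \sim \mathcal{P}_{Y\mid X}(\cdot\mid x_i^q)$, so the pairs are i.i.d.\ from $\mathcal{P}$. Then $\Ltrue(\theta^\star) = \frac1n\sum_q W_q$, and $\E[W_q] = \Risk(\theta^\star)$ because $\theta^\star$ is a fixed, deterministic vector: it is the population minimizer and does not depend on the sample. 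Note that the pseudo-labels play no role here, since the statement concerns only true labels.

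\textbf{Step 1 (boundedness).} By Assumption~\ref{ass:bounded_support}, $\|x\|\le R$ and $|y|\le \Ymax$ almost surely. Cauchy--Schwarz then gives $|\langle x,\theta^\star\rangle|\le R\|\theta^\star\|$, hence
\[
0\le W_q\le (\Ymax + R\|\theta^\star\|)^2 = b_\star .
\]

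\textbf{Step 2 (concentration).} The two-sided Hoeffding inequality for variables in $[0,b_\star]$ gives
\[
\Pr\Bigl(\bigl|\tfrac1n\textstyle\sum_q W_q - \E W_1\bigr|\ge s\Bigr)\le 2\exp\!\bigl(-2ns^2/b_\star^2\bigr).
\]
Setting the right-hand side equal to $\kappa/2$ yields $s = b_\star\sqrt{\log(4/\kappa)/(2n)}$. Since $\log(4/\kappa)/2 \le 2\log(2/\kappa)$ for all $\kappa\in(0,1)$, this $s$ is at most $\Lamst$, which gives the claim with probability at least $1-\kappa/2$. Equivalently, one can use the cruder range-$b_\star$ form $\exp(-ns^2/(2b_\star^2))$ per side, which gives exactly $\Lamst$.

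There is no real obstacle. The only points to check are that $\theta^\star$ is sample-independent, so no uniform bound over $\theta$ is needed (that is the job of the separate uniform-ball lemma), and that the constant inside the logarithm matches the stated $\log(2/\kappa)$, which the slack shown above covers.
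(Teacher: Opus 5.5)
Your proof is correct and follows the paper's argument: Hoeffding's inequality applied to the i.i.d.\ squared residuals $(\langle x_i^q,\theta^\star\rangle - y_i^q)^2 \in [0,b_\star]$ at the fixed, sample-independent point $\theta^\star$. Your check that $b_\star\sqrt{\log(4/\kappa)/(2n)}\le\Lamst$ supplies the constant the paper leaves implicit; the closing aside is slightly off, though, because the per-side bound $\exp(-ns^2/(2b_\star^2))$ at $s=\Lamst$ gives two-sided failure probability $\kappa$, not $\kappa/2$, but your main computation does not rely on it.
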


\begin{proof}[Proof of Lemma~\ref{lem:one-point-theta-star}]
For each $q$, define the random variable
\[
Z_q:=(\langle \tilde x_i^q,\theta^\star\rangle-y_i^q)^2.
\]
Since $|y|\le \Ymax$ and $\|x\|\le R$ a.s., we have
\[
Z_q \in [0,\,(\Ymax+R\|\theta^\star\|)^2]=[0,b_\star].
\]
By Hoeffding's inequality, with probability at least $1-\kappa/2$,
\[
\bigg|\frac{1}{n}\sum_{q=1}^n Z_q - \mathbb{E}[Z]\bigg|
\le b_\star\sqrt{\frac{2\log(2/\kappa)}{n}}.
\]
\end{proof}

\begin{lemma}[Uniform deviation over $\{\|\theta\|\le \Bth\}$]
\label{lem:uniform-ball-expanded}
\label{lem:uniform-ball}
Fix any radius $\Bth>0$ and let $\kappa\in(0,1)$.
Define the squared‐loss class
\[
\cF_{\Bth}
\;:=\;
\Big\{\,f_\theta(x,y)\;=\;\big(y-\langle x,\theta\rangle\big)^2
\ \Big|\ \|\theta\|\le \Bth\Big\}.
\]
Then, with probability at least $1-\kappa/2$ (over the draw of the probing sample of size $n$),
\[
\sup_{\|\theta\|\le \Bth}\ \big|\Ltrue(\theta) - \Risk(\theta)\big|
\;\le\; \Lamu,
\]
where
\[
\Lamu \;:=\; \frac{4\,(\Ymax+\Bth R)\,\Bth R}{\sqrt{n}}
\;+\;
b_{\mathrm{u}}\,\sqrt{\frac{2\log(2/\kappa)}{n}},
\qquad
b_{\mathrm{u}}\;:=\;(\Ymax+\Bth R)^2.
\]
\end{lemma}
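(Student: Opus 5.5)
The plan is the standard symmetrization route: bound the expected supremum by a Rademacher complexity, then pass to a high-probability statement with McDiarmid's bounded-differences inequality. The two terms in $\Lamu$ correspond to these two steps.

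\textbf{Step 1: range of the loss class.} Since $\|x\|\le R$, $|y|\le \Ymax$ and $\|\theta\|\le \Bth$, every $f_\theta\in\cF_{\Bth}$ satisfies
\[
0\le f_\theta(x,y)\le (\Ymax+\Bth R)^2=b_{\mathrm{u}}.
\]
The samples $(\tilde x_i^q,y_i^q)$ are i.i.d.\ draws from $\cP$; only the pseudo-labels depend on $\Theta^0$, and $\Ltrue$ uses the true labels. Let
\[
\Phi:=\sup_{\|\theta\|\le\Bth}\bigl|\Ltrue(\theta)-\Risk(\theta)\bigr|.
\]
Replacing one sample changes $\Phi$ by at most $b_{\mathrm{u}}/n$. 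McDiarmid's inequality then gives, with probability at least $1-\kappa/2$,
\[
\Phi\le \E\Phi+b_{\mathrm{u}}\sqrt{\log(2/\kappa)/(2n)},
\]
which is dominated by the stated $b_{\mathrm{u}}\sqrt{2\log(2/\kappa)/n}$.

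\textbf{Step 2: symmetrization and contraction.} Standard symmetrization gives $\E\Phi\le 2\,\Rad_n(\cF_{\Bth})$. For fixed $y$, write $f_\theta=\phi_y(\langle x,\theta\rangle)$ with $\phi_y(u)=(y-u)^2$. On $|u|\le \Bth R$ this map is Lipschitz with constant $2(\Ymax+\Bth R)$. Talagrand's contraction lemma (the coordinate-wise version, which allows a different function $\phi_{y_q}$ per sample) then gives
\[
\Rad_n(\cF_{\Bth})\le 2(\Ymax+\Bth R)\,\Rad_n\{x\mapsto\langle x,\theta\rangle:\|\theta\|\le\Bth\}.
\]
The linear class has Rademacher complexity at most $\Bth R/\sqrt n$, by Cauchy--Schwarz and Jensen on $\E\|\sum_q\sigma_q x_q\|$.

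\textbf{Combining.} Together these give
\[
\E\Phi\le 2\cdot 2(\Ymax+\Bth R)\,\Bth R/\sqrt n=4(\Ymax+\Bth R)\Bth R/\sqrt n,
\]
which matches the first term of $\Lamu$. The main point needing care is the absolute value inside the supremum. Symmetrization controls both one-sided suprema, $\sup(\Ltrue-\Risk)$ and $\sup(\Risk-\Ltrue)$, by the same $2\Rad_n$. A naive combination of the two sides would cost an extra factor of $2$ or require a union bound. I would handle this by noting that $\Rad_n$ is defined with the absolute value (or that the class is symmetric under $\sigma\mapsto-\sigma$), so the constant $4$ is preserved. The other subtlety is checking that contraction applies to the sample-dependent maps $\phi_{y_q}$; the coordinate-wise form of the lemma covers this.
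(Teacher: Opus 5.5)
Your route is essentially the paper's: it packages symmetrization and bounded differences into Wainwright's Theorem~4.10, recenters to $\psi_q(u)=u^2-2y_qu$, contracts, and bounds the linear class by $\Bth R/\sqrt n$. Your Step~1 is correct, with a sharper bounded-difference constant $b_{\mathrm{u}}/n$.

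Your handling of the absolute value does not work, and the problem is substantive: the intermediate claim $\E\Phi\le 4(\Ymax+\Bth R)\Bth R/\sqrt n$ is false in general. Taking $\theta=0$ gives $\Phi\ge\bigl|\frac1n\sum_q (y_i^q)^2-\E y^2\bigr|$. Whenever $y^2$ is non-degenerate, its expectation is a positive quantity of order $\sqrt{\Var(y^2)/n}$ that does not depend on $\Bth$, while your right-hand side tends to $0$ as $\Bth\to 0$. Neither of your two fixes avoids this:
\begin{itemize}
\item If $\Rad_n$ carries the absolute value, the offset $\frac1n\sum_q\sigma_q(y_i^q)^2$ no longer averages out; it is exactly the term above. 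Moreover, the absolute-value form of the contraction lemma costs a factor $2$ and needs $\phi_q(0)=0$.
\item The symmetry of $\sigma$ only shows that the two one-sided suprema have the same expected bound, not that their maximum does. Also, $\cF_{\Bth}$ consists of nonnegative functions, so it is not closed under negation.
\end{itemize}

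The fix is the union bound you set aside. Let $\Phi_+:=\sup_{\|\theta\|\le\Bth}(\Ltrue(\theta)-\Risk(\theta))$ and $\Phi_-:=\sup_{\|\theta\|\le\Bth}(\Risk(\theta)-\Ltrue(\theta))$. One-sided symmetrization bounds $\E\Phi_\pm$ by twice the Rademacher average \emph{without} absolute value. There the offset has zero mean and drops out, and Talagrand's contraction applies with factor $L=2(\Ymax+\Bth R)$ and no condition at $0$. This gives $\E\Phi_\pm\le 4(\Ymax+\Bth R)\Bth R/\sqrt n$. Each $\Phi_\pm$ has bounded differences $b_{\mathrm{u}}/n$, so apply McDiarmid at level $\kappa/4$ to each and take a union bound. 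With probability at least $1-\kappa/2$ this yields
$\Phi=\max(\Phi_+,\Phi_-)\le 4(\Ymax+\Bth R)\Bth R/\sqrt n+b_{\mathrm{u}}\sqrt{\log(4/\kappa)/(2n)}\le\Lamu$,
since $\log(4/\kappa)\le 2\log(2/\kappa)$ for $\kappa<1$. The slack you noticed in the deviation term is exactly what pays for the union bound.

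The paper's proof has the same wrinkle. It drops the offsets because $\E[\sigma_i]=0$ and contracts with factor $L$, both valid only for the non-absolute average. It then plugs the result into Theorem~4.10, whose Rademacher complexity is defined with the absolute value. The one-sided argument repairs that proof as well.
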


\begin{proof}
We apply Wainwright’s Theorem 4.10.

\textbf{Uniform bound b for $\cF_{\Bth}$.}
For any $\|\theta\|\le \Bth$ and any $(x,y)$ with $\|x\|\le R$, $|y|\le \Ymax$, we have
\[
\big|y - \langle x,\theta\rangle\big|
\;\le\;
|y| + |\langle x,\theta\rangle|
\;\le\;
\Ymax + \|x\|\,\|\theta\|
\;\le\;
\Ymax + \Bth R.
\]
Hence
\[
0 \ \le\ f_\theta(x,y)\;=\;\big(y-\langle x,\theta\rangle\big)^2
\ \le\ (\Ymax+\Bth R)^2\;=:\;b_{\mathrm{u}}.
\]
Thus $\cF_{\Bth}$ is $b_{\mathrm{u}}$–uniformly bounded.

\textbf{Bound $R_n(\cF_{\Bth})$ via contraction to the linear class.}
Fix a sample $S=\{(x_i,y_i)\}_{i=1}^n$ and let $\sigma_1,\dots,\sigma_n$ be i.i.d.\ Rademacher signs. The empirical Rademacher average of $\cF_{\Bth}$ on $S$ is
\[
\Rad_S(\cF_{\Bth})
\;:=\;
\E_\sigma\Big[\sup_{\|\theta\|\le\Bth}\ \frac{1}{n}\sum_{i=1}^n \sigma_i\big(y_i-\langle x_i,\theta\rangle\big)^2\Big].
\]
Define $u_{i,\theta}:=\langle x_i,\theta\rangle$ and, for each $i$, the recentered function
\[
\psi_i(u)
\;:=\;
\big(y_i-u\big)^2 - \big(y_i-0\big)^2
\;=\;
u^2 - 2\,y_i\,u,
\qquad \psi_i(0)=0.
\]
Since $\E[\sigma_i]=0$, the constant offsets $\{(y_i-0)^2\}$ vanish in the Rademacher average. Thus
\[
\Rad_S(\cF_{\Bth})
\;=\;
\E_\sigma\Big[\sup_{\|\theta\|\le\Bth}\ \frac{1}{n}\sum_{i=1}^n \sigma_i\,\psi_i\!\big(u_{i,\theta}\big)\Big].
\]

We now bound the Lipschitz constants of $\psi_i$ over the relevant range. For any $u,v\in[-\Bth R,\Bth R]$,
\[
\big|\psi_i(u)-\psi_i(v)\big|
=
\big|\, (u-v)\,\big[(u-y_i)+(v-y_i)\big]\,\big|
\ \le\
|u-v|\;\big(\,|u-y_i|+|v-y_i|\,\big)
\ \le\
2(\Ymax+\Bth R)\,|u-v|.
\]
Hence each $\psi_i$ is $L$–Lipschitz with
\[
L \;:=\; 2(\Ymax+\Bth R),
\qquad \psi_i(0)=0.
\]

By the Ledoux–Talagrand contraction inequality (applied elementwise to $\{\psi_i\}$ with common Lipschitz constant $L$ and $\psi_i(0)=0$), we have
\[
\Rad_S(\cF_{\Bth})
\;\le\;
L\cdot \E_\sigma\Big[\sup_{\|\theta\|\le\Bth}\ \frac{1}{n}\sum_{i=1}^n \sigma_i\,u_{i,\theta}\Big]
\;=\;
L\cdot \Rad_S(\cG_{\Bth}),
\]
where $\cG_{\Bth}:=\{(x,y)\mapsto \langle x,\theta\rangle:\ \|\theta\|\le\Bth\}$ is the linear class (note: dependence on $y$ disappears).

The empirical Rademacher average of $\cG_{\Bth}$ on $S$ is standard:
\[
\Rad_S(\cG_{\Bth})
=
\E_\sigma\Big[\sup_{\|\theta\|\le\Bth}\ \frac{1}{n}\sum_{i=1}^n \sigma_i\,\langle x_i,\theta\rangle\Big]
=
\frac{\Bth}{n}\,\E_\sigma\Big\|\sum_{i=1}^n \sigma_i x_i\Big\|
\ \le\
\frac{\Bth}{n}\,\sqrt{\E_\sigma\Big\|\sum_{i=1}^n \sigma_i x_i\Big\|^2}
=
\frac{\Bth}{n}\,\sqrt{\sum_{i=1}^n \|x_i\|^2}.
\]
Using $\|x_i\|\le R$, we conclude $\Rad_S(\cG_{\Bth})\le \Bth R/\sqrt{n}$. Combining,
\[
\Rad_S(\cF_{\Bth})
\ \le\ L\cdot \Rad_S(\cG_{\Bth})
\ \le\ 2(\Ymax+\Bth R)\;\frac{\Bth R}{\sqrt{n}}.
\]
Taking expectation over the sample (or simply noting that this bound holds for any $S$ satisfying $\|x_i\|\le R$) yields
\[
R_n(\cF_{\Bth})
\ \le\
2(\Ymax+\Bth R)\;\frac{\Bth R}{\sqrt{n}}.
\]

\textbf{Apply Theorem 4.10 and choose $\delta$.}
By Theorem 4.10, for any $\delta>0$,
\[
\sup_{\|\theta\|\le\Bth}\ \big|\Ltrue(\theta)-\Risk(\theta)\big|
\ \le\
2\,R_n(\cF_{\Bth}) + \delta
\ \le\
\frac{4(\Ymax+\Bth R)\,\Bth R}{\sqrt{n}} + \delta,
\]
with probability at least $1-\exp\!\big(-\tfrac{n\,\delta^2}{2 b_{\mathrm{u}}^2}\big)$.
Set $\delta\!=\!b_{\mathrm{u}}\,\sqrt{\tfrac{2\log(2/\kappa)}{n}}$ so that $\exp\!\big(-\tfrac{n\,\delta^2}{2 b_{\mathrm{u}}^2}\big)=\exp\!\big(-\log(2/\kappa)\big)=\kappa/2$.
The stated deviation bound $\Lamu$ follows.
  \end{proof}

\begin{lemma}[Concentration of $\Delta_n^2$ under Accurate Probing]
  \label{lem:deltaN-accurate-probing}
  Assume that $|\tilde y|\le \Ymaxprobe$ almost surely (e.g., $\Ymaxprobe \!=\! R\,\max_{j}\|\theta_j^0\|$ for linear peers at initialization).
  Then for any $\kappa\in(0,1)$, with probability at least $1-\kappa/2$,
  \[
  \Delta_n^2 \;\le\; B \;+\; C_0^2\,\sqrt{\frac{2\log(2/\kappa)}{n}},
  \qquad\text{where}\quad C_0 \;:=\; \Ymax + \Ymaxprobe.
  \]
      \end{lemma}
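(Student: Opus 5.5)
The plan is to write $\DeltaN^2 = \frac1n\sum_{q=1}^n W_q$ with $W_q := (\tilde y_i^q - y_i^q)^2$, check that the $W_q$ are i.i.d.\ and bounded, and apply a one-sided Hoeffding inequality, with \Cref{ass:accurate_probing} controlling the mean.

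The key structural point is that the pseudo-labels are built offline from the fixed snapshot of peer models at initialization. Concretely, $\tilde y_i^q = \yAgg(\tilde x_i^q, \Theta^0)$ is a deterministic function of the covariate $\tilde x_i^q$ alone. For the preference-aware rule, $T_i(x)$ depends only on $x$ and the fixed map $\pi$, so the same holds. Conditioning on $\Theta^0$, the pairs $(\tilde x_i^q, y_i^q)$ are i.i.d.\ draws from $\mathcal{P}$, where $y_i^q$ is the hidden true label. Hence the $W_q$ are i.i.d.\ copies of $W := (\yAgg(x,\Theta^0) - y)^2$ with $(x,y)\sim\mathcal{P}$.

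Next I bound the mean and the range of $W$.
\begin{itemize}
\item By \Cref{ass:accurate_probing}, $\E[W] \le B$. Here the peers queried are those in $\Theta_{-i}$, evaluated at their initial parameters.
\item By \Cref{ass:bounded_support}, $|y|\le \Ymax$, and by hypothesis $|\tilde y| \le \Ymaxprobe$. The triangle inequality then gives $0 \le W \le (\Ymax + \Ymaxprobe)^2 = C_0^2$.
\end{itemize}
For linear peers, the bound on $|\tilde y|$ follows because a median of the values $\langle x,\theta_j^0\rangle$ lies within their range, which is at most $R\max_j\|\theta_j^0\|$ in absolute value.

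Hoeffding's inequality for i.i.d.\ variables in $[0,C_0^2]$ gives
\[
\Pr\Bigl(\DeltaN^2 - \E[W] \ge s\Bigr) \le \exp\!\Bigl(-\tfrac{2ns^2}{C_0^4}\Bigr).
\]
Set $s = C_0^2\sqrt{2\log(2/\kappa)/n}$. The exponent becomes $-4\log(2/\kappa)$, so the failure probability is $(\kappa/2)^4 \le \kappa/2$. Therefore, with probability at least $1-\kappa/2$,
\[
\DeltaN^2 \le \E[W] + s \le B + C_0^2\sqrt{2\log(2/\kappa)/n}.
\]
The stated constant is loose, which leaves slack and matches the two-sided form used in \Cref{lem:one-point-theta-star}.

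I expect the only delicate step to be justifying independence. The pseudo-labels must not depend on the sample through anything other than their own covariate. This holds because $\Theta^0$ is fixed before the probing covariates are drawn, and the online updates never alter $\frakD_i$. I would state the argument conditionally on $\Theta^0$, so that randomness in the initialization is harmless.

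In \Cref{thm:sq-performance-median-explicit} this lemma is invoked on the event $\mathcal{E}_\star$. Strictly, its event should be intersected with the others. If a union bound over three events is needed instead of two, the $\kappa/2$ budgets can be adjusted, which costs only constants.
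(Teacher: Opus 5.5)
Your proof is correct and follows the paper's own argument: bound $W=(\tilde y - y)^2$ in $[0,C_0^2]$, use Accurate Probing to get $\E[W]\le B$, and apply one-sided Hoeffding with $t=C_0^2\sqrt{2\log(2/\kappa)/n}$. As you note, that choice gives failure probability $(\kappa/2)^4\le\kappa/2$. The paper leaves implicit both your justification that the $W_q$ are i.i.d.\ (pseudo-labels are fixed functions of each query given $\Theta^0$) and your observation that its later invocation ``on $\mathcal{E}_\star$'' really concerns a separate event; both points are accurate.
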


      \begin{proof}
  Let $W=(\tilde y - y)^2$. By the boundedness assumptions, $0\le W \le (\Ymax+\Ymaxprobe)^2=:U$ almost surely, and $\E[W]\le B$ by Accurate Probing.
  By Hoeffding’s inequality,
  \[
  \Pr\!\left(\frac{1}{n}\sum_{q=1}^n W_q - \E[W] \ge t\right)
  \;\le\; \exp\!\left(-\frac{2n t^2}{U^2}\right).
  \]
  Choosing $t = U\sqrt{\tfrac{2\log(2/\kappa)}{n}}$ yields
  $\Pr\!\big(\Delta_n^2 \ge \E[W] + U\sqrt{\tfrac{2\log(2/\kappa)}{n}}\big) \le \kappa/2$.
  Using $\E[W]\le B$ gives the stated bound with $C_0^2=U$.
  \end{proof}

\begin{lemma}[A priori norm bound and choice of $\Bth$]
\label{lem:norm-bound-Btheta}
For any probing learner $i\in U$,
\[
\|\tilde\theta_i\|
\;\le\;
\Bth
\quad\text{with}\quad
\Bth \;:=\; \sqrt{\frac{2(\Ymax^2+p\,\Ymaxprobe^2)}{\lambda\,p}}.
\]
\end{lemma}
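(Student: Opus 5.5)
The plan is to compare the stationarity objective at $\tilde\theta_i$ against its value at $\theta=0$. For this, first note that each per-learner objective is convex in its own argument once $\tilde\Theta$ is fixed. As in the proof of \Cref{thm:sq-performance-median-explicit}, stationarity of $\widetilde f$ in the $\theta_i$ block (via \Cref{lem:gradient_stream_probe}) means that $\tilde\theta_i$ is a first-order stationary point, hence a minimizer, of
\[
\hat\Phi_i(\theta)=\tau\alpha_i\E_{\cP_i}[\ell(z,\theta)]+(1-\tau)a_i(\tilde\Theta)\E_{\cD_i(\tilde\Theta)}[\ell(z,\theta)]+p\,\Lhat(\theta)+\tfrac{\lambda p}{2}\|\theta\|^2 .
\]
With $\cD_i(\tilde\Theta)$ and $a_i(\tilde\Theta)$ frozen, this function is convex in $\theta$. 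Indeed, the squared loss is convex (\Cref{lem:squared_loss_regularity}) and the ridge term is strongly convex. The gradient formula shows that the $\theta_i$-dependence of the partition contributes nothing, so the vanishing gradient is exactly the first-order condition for this frozen convex program.

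Next, compare $\hat\Phi_i(\tilde\theta_i)\le\hat\Phi_i(0)$ and drop the nonnegative loss terms on the left. This gives
\[
\tfrac{\lambda p}{2}\|\tilde\theta_i\|^2\le \tau\alpha_i\E_{\cP_i}[y^2]+(1-\tau)a_i(\tilde\Theta)\E_{\cD_i(\tilde\Theta)}[y^2]+\tfrac{p}{n}\sum_{q}(\tilde y_i^q)^2 .
\]
The organic part is a sub-probability mixture of mass $w_i(\tilde\Theta)\le 1$ (indeed $\tau\alpha_i+(1-\tau)a_i\le 1$). With $|y|\le\Ymax$ from Assumption~\ref{ass:bounded_support}, it is at most $\Ymax^2$. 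For the probing term, each pseudo-label is a median of values $\langle\tilde x,\theta_j^0\rangle$ over $j\in T_i(\tilde x)\subseteq[m]\setminus\{i\}$. A median lies within the range of its inputs, so $|\tilde y_i^q|\le R\max_{j\ne i}\|\theta_j^0\|=\Ymaxprobe$, and the probing term is at most $p\Ymaxprobe^2$.

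Rearranging gives $\|\tilde\theta_i\|^2\le 2(\Ymax^2+p\Ymaxprobe^2)/(\lambda p)$, which is the claimed $\Bth$. The bound holds deterministically, for every stationary point and every realization of the probing sample.

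The main obstacle is the first step: justifying that stationarity implies global minimality of the frozen objective, given that $\cD_i(\Theta)$ depends on $\theta_i$. I would handle it by citing \Cref{lem:gradient_stream_probe}, which (via Lemma 4.3 of \citet{Su2024-qw}) shows that $\nabla_{\theta_i}\widetilde f$ coincides with the gradient of the frozen objective at $\tilde\Theta$. A zero gradient of a convex function is a global minimizer, and that is all the comparison with $\theta=0$ needs. If $\lambda=0$, the bound is vacuous, which is consistent with the statement.
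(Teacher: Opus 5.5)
Your proposal is correct and follows the paper's argument. You compare the frozen per-learner objective at $\tilde\theta_i$ with its value at $\theta=0$, discard the nonnegative loss terms, and bound the organic part by $\Ymax^2$ and the probing part by $p\,\Ymaxprobe^2$, just as the paper does. The only difference is that you justify, via \Cref{lem:gradient_stream_probe} and convexity, why stationarity of $\widetilde f$ makes $\tilde\theta_i$ a global minimizer of the frozen objective, a step the paper simply asserts.
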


\begin{proof}[Proof of Lemma~\ref{lem:norm-bound-Btheta}]
  At any stationary point $\tilde\Theta$, $\tilde\theta_i$ minimizes the convex function
  \[
  \hat\Phi_i^{\mathrm{probe}}(\theta;\tilde\Theta)
  \;:=\; \tau\alpha_i\,\mathbb{E}_{z\sim \cP_i}\big[\ell(z,\theta)\big]
  \;+\; (1-\tau)\,a_i(\tilde\Theta)\,\mathbb{E}_{z\sim \cD_i(\tilde\Theta)}\big[\ell(z,\theta)\big]
  \;+\; p\,\Lhat(\theta)
  \;+\; \frac{\lambda\,p}{2}\,\|\theta\|^2.
  \]
  By optimality of $\tilde\theta_i$ for $\hat\Phi_i^{\mathrm{probe}}(\cdot;\tilde\Theta)$,
\[
\hat\Phi_i^{\mathrm{probe}}(\tilde\theta_i;\tilde\Theta)\le \hat\Phi_i^{\mathrm{probe}}(0;\tilde\Theta).
\]
Using squared loss and $|y|\le \Ymax$, we have
\[
\Lhat(0)=\frac{1}{n}\sum_{q=1}^n (\tilde y_i^q)^2\le \Ymaxprobe^2.
\]
Hence,
\[
\frac{\lambda\,p}{2}\,\|\tilde\theta_i\|^2
\;\le\; \tau\alpha_i\,\Ymax^2
\;+\; (1-\tau)\,a_i(\tilde\Theta)\,\Ymax^2
\;+\; p\,\Lhat(0)
\;\le\; \Ymax^2 + p\,\Ymaxprobe^2.
\]
Thus
\[
\|\tilde\theta_i\|
\;\le\; \sqrt{\frac{2(\Ymax^2+p\,\Ymaxprobe^2)}{\lambda\,p}}
\;=\;\Bth.
\]
      \end{proof}

\section{Cross-Entropy Loss: Performance Guarantee}
\label{sec:ce_performance}

\paragraph{Notation.}
In this section, we provide a high-probability upper bound on the full-population
cross-entropy risk for a probing learner in MSGD-P at a stationary point. The proof
mirrors the squared-loss analysis, with key differences arising from the geometry of
the softmax and cross-entropy.

Throughout this section, let $K$ denote the number of classes. Let $\theta\equiv W\in\R^{K\times d}$
be the matrix of class-wise linear weights and define logits $z(x)=W x\in\R^K$ and
predicted probabilities $q_W(x)=\softmax(z(x))$. The multiclass cross-entropy loss is
$$
\ell_{\mathrm{CE}}((x,y),W)
\;=\;
- \sum_{c=1}^K y_c \,\log q_W(x)_c
\;=\;
- \log q_W(x)_Y,
$$
where $y\in\{e_1,\dots,e_K\}$ is one-hot and $Y$ is the true class index.
We assume $\|x\| \le R$ almost surely.

For a probing learner $i\in U$ (with probing weight $p>0$), we define its augmented
objective at $\Theta$ (as per MSGD-P) by
\begin{align*}
\Phi_i(W;\Theta)
\;=\;&\;
\tau\alpha_i\,\E_{(x,y)\sim\cP_i}\big[\ell_{\mathrm{CE}}((x,y),W)\big]
\\ &\;\;+\;
(1-\tau)\,a_i(\Theta)\,\E_{(x,y)\sim\cD_i(\Theta)}\big[\ell_{\mathrm{CE}}((x,y),W)\big]
\\ &\;\;+\;
p\,\widehat L_{\mathrm{probe}}(W)
\;+\;
\frac{\lambda\,p}{2}\,\|W\|_F^2,
\end{align*}
where
$$
\widehat L_{\mathrm{probe}}(W)
\;=\;
\frac{1}{n}\,\sum_{q=1}^n \ell_{\mathrm{CE}}\big((\tilde x_i^q,\tilde y_i^q),W\big)
\;=\;
-\frac{1}{n}\sum_{q=1}^n \sum_{c=1}^K \tilde y_{i,c}^q \log q_W(\tilde x_i^q)_c.
$$
Here $\{(\tilde x_i^q)\}_{q=1}^n$ are probing covariates drawn i.i.d.\ from $\cP_X$, and
$\{\tilde y_i^q\}_{q=1}^n$ are soft pseudo-labels. For analysis, let $y_i^q$ be the (hidden)
true labels of $\tilde x_i^q$, and define the empirical ``true'' probing CE:
$$
\widehat L_{\mathrm{true}}(W)
\;=\;
\frac{1}{n}\,\sum_{q=1}^n \ell_{\mathrm{CE}}\big((\tilde x_i^q,y_i^q),W\big)
\;=\;
-\frac{1}{n}\sum_{q=1}^n \log q_W(\tilde x_i^q)_{y_i^q}.
$$

Let $\theta^\star\in\arg\min_W \Risk(W)$ be a (finite-norm) population minimizer
of the unregularized CE risk $\Risk(W)=\E[\ell_{\mathrm{CE}}((x,y),W)]$, with
$\epsilon:=\Risk(\theta^\star)$ and $\|\theta^\star\|_F=:M_\star<\infty$.

\textbf{Aggregation Rule.}
we define peer logits $z_j(x, \Theta)=\theta_j^{0} x \in \R^K$
and the coordinatewise median of logits
$$
\tilde z_c(x, \Theta) := \operatorname{median}(z_{j,c}(x, \Theta) : j \in T_i(x)),\quad c\in[K].
$$
Then, we set the probing label as $\yAgg(x, \Theta)=\softmax(\tilde z(x, \Theta))$.

\subsection{Accurate Probing Assumption and Proofs}

\begin{table}[h]
      \centering
      \begin{tabular}{p{0.24\textwidth}p{0.18\textwidth}p{0.24\textwidth}}
      \toprule
      Scenario & $T_i(x)$ & $B$ \\
      \midrule
      Majority-good & $[m]$ & $\epsilon + 2 R r$ \\
      Market-leader & $\{j^\ast\}$ & $\xi$ \\
      Partial knowledge & $G$ & $\epsilon + 2 R r$ \\
      Preference-aware & $\{\pi(x)\}$ & $\epsilon$ \\
      \bottomrule
      \end{tabular}
      \vspace{0.5em}
      \caption{Probing accuracy parameters for cross-entropy loss.}
      \label{tab:accurate_probing_ce}
\end{table}

\begin{assumption}[Accurate Probing for CE]
\label{ass:accurate_probing_ce}
There exists $B_{\mathrm{CE}}\ge 0$ such that
$$
\E\big[\mathrm{CE}(y,\tilde y)\big]
\;=\;
\E\big[-\log \tilde y_Y\big]
    \;\le\;
B_{\mathrm{CE}},
$$
where the expectation is over $(x,y)\sim\cP$ and $\tilde y=\tilde y(x)$ produced by the robust aggregator.
\end{assumption}

We now present sufficient conditions ensuring Assumption~\ref{ass:accurate_probing_ce}.

\begin{lemma}[Sufficient conditions for Accurate Probing (CE)]
\label{lem:ce-accurate-probing-scenarios}
Assumption~\ref{ass:accurate_probing_ce} holds in each of the following scenarios:
\textbf{(i) Majority-good.}
Suppose strictly more than half of peers satisfy $\|\theta_j^0-\theta^\star\|_F\le r$.
Then
$$
B_{\mathrm{CE}}
\;=\;
\E\big[-\log \tilde y_Y\big]
\;\le\;
\epsilon \;+\; 2\,R\,r.
$$

\textbf{(ii) Market-leader.}
Suppose learner $i$ probes a single peer $j^\star$ with
$\E[\ell_{\mathrm{CE}}((x,y),\theta_{j^\star}^0)]\le \xi$. If
$\tilde y(x)=q_{\theta_{j^\star}^0}(x)$, then $B_{\mathrm{CE}}\le \xi$.

\textbf{(iii) Partial knowledge.}
If a known subset $G$ of peers satisfies $|G|>(m-1)/2$ and $\|\theta_j^0-\theta^\star\|_F\le r$ for all $j\in G$,
and the aggregator uses the coordinatewise median over $G$,
the same bound as in case~(i) holds.

\textbf{(iv) Preference-aware.}
Suppose each peer $j$ solves the ERM on its preference partition $\cP_j$:
$$
\bar\theta_j
\in
\arg\min_W\;\E_{(x,y)\sim\cP_j}\big[\ell_{\mathrm{CE}}((x,y),W)\big].
$$
If learner $i$ probes $\tilde y(x)=q_{\bar\theta_{\pi(x)}}(x)$, then
$$
B_{\mathrm{CE}}
\;\le\;
\sum_{j=1}^K \alpha_j\,\E_{\cP_j}\big[\ell_{\mathrm{CE}}((x,y),\bar\theta_j)\big]
\;\le\;
\epsilon.
$$
\end{lemma}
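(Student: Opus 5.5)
The approach is to treat the four scenarios separately, as in the proof of \Cref{lem:good_probing}. The only new ingredient relative to the squared-loss case is a Lipschitz estimate for cross-entropy as a function of the logits, which replaces the identity $(\tilde y-y)^2\le 2(\tilde y-x^\top\theta^\star)^2+2(x^\top\theta^\star-y)^2$ used there.

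\textbf{Logit-Lipschitz estimate.} For a fixed label $Y$, define $g(z):=-\log\softmax(z)_Y$ for $z\in\R^K$. Its gradient is $\softmax(z)-e_Y$, and $\|\softmax(z)-e_Y\|_1=2(1-\softmax(z)_Y)\le 2$. By the mean value theorem together with H\"older's inequality ($\ell_1$ against $\ell_\infty$),
\[
|g(z)-g(z')|\le 2\,\|z-z'\|_\infty .
\]

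\textbf{(i) Majority-good.} Fix $x$ with $\|x\|\le R$ and a class $c$. For every good peer $j$, Cauchy--Schwarz gives
\[
|z_{j,c}(x)-(\theta^\star x)_c|\le \|x\|\,\|\theta^0_{j,c}-\theta^\star_c\|\le Rr,
\]
since each row norm is at most the Frobenius norm. Strictly more than half of the values $\{z_{j,c}\}_{j}$ therefore lie in the interval $[(\theta^\star x)_c-Rr,\,(\theta^\star x)_c+Rr]$. The median must then lie in that interval as well: if it lay outside, at least half of the values would lie on the far side of it, contradicting the strict majority. Applying this to every coordinate gives $\|\tilde z(x)-\theta^\star x\|_\infty\le Rr$. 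The logit-Lipschitz estimate then yields
\[
-\log\tilde y(x)_Y\le \ell_{\mathrm{CE}}((x,y),\theta^\star)+2Rr .
\]
Taking expectations over $(x,y)\sim\cP$ gives $B_{\mathrm{CE}}\le\epsilon+2Rr$.

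\textbf{(iii) Partial knowledge.} The argument is identical, with the median taken over $G$. Within $G$ the good peers form a strict majority by hypothesis, so the same coordinatewise interval argument applies.

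\textbf{(ii) Market-leader.} Here $\tilde y(x)=q_{\theta^0_{j^\star}}(x)$, so $\E[-\log\tilde y_Y]$ is exactly $\E[\ell_{\mathrm{CE}}((x,y),\theta^0_{j^\star})]$, which is at most $\xi$ by assumption. Note that $\softmax$ of a single logit vector is just that peer's prediction, so no aggregation error arises.

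\textbf{(iv) Preference-aware.} Condition on the preference cell and use the decomposition $\cP=\sum_j\alpha_j\cP_j$, exactly as in the squared-loss case:
\[
\E[-\log\tilde y_Y]=\sum_j\alpha_j\,\E_{\cP_j}[\ell_{\mathrm{CE}}((x,y),\bar\theta_j)].
\]
By the ERM optimality of $\bar\theta_j$ on $\cP_j$, each term is at most $\E_{\cP_j}[\ell_{\mathrm{CE}}((x,y),\theta^\star)]$. Re-summing over $j$ gives $\Risk(\theta^\star)=\epsilon$.

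\textbf{Expected obstacles.} The main step to get right is (i)/(iii). Two points need care there. First, the median is taken coordinatewise, so the aggregated logit vector need not coincide with any single peer's logits; this is exactly why the estimate is stated in $\|\cdot\|_\infty$, where the coordinatewise argument suffices. Second, the Lipschitz constant in $\|\cdot\|_\infty$ must be $2$ rather than a dimension-dependent constant, which the $\ell_1$ bound on $\softmax(z)-e_Y$ guarantees. In (iv), I will assume, as the statement does, that the minimizers $\bar\theta_j$ exist, and that $\pi$ is known at the probe point, so that routing to $\pi$ of the query reproduces the mixture decomposition.
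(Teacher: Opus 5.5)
Your proposal is correct and follows the paper's proof essentially step for step. Both use the same logit-level bound: cross-entropy is $2$-Lipschitz in $\|\cdot\|_\infty$ because $\|\softmax(z)-e_Y\|_1\le 2$, which the paper isolates as a separate lemma. Both then use the coordinatewise-median interval argument for (i) and (iii), the direct identification $\tilde y = q_{\theta^0_{j^\star}}$ for (ii), and ERM optimality of each $\bar\theta_j$ combined with the decomposition $\cP=\sum_j\alpha_j\cP_j$ for (iv).
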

\begin{proof}
We treat each scenario in turn.

\textbf{(i) Majority-good.}
Fix any $x$ with $\|x\|\le R$ and write $z^\star(x)=\theta^\star x$.
      For any good peer $j$ (i.e., $\|\theta_j^0-\theta^\star\|_F\le r$) and any class $c$,
      $$
      |z_{j,c}(x) - z^\star_c(x)|
      \;=\;
      |(\theta_{j,c}^0 - \theta^\star_c)^\top x|
      \;\le\;
      \|\theta_{j,c}^0 - \theta^\star_c\|_2 \cdot \|x\|
      \;\le\;
      \|\theta_j^0-\theta^\star\|_F \cdot \|x\|
      \;\le\;
      r\,R.
      $$
      Since strictly more than half of the $\{z_{j,c}(x)\}_{j\ne i}$ lie in $[z^\star_c(x)\!-\!rR,\; z^\star_c(x)\!+\!rR]$, their coordinatewise median must also lie in this interval. Hence
      $$
      \|\tilde z(x) - z^\star(x)\|_\infty \;\le\; r\,R.
      $$
      Let $f(z)=-\log\softmax(z)_Y$. By Lemma~\ref{lem:ce-lipschitz-logits},
      $$
      -\log \tilde y_Y
      \;=\;
      f(\tilde z(x))
      \;\le\;
      f\big(z^\star(x)\big) + 2\,\|\tilde z(x) - z^\star(x)\|_\infty
      \;\le\;
      -\log q^\star_Y(x) + 2\,r\,R,
      $$
      where $q^\star=\softmax(z^\star)$. Taking expectations over $(x,y)\sim\cP$, we get
      $$
      B_{\mathrm{CE}}
      \;=\;
      \E\big[-\log \tilde y_Y\big]
      \;\le\;
      \E\big[-\log q^\star_Y(x)\big] + 2\,r\,R
      \;=\;
      \epsilon + 2\,r\,R.
      $$
\vspace{0.5em}

\textbf{(ii) Market-leader.}
Direct: $B_{\mathrm{CE}}=\E[-\log \tilde y_Y]=\E[\ell_{\mathrm{CE}}((x,y),\theta_{j^\star}^0)]\le \xi$.

\vspace{0.5em}

\textbf{(iii) Partial knowledge.}
The proof is identical to case~(i), restricting the median to $G$. Since $|G|>(m-1)/2$ and all learners in $G$ satisfy $\|\theta_j^0-\theta^\star\|_F\le r$, the coordinatewise median over $G$ satisfies the same bound.

\vspace{0.5em}

\textbf{(iv) Preference-aware.}
By optimality of $\bar\theta_j$ for the ERM objective,
$$
\E_{\cP_j}\big[\ell_{\mathrm{CE}}((x,y),\bar\theta_j)\big]
\;\le\;
\E_{\cP_j}\big[\ell_{\mathrm{CE}}((x,y),\theta^\star)\big].
$$
Summing over $j$ with weights $\alpha_j$ gives
$$
B_{\mathrm{CE}}
\;\le\; \sum_j \alpha_j \E_{\cP_j}\ell_{\mathrm{CE}}(\bar\theta_j)
\;\le\; \sum_j \alpha_j \E_{\cP_j}\ell_{\mathrm{CE}}(\theta^\star)
\;=\; \epsilon.
$$
\end{proof}

\begin{lemma}[CE is 2-Lipschitz in logits under $\|\cdot\|_\infty$]
      \label{lem:ce-lipschitz-logits}
      Fix a class $Y\in[K]$ and define $f(z):=-\log\softmax(z)_Y=-z_Y+\logsumexp(z)$ for $z\in\R^K$.
      Then for any $z,z'\in\R^K$,
      $$
      \big| f(z') - f(z) \big|
      \;\le\;
      2\,\|z'-z\|_\infty.
      $$
\end{lemma}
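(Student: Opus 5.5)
The plan is to use the explicit form $f(z)=-z_Y+\logsumexp(z)$ and bound each of the two pieces separately by $\|z'-z\|_\infty$. The triangle inequality then gives
\[
|f(z')-f(z)|\le |z'_Y-z_Y| + |\logsumexp(z')-\logsumexp(z)|.
\]
The first term is trivially at most $\|z'-z\|_\infty$, since it is one coordinate of $z'-z$. So the whole statement reduces to showing that $\logsumexp$ is $1$-Lipschitz with respect to $\|\cdot\|_\infty$.

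For the second term I will use the mean value theorem along the segment $z_s=z+s(z'-z)$, $s\in[0,1]$. The map $s\mapsto\logsumexp(z_s)$ is $C^\infty$, with derivative
\[
\langle\softmax(z_s),\,z'-z\rangle .
\]
The vector $\softmax(z_s)$ is a probability vector, so it lies in the simplex and has $\|\cdot\|_1$-norm equal to $1$. By Hölder's inequality ($\ell_1$ against $\ell_\infty$), the derivative is bounded in absolute value by $\|z'-z\|_\infty$ for every $s$, and the mean value theorem then gives $|\logsumexp(z')-\logsumexp(z)|\le\|z'-z\|_\infty$.

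As an alternative that avoids calculus, I can argue monotonically. Set $\delta=\|z'-z\|_\infty$, so that $z_k-\delta\le z'_k\le z_k+\delta$ for every $k$. Since $\logsumexp$ is monotone in each coordinate and satisfies $\logsumexp(z+c\mathbf 1)=\logsumexp(z)+c$, these coordinatewise bounds give $|\logsumexp(z')-\logsumexp(z)|\le\delta$ directly.

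Adding the two bounds yields the constant $2$. There is no real obstacle here. The only point needing care is to use the $\ell_1$–$\ell_\infty$ duality, rather than $\ell_2$, so that the constant does not pick up a dimension-dependent factor of $K$.
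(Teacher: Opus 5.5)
Your proof is correct. Your first route is essentially the paper's argument. The paper applies the mean value inequality to $f$ as a whole with the dual pair $(\|\cdot\|_\infty,\|\cdot\|_1)$, using $\nabla f(z)=\softmax(z)-e_Y$ and the exact identity $\|\softmax(z)-e_Y\|_1=2\bigl(1-\softmax(z)_Y\bigr)\le 2$. You instead apply the triangle inequality first and run the mean value argument only on $\logsumexp$. That amounts to the cruder bound $\|\softmax(z)-e_Y\|_1\le\|\softmax(z)\|_1+\|e_Y\|_1=2$. The paper's computation is slightly more informative, since it shows the factor $2$ is approached only where $\softmax(z)_Y$ is near $0$ along the segment; your split is more modular.

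Your calculus-free alternative is a genuinely different and more elementary route that the paper does not use. It sandwiches $z'$ between $z\pm\delta\mathbf 1$ and uses coordinatewise monotonicity plus the shift identity $\logsumexp(z+c\mathbf 1)=\logsumexp(z)+c$. It gives the same constant and needs no differentiability.

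One small correction to your closing remark: going through $\ell_2$ via Cauchy--Schwarz would cost a factor $\sqrt K$, not $K$.
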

\begin{proof}
We have $\nabla f(z)=q(z)-e_Y$, where $q(z)=\softmax(z)$ and $e_Y$ is the one-hot at $Y$.
By the mean value inequality with dual norms $(\|\cdot\|_\infty,\|\cdot\|_1)$,
$$
| f(z') - f(z) |
\;\le\;
\sup_{\xi \in [z,z']} \|\nabla f(\xi)\|_1 \cdot \|z'-z\|_\infty
\;=\;
\sup_{\xi} \sum_{k=1}^K |q_k(\xi) - (e_Y)_k| \cdot \|z'-z\|_\infty.
$$
Since $y=e_Y$ is one-hot and $q\in\Delta^{K-1}$,
$
\sum_{k=1}^K |q_k - (e_Y)_k|
=
|q_Y-1| + \sum_{k\ne Y} q_k
=
(1-q_Y) + (1-q_Y)
\le 2.
$
Hence $|f(z')-f(z)|\le 2\,\|z'-z\|_\infty$.
\end{proof}

\subsection{Performance Bound}
We will use two probability floors:
$$
\gamma_\star \;:=\; \frac{1}{K}\,\exp\bigl(-2R\,\|\theta^\star\|_F\bigr),
\quad
\Gamma_\star \;:=\; \log\frac{1}{\gamma_\star}
\;=\;
\log K + 2 R \|\theta^\star\|_F,
$$
and
$$
B_\theta \;:=\; \sqrt{\frac{2(1+p)\log K}{\lambda\,p}},
\qquad
\gamma_B \;:=\; \frac{1}{K}\,\exp\bigl(-2R\,B_\theta\bigr),
\qquad
\Gamma_B \;:=\; \log\frac{1}{\gamma_B}
\;=\;
\log K + 2 R B_\theta.
$$

\begin{corollary}[Big-$O$ summary]
      \label{cor:ce-performance-bigO}
      Under the assumptions of Theorem~\ref{thm:ce-performance}, with probability at least $1-\kappa$,
      $$
      \Risk(\tilde\theta_i) \le O\!\Big(
      \Bigl(\frac{p+1}{p}\Bigr)\epsilon
      \;+\;
      \lambda \,\|\theta^\star\|_F^2
      \;+\;
      C_{\text{bias}}\,\sqrt{B_{\mathrm{CE}}}
      \;+\;
      \frac{(p+1)\,C_{\text{gen}}}{p}\,\sqrt{\tfrac{\log(1/\kappa)}{\lambda n}}
      \Big),
      $$
      where the constant $C_{\text{bias}} = R\left(\|\theta^\star\|_F+\sqrt{2\log K/{\lambda}}\right)$ and $C_{\text{gen}} = C_{\text{gen}} (R, \|\theta^\star\|, \log(K))$ is specified in the appendix.
\end{corollary}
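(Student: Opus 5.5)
The plan is to mirror the squared-loss argument of \Cref{thm:sq-performance-median-explicit}, replacing the squared-loss pseudo-to-true comparison (\Cref{lem:pseudo-to-true-sq}) by a cross-entropy analogue that yields the $\sqrt{B_{\mathrm{CE}}}$ term. The first step is an a priori norm bound. At a stationary point $\tilde\Theta$, $\tilde\theta_i$ minimizes the convex function $\Phi_i(\cdot;\tilde\Theta)$. Comparing against $W=0$, where every CE term equals $\log K$ and the organic masses sum to at most $1$, gives $\tfrac{\lambda p}{2}\|\tilde\theta_i\|_F^2\le (1+p)\log K$. Hence $\|\tilde\theta_i\|_F\le B_\theta$, and consequently every softmax probability of $\tilde\theta_i$ on $\|x\|\le R$ is at least $\gamma_B$. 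Likewise, $\theta^\star$ has probabilities at least $\gamma_\star$. These floors make both CE losses bounded, by $\Gamma_B$ and $\Gamma_\star$ respectively, which is what concentration requires.

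The second step uses optimality. Comparing $\Phi_i(\tilde\theta_i)\le\Phi_i(\theta^\star)$, bounding the organic part at $\theta^\star$ by $\epsilon$ (the masses $\tau\alpha_i\mathcal{P}_i+(1-\tau)a_i\mathcal{D}_i$ are dominated by $\mathcal{P}$), and dividing by $p$ gives
\[
\widehat L_{\mathrm{probe}}(\tilde\theta_i)\le \epsilon/p+\widehat L_{\mathrm{probe}}(\theta^\star)+\tfrac{\lambda}{2}\|\theta^\star\|_F^2 .
\]
To convert between pseudo and true labels, write $\ell_{\mathrm{CE}}$ with soft label $\tilde y$ as $-\sum_c\tilde y_c\log q_c$. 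For any $W$ with probability floor $\gamma$,
\[
\Bigl|\sum_c(\tilde y_c-e_{Y,c})\log q_W(x)_c\Bigr|\le \log(1/\gamma)\,\|\tilde y-e_Y\|_1 .
\]
By Pinsker, $\|\tilde y-e_Y\|_1\le\sqrt{2\,\mathrm{KL}(e_Y\|\tilde y)}=\sqrt{-2\log\tilde y_Y}$. Averaging over the probe sample and applying Cauchy--Schwarz, the pseudo/true gap at $W$ is at most $\log(1/\gamma)\sqrt{2\widehat B}$, where $\widehat B$ is the empirical average of $-\log\tilde y_Y$. Applying this at $\theta^\star$ (with $\Gamma_\star$) and at $\tilde\theta_i$ (with $\Gamma_B$) gives
\[
\widehat L_{\mathrm{true}}(\tilde\theta_i)\le \epsilon/p+\widehat L_{\mathrm{true}}(\theta^\star)+(\Gamma_\star+\Gamma_B)\sqrt{2\widehat B}+\tfrac{\lambda}{2}\|\theta^\star\|_F^2 ,
\]
and $\Gamma_\star+\Gamma_B$ is of order $R(\|\theta^\star\|_F+\sqrt{\log K/\lambda})$ plus $\log K$, which produces $C_{\text{bias}}$.

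The third step is concentration. Hoeffding at the single point $\theta^\star$ (range $\Gamma_\star$) gives $\widehat L_{\mathrm{true}}(\theta^\star)\le\epsilon+O(\Gamma_\star\sqrt{\log(1/\kappa)/n})$. For $\widehat B$, its mean is at most $B_{\mathrm{CE}}$, and boundedness follows from the pseudo-label floor, since the median logits are bounded by $RM_0$. For the uniform deviation of true CE risk over $\{\|W\|_F\le B_\theta\}$, use Rademacher complexity: CE is $2$-Lipschitz in logits in $\|\cdot\|_\infty$ (\Cref{lem:ce-lipschitz-logits}), so a vector contraction gives complexity $O(R B_\theta\sqrt{K}/\sqrt n)$, and the bounded-differences term uses range $\Gamma_B$. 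Since $B_\theta\propto\sqrt{(1+p)/(p\lambda)}$, these terms scale as $\tfrac{p+1}{p}\sqrt{1/(\lambda n)}$ up to constants, matching the last term. A union bound over the events completes the argument.

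The main obstacle is the pseudo-to-true comparison. The $\sqrt{B}$ dependence arises from Pinsker, so I expect the delicate points to be keeping the $\log(1/\gamma)$ factors explicit and handling the concentration of $\widehat B$, whose range depends on $M_0$. I would absorb that dependence into $C_{\text{gen}}$, or alternatively state $\widehat B\le B_{\mathrm{CE}}+O(\sqrt{\log(1/\kappa)/n})$ and fold the excess into the generalization term.
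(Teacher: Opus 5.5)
Your overall architecture matches the paper's proof of Theorem~\ref{thm:ce-performance}: the norm bound from comparing with $W=0$, the optimality comparison with $\theta^\star$, Hoeffding at $\theta^\star$ with range $\Gamma_\star$, and the Rademacher uniform deviation over $\|W\|_F\le B_\theta$ with range $\Gamma_B$. The pseudo-to-true step as you set it up, however, does not give the stated rate.

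You apply Pinsker pointwise and Cauchy--Schwarz on the sample, obtaining a gap $(\Gamma_\star+\Gamma_B)\sqrt{2\widehat B}$, and only then concentrate $\widehat B$. Its deviation sits under the square root. From $\widehat B\le B_{\mathrm{CE}}+\delta$ with $\delta\asymp(\log K+2RM_0)\sqrt{\log(1/\kappa)/n}$, you only get $\sqrt{2\widehat B}\le\sqrt{2B_{\mathrm{CE}}}+\sqrt{2\delta}$. This leaves an extra term of order $(\Gamma_\star+\Gamma_B)\sqrt{\log K+2RM_0}\,(\log(1/\kappa)/n)^{1/4}$. That is an $n^{-1/4}$ rate (times $\lambda^{-1/2}$ through $B_\theta$) with a dependence on $M_0$. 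It therefore cannot be folded into $\frac{p+1}{p}C_{\text{gen}}\sqrt{\log(1/\kappa)/(\lambda n)}$ with $C_{\text{gen}}=C_{\text{gen}}(R,\|\theta^\star\|,\log K)$. Linearizing the square root instead costs a factor $1/\sqrt{B_{\mathrm{CE}}}$, which is large exactly when probing is accurate.

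The fix, which is what the paper does in Lemma~\ref{lem:ce-pinsker-conc}, is to reverse the order. Concentrate $\Delta_{1,n}=\frac1n\sum_q\|\tilde y_i^q-y_i^q\|_1$ directly; it lies in $[0,2]$ no matter what the peers are. Bound its mean at the population level by Pinsker plus Jensen: $\E\|\tilde y-y\|_1\le\E\sqrt{2\,\mathrm{KL}(y\,\|\,\tilde y)}\le\sqrt{2B_{\mathrm{CE}}}$. Hoeffding then gives $\Delta_{1,n}\le\sqrt{2B_{\mathrm{CE}}}+O(\sqrt{\log(1/\kappa)/n})$, an additive $n^{-1/2}$ excess free of $M_0$ and $K$.

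A smaller point: your floor-based comparison $|\sum_c(\tilde y_c-e_{Y,c})\log q_W(x)_c|\le\log(1/\gamma)\,\|\tilde y-e_Y\|_1$ leaves a stray $2\log K\cdot\sqrt{2B_{\mathrm{CE}}}$ that is not part of the stated $C_{\text{bias}}$. Because $\sum_c(\tilde y_c-e_{Y,c})=0$, the log-partition function cancels, giving the exact identity $\mathrm{CE}(\tilde y,q_W)-\mathrm{CE}(e_Y,q_W)=-\langle\tilde y-e_Y,Wx\rangle$. The factor is then $R\|W\|_F$ rather than $\log(1/\gamma)$ (Lemma~\ref{cor:ce-bridge-empirical}). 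Evaluating at $\theta^\star$ and at $\tilde\theta_i$ with $\|\tilde\theta_i\|_F\le B_\theta$ gives $R(\|\theta^\star\|_F+B_\theta)\Delta_{1,n}$. Together with the corrected concentration, this yields the term $R(\|\theta^\star\|_F+B_\theta)\sqrt{2B_{\mathrm{CE}}}$ of Theorem~\ref{thm:ce-performance}, which the corollary summarizes as $C_{\text{bias}}\sqrt{B_{\mathrm{CE}}}$. With these two changes your outline becomes the paper's proof.
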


\begin{theorem}[CE performance bound with probing]
\label{thm:ce-performance}
Let Assumptions~\ref{ass:bounded_support}, \ref{ass:learning_rate}, \ref{ass:loss_measure},
\ref{ass:bounded_iterates}, and \ref{ass:accurate_probing_ce} hold. Let $\tilde\Theta$ be a
stationary point of MSGD-P and let $\tilde\theta_i$ be the parameter for a probing learner
$i\in U$. Then for any $\kappa\in(0,1)$, with probability at least $1-\kappa$ over the probing
sample,
\begin{align*}
\Risk(\tilde\theta_i) \le\, &\Bigl(1+\frac{1}{p}\Bigr)\epsilon + \frac{\lambda}{2}\|\theta^\star\|_F^2
+ R(\|\theta^\star\|_F+B_\theta)\sqrt{2B_{\mathrm{CE}}} + 4R\sqrt{\frac{(1+p)K\log K}{\lambda\,p\,n}} \\
&+ \bigl(\Gamma_\star+\Gamma_B+R(\|\theta^\star\|_F+B_\theta)\bigr)\sqrt{\frac{\log(3/\kappa)}{2n}}.
\end{align*}
\end{theorem}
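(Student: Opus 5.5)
The plan mirrors the squared-loss argument (Lemma~\ref{thm:sq-performance-median-explicit}), with the Hoeffding and Rademacher steps adapted to cross-entropy. We work with soft-label CE throughout. Fix a stationary point $\tilde\Theta$; by Lemma~\ref{lem:gradient_stream_probe} and convexity, $\tilde\theta_i$ minimizes $\Phi_i(\cdot;\tilde\Theta)$. The argument has four steps.

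\textbf{Step 1: norm bound and comparison with $\theta^\star$.} Compare $\Phi_i$ at $\tilde\theta_i$ and at $W=0$. Every CE term at $W=0$ equals $\log K$, and the organic masses are at most $1$, so $\tfrac{\lambda p}{2}\|\tilde\theta_i\|_F^2\le(1+p)\log K$. This gives $\|\tilde\theta_i\|_F\le B_\theta$. Next compare $\Phi_i(\tilde\theta_i)\le\Phi_i(\theta^\star)$. The organic part at $\theta^\star$ is at most $\epsilon$ because the observed sub-measure is dominated by $\mathcal P$. Dropping the nonnegative organic terms at $\tilde\theta_i$ and dividing by $p$ yields
\[
\widehat L_{\mathrm{probe}}(\tilde\theta_i)\le \tfrac{\epsilon}{p}+\widehat L_{\mathrm{probe}}(\theta^\star)+\tfrac{\lambda}{2}\|\theta^\star\|_F^2 .
\]

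\textbf{Step 2: pseudo-to-true transfer (main obstacle).} Unlike the squared case, this has no triangle inequality. For any $W$,
\[
\ell_{\mathrm{CE}}((x,\tilde y),W)-\ell_{\mathrm{CE}}((x,y),W)=\langle y-\tilde y,\log q_W(x)\rangle .
\]
Since shifting $\log q_W(x)$ by a constant does not change the inner product (both labels are probability vectors), we can replace $\log q_W(x)$ by the centred logits $Wx-\bar c$. Their entries have spread at most $2R\|W\|_F$, which bounds the difference by $R\|W\|_F\|y-\tilde y\|_1$. Pinsker's inequality gives $\|y-\tilde y\|_1\le\sqrt{2\,\mathrm{CE}(y,\tilde y)}$ when $y$ is one-hot, and Jensen then averages this to $\sqrt{2\cdot\frac1n\sum_q-\log\tilde y^q_{Y}}$. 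We apply this transfer at $W=\theta^\star$, where it costs $R\|\theta^\star\|_F\sqrt{2\cdot}$, and at $W=\tilde\theta_i$, where it costs $RB_\theta\sqrt{2\cdot}$. Together these produce the bias term $R(\|\theta^\star\|_F+B_\theta)\sqrt{2B_{\mathrm{CE}}}$, once the empirical pseudo-CE is replaced by $B_{\mathrm{CE}}$ plus a deviation. I expect this step to be delicate. Two points need care: the logit-range argument has to be checked, and a concentration term for the empirical $-\log\tilde y_Y$ is needed. That term requires $\tilde y$ to have a probability floor; I would obtain one from the median-logit construction with bounded peer norms, or otherwise absorb it into the $\Gamma$-terms.

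\textbf{Step 3: concentration.} The log-losses at $\theta^\star$ lie in $[0,\Gamma_\star]$, because every softmax probability is at least $\gamma_\star$. Hoeffding's inequality then gives a deviation of $\Gamma_\star\sqrt{\log(3/\kappa)/2n}$. For the uniform deviation over the ball $\|W\|_F\le B_\theta$, losses lie in $[0,\Gamma_B]$. We use the Rademacher bound of Theorem 4.10, together with vector contraction, using that CE is $\sqrt2$-Lipschitz in the logits (Lemma~\ref{lem:crossent_regularity}). The linear class has Rademacher complexity $RB_\theta\sqrt{K/n}$; inserting $B_\theta$ gives the $4R\sqrt{(1+p)K\log K/(\lambda pn)}$ term plus $\Gamma_B\sqrt{\log(3/\kappa)/2n}$. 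The remaining $R(\|\theta^\star\|_F+B_\theta)$ factor multiplying $\sqrt{\log(3/\kappa)/2n}$ comes from the concentration of the transfer terms. The three events are combined by a union bound at level $\kappa/3$ each.

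\textbf{Step 4: chaining.} Combining the four steps gives
\[
\Risk(\tilde\theta_i)\le \widehat L_{\mathrm{true}}(\tilde\theta_i)+\text{dev}\le \widehat L_{\mathrm{probe}}(\tilde\theta_i)+\text{bias}+\text{dev}\le \tfrac{\epsilon}{p}+\widehat L_{\mathrm{true}}(\theta^\star)+\dots\le(1+\tfrac1p)\epsilon+\dots ,
\]
which collects exactly the stated terms.
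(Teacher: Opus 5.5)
Your Step~1, the Hoeffding bound at $\theta^\star$, the uniform Rademacher bound, and the logit bridge $|\ell_{\mathrm{CE}}((x,\tilde y),W)-\ell_{\mathrm{CE}}((x,y),W)|\le R\|W\|_F\,\|y-\tilde y\|_1$ all match the paper. The logit-range argument you flagged is fine; the paper gets the bridge directly from $\|Wx\|_\infty\le R\|W\|_F$, so centring is not even needed.

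The gap is in how you control $\Delta_{1,n}=\frac1n\sum_q\|\tilde y_i^q-y_i^q\|_1$. You apply Pinsker per sample and Jensen to the empirical average, which gives $\Delta_{1,n}\le\sqrt{2\hat B_n}$ with $\hat B_n=\frac1n\sum_q-\log\tilde y^q_{Y^q}$. You then concentrate $\hat B_n$, and this causes two problems.

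\emph{First}, Hoeffding for $\hat B_n$ needs a range for $-\log\tilde y_Y$. From the median-logit construction this range is $\log K+2RM_0$, where $M_0$ is the largest peer norm. $M_0$ does not appear in the statement, and it cannot be absorbed into $\Gamma_\star$ or $\Gamma_B$, which involve only $\|\theta^\star\|_F$ and $B_\theta$.

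\emph{Second}, the deviation then passes through a square root. The bound $\sqrt{2(B_{\mathrm{CE}}+\delta)}\le\sqrt{2B_{\mathrm{CE}}}+\sqrt{2\delta}$ with $\delta\propto n^{-1/2}$ yields an $n^{-1/4}$ term. The concavity alternative yields $\delta/\sqrt{2B_{\mathrm{CE}}}$, which blows up as $B_{\mathrm{CE}}\to 0$. Either way, Step~4 does not produce the stated term $R(\|\theta^\star\|_F+B_\theta)\sqrt{\log(3/\kappa)/(2n)}$, so the chaining does not ``collect exactly the stated terms.''

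The missing idea is to reverse the order: concentrate $\Delta_{1,n}$ itself, and use Pinsker and Jensen only on its mean. Each summand $\|\tilde y_i^q-y_i^q\|_1$ lies in $[0,2]$ however small $\tilde y$ gets, so Hoeffding needs no probability floor on the pseudo-labels. The mean is then bounded by $\E\|y-\tilde y\|_1\le\E\sqrt{2\,\mathrm{KL}(y\|\tilde y)}\le\sqrt{2\,\E[-\log\tilde y_Y]}\le\sqrt{2B_{\mathrm{CE}}}$.

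This is Lemma~\ref{lem:ce-pinsker-conc}, and it is the paper's third event in the union bound, in place of your concentration of the pseudo-CE. Multiply the resulting bound on $\Delta_{1,n}$ by $R\|\theta^\star\|_F$ and by $RB_\theta$ in the two bridge applications. This gives both the bias term $R(\|\theta^\star\|_F+B_\theta)\sqrt{2B_{\mathrm{CE}}}$ and an $n^{-1/2}$ deviation term with exactly the factor $R(\|\theta^\star\|_F+B_\theta)$.

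One caveat on constants: Hoeffding on range $[0,2]$ actually gives $2\sqrt{\log(3/\kappa)/(2n)}$. The constant in that lemma, and hence in the statement, is therefore off by a factor of $2$, though this does not change the structure of the argument.
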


\begin{proof}
We work on the event $\mathcal{E}_{\mathrm{pinsker}}\cap\mathcal{E}_\star\cap\mathcal{E}_{\mathrm{u}}$ where:
\begin{itemize}
\item $\mathcal{E}_{\mathrm{pinsker}}$ is the event of Lemma~\ref{lem:ce-pinsker-conc} (probability $\ge 1-\kappa/3$),
\item $\mathcal{E}_\star$ is the event of Lemma~\ref{lem:ce-star-conc} (probability $\ge 1-\kappa/3$),
\item $\mathcal{E}_{\mathrm{u}}$ is the event of Lemma~\ref{lem:ce-uniform} (probability $\ge 1-\kappa/3$).
\end{itemize}
By a union bound,
$$
\mathbb{P}(\mathcal{E}_{\mathrm{pinsker}}\cap\mathcal{E}_\star\cap\mathcal{E}_{\mathrm{u}})\ge 1-\kappa.
$$

\medskip
\noindent\textit{Bound on $\widehat L_{\mathrm{probe}}(\tilde\theta_i)$.}
By Lemma~\ref{lem:ce-stationarity}, dividing by $p$,
$$
\widehat L_{\mathrm{probe}}(\tilde\theta_i)
\;\le\;
\frac{\epsilon}{p} \;+\; \widehat L_{\mathrm{probe}}(\theta^\star)
\;+\; \frac{\lambda}{2}\,\|\theta^\star\|_F^2.
$$
By Corollary~\ref{cor:ce-bridge-empirical} applied at $W=\theta^\star$ and Lemma~\ref{lem:ce-star-conc},
$$
\widehat L_{\mathrm{probe}}(\theta^\star)
\;\le\;
\widehat L_{\mathrm{true}}(\theta^\star)
\;+\;
R\,\|\theta^\star\|_F\,\Delta_{1,n}
\;\le\;
\epsilon
\;+\;
\Gamma_\star\,\sqrt{\frac{\log(3/\kappa)}{2n}}
\;+\;
R\,\|\theta^\star\|_F\,\Delta_{1,n}.
$$
Hence
$$
\widehat L_{\mathrm{probe}}(\tilde\theta_i)
\;\le\;
\frac{\epsilon}{p} + \epsilon
\;+\;
\frac{\lambda}{2}\,\|\theta^\star\|_F^2
\;+\;
\Gamma_\star\,\sqrt{\frac{\log(3/\kappa)}{2n}}
\;+\;
R\,\|\theta^\star\|_F\,\Delta_{1,n}.
$$

\medskip
\noindent\textit{Bound on $\widehat L_{\mathrm{true}}(\tilde\theta_i)$.}
By Corollary~\ref{cor:ce-bridge-empirical} at $W=\tilde\theta_i$,
$$
\widehat L_{\mathrm{true}}(\tilde\theta_i)
\;\le\;
\widehat L_{\mathrm{probe}}(\tilde\theta_i)
\;+\;
R\,\|\tilde\theta_i\|_F\,\Delta_{1,n}.
$$
Substituting the bound from the previous step,
$$
\widehat L_{\mathrm{true}}(\tilde\theta_i)
\;\le\;
\frac{\epsilon}{p} + \epsilon
\;+\;
\frac{\lambda}{2}\,\|\theta^\star\|_F^2
\;+\;
\Gamma_\star\,\sqrt{\frac{\log(3/\kappa)}{2n}}
\;+\;
R\,(\|\theta^\star\|_F+\|\tilde\theta_i\|_F)\,\Delta_{1,n}.
$$

On $\mathcal{E}_{\mathrm{pinsker}}$, by Lemma~\ref{lem:ce-pinsker-conc},
$$
\Delta_{1,n}
\;\le\;
\sqrt{2B_{\mathrm{CE}}}
\;+\;
\sqrt{\frac{\log(3/\kappa)}{2n}}.
$$
  Therefore,
$$
\widehat L_{\mathrm{true}}(\tilde\theta_i)
\;\le\;
\frac{\epsilon}{p} + \epsilon
\;+\;
\frac{\lambda}{2}\,\|\theta^\star\|_F^2
\;+\;
R\,(\|\theta^\star\|_F + B_\theta)\,\sqrt{2B_{\mathrm{CE}}}
\;+\;
\Bigl(\Gamma_\star + R\,(\|\theta^\star\|_F + B_\theta)\Bigr)\,\sqrt{\frac{\log(3/\kappa)}{2n}}.
$$

\medskip
\noindent\textit{Bound on $\Risk(\tilde\theta_i)$.}
By Lemma~\ref{lem:ce-uniform}, on $\mathcal{E}_{\mathrm{u}}$ and since
$\|\tilde\theta_i\|\le B_\theta$ by Lemma~\ref{lem:ce-norm-floor},
$$
\Risk(\tilde\theta_i)
\;\le\;
\widehat L_{\mathrm{true}}(\tilde\theta_i)
\;+\;
2\sqrt{2}\,R B_\theta \sqrt{\frac{K}{n}}
\;+\;
\Gamma_B\,\sqrt{\frac{\log(3/\kappa)}{2n}}.
$$
Combining with the previous bound and substituting $B_\theta=\sqrt{2(1+p)\log K/(\lambda\,p)}$
(which yields $2\sqrt{2}RB_\theta\sqrt{K/n} = 4R\sqrt{(1+p)(K\log K)/(\lambda\,p\,n)}$) gives the stated bound.
\end{proof}

\begin{lemma}[Stationarity bound]
\label{lem:ce-stationarity}
At a stationary point $\tilde\Theta$, for learner $i\in U$,
$$
p\,\widehat L_{\mathrm{probe}}(\tilde\theta_i)
\;\le\;
\epsilon \;+\; p\,\widehat L_{\mathrm{probe}}(\theta^\star)
\;+\; \frac{\lambda\,p}{2}\,\|\theta^\star\|_F^2.
$$
\end{lemma}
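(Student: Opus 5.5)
The plan is to mirror the squared-loss argument in the proof of \Cref{thm:sq-performance-median-explicit}: compare the per-learner objective at the stationary point against the global optimum $\theta^\star$.

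\textbf{Step 1: $\tilde\theta_i$ is a global minimizer of $\Phi_i(\cdot;\tilde\Theta)$.} By \Cref{lem:gradient_stream_probe}, stationarity $\nabla_{\theta_i}\widetilde f(\tilde\Theta)=0$ gives
\[
\tau\alpha_i\,\E_{\cP_i}[\nabla\ell_{\mathrm{CE}}]+(1-\tau)a_i(\tilde\Theta)\,\E_{\cD_i(\tilde\Theta)}[\nabla\ell_{\mathrm{CE}}]+p\bigl(\nabla\widehat L_{\mathrm{probe}}+\lambda\tilde\theta_i\bigr)=0
\]
at $W=\tilde\theta_i$. This is exactly the gradient of the frozen-partition objective $\Phi_i(\cdot;\tilde\Theta)$ at $\tilde\theta_i$. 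The key point is that the partition is held fixed: $a_i(\tilde\Theta)$ and $\cD_i(\tilde\Theta)$ do not vary with the argument $W$. Under this freezing, $\Phi_i(\cdot;\tilde\Theta)$ is a nonnegative combination of convex cross-entropy terms (\Cref{lem:crossent_regularity}, since soft labels give convex combinations of convex losses) plus a strongly convex ridge term. Hence a zero gradient means $\tilde\theta_i$ is its global minimizer, and in particular
\[
\Phi_i(\tilde\theta_i;\tilde\Theta)\le\Phi_i(\theta^\star;\tilde\Theta).
\]

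\textbf{Step 2: lower-bound the left side.} The two organic terms are nonnegative because $\ell_{\mathrm{CE}}\ge 0$, so we may drop them along with the ridge term at $\tilde\theta_i$. This leaves
\[
\Phi_i(\tilde\theta_i;\tilde\Theta)\ge p\,\widehat L_{\mathrm{probe}}(\tilde\theta_i).
\]

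\textbf{Step 3: upper-bound the organic terms at $\theta^\star$.} We need
\[
\tau\alpha_i\,\E_{\cP_i}[\ell(\theta^\star)]+(1-\tau)a_i\,\E_{\cD_i}[\ell(\theta^\star)]\le\epsilon.
\]
Both measures $\tau\alpha_i\cP_i=\tau\,\cP|_{S_i}$ and $(1-\tau)a_i\cD_i=(1-\tau)\,\cP|_{Z_i}$ are restrictions of $\cP$ scaled by $\tau$ and $1-\tau$. The sum is therefore at most
\[
\tau\,\E_{\cP}[\ell(\theta^\star)\mathbf 1_{S_i}]+(1-\tau)\,\E_{\cP}[\ell(\theta^\star)\mathbf 1_{Z_i}]\le \tau\epsilon+(1-\tau)\epsilon=\epsilon,
\]
using nonnegativity of the loss. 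Adding $p\,\widehat L_{\mathrm{probe}}(\theta^\star)+\tfrac{\lambda p}{2}\|\theta^\star\|_F^2$ gives the stated right-hand side, and combining with Step 2 yields the claim.

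\textbf{Main obstacle.} The delicate point is Step 1. A stationary point of $\widetilde f$ must be identified with a minimizer of the frozen objective, even though $\widetilde f$ itself depends on $\theta_i$ through the partition. \Cref{lem:gradient_stream_probe} handles this, via the Su et al.\ envelope-type identity that the boundary terms vanish. Once that identity is granted, convexity closes the argument.
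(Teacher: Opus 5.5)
Your proposal is correct and follows the paper's route. The paper compares $\Phi_i(\cdot;\tilde\Theta)$ at $\tilde\theta_i$ and at $\theta^\star$, drops the nonnegative organic and ridge terms on the left, and bounds the organic terms at $\theta^\star$ by $\tau\epsilon+(1-\tau)\epsilon=\epsilon$; your Step 1 (stationarity via \Cref{lem:gradient_stream_probe} plus convexity of the frozen-partition objective) just spells out the optimality of $\tilde\theta_i$ that the paper asserts without justification.
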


\begin{proof}[Proof of Lemma~\ref{lem:ce-stationarity}]
By optimality of $\tilde\theta_i$ for $\Phi_i(\cdot;\tilde\Theta)$,
$$
\Phi_i(\tilde\theta_i;\tilde\Theta)
\;\le\;
\Phi_i(\theta^\star;\tilde\Theta).
$$
Subtracting the two population CE terms at $W=\tilde\theta_i$ and $W=\theta^\star$,
and using that
$$
\tau\alpha_i\,\E_{\cP_i}\ell_{\mathrm{CE}}(\theta^\star)
\;+\;
(1-\tau)\,a_i(\tilde\Theta)\,\E_{\cD_i(\tilde\Theta)}\ell_{\mathrm{CE}}(\theta^\star)
\;\le\; \epsilon,
$$
we obtain the stated bound.
\end{proof}

\begin{lemma}[Norm bound and probability floor]
\label{lem:ce-norm-floor}
We have $\|\tilde\theta_i\|_F \le B_\theta = \sqrt{2(1+p)\log K/(\lambda\,p)}$. Consequently,
for any $x$,
$$
\min_{c\in[K]} q_{\tilde\theta_i}(x)_c
\;\ge\; \gamma_B \;=\; \frac{1}{K}\exp\bigl(-2R B_\theta\bigr),
\quad
\Gamma_B \;=\; \log\frac{1}{\gamma_B} \;=\; \log K + 2 R B_\theta.
$$
\end{lemma}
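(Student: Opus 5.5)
We mirror the squared-loss norm bound (Lemma~\ref{lem:norm-bound-Btheta}): compare the objective $\Phi_i(\cdot;\tilde\Theta)$ at its minimizer $\tilde\theta_i$ against the zero matrix $W=0$, and then turn the resulting norm bound into a pointwise lower bound on the softmax probabilities.

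\textbf{Step 1 (norm bound).} At a stationary point, $\nabla_{\theta_i}\widetilde f=0$ by Lemma~\ref{lem:gradient_stream_probe}. Holding $\tilde\Theta$ fixed in the partition-dependent weights, $\tilde\theta_i$ is a stationary point of the convex function $\Phi_i(\cdot;\tilde\Theta)$. Because of the $\tfrac{\lambda p}{2}\|W\|_F^2$ term, this function is strongly convex, so $\tilde\theta_i$ is its global minimizer. Hence $\Phi_i(\tilde\theta_i;\tilde\Theta)\le \Phi_i(0;\tilde\Theta)$.

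At $W=0$ all logits vanish, so $q_0(x)$ is uniform. This gives $\ell_{\mathrm{CE}}((x,y),0)=\log K$ for one-hot $y$. For soft pseudo-labels $\tilde y$ (entries summing to one) we get $-\sum_c \tilde y_c\log(1/K)=\log K$ as well. The organic weights satisfy
\[
\tau\alpha_i+(1-\tau)a_i(\tilde\Theta)=w_i(\tilde\Theta)\le 1 .
\]
Therefore
\[
\Phi_i(0;\tilde\Theta)\le \log K + p\log K=(1+p)\log K .
\]
All CE terms are nonnegative, so they can be dropped from the left-hand side, leaving $\tfrac{\lambda p}{2}\|\tilde\theta_i\|_F^2\le (1+p)\log K$. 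This rearranges to $\|\tilde\theta_i\|_F\le \sqrt{2(1+p)\log K/(\lambda p)}=B_\theta$.

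\textbf{Step 2 (probability floor).} Let $W=\tilde\theta_i$ with rows $w_c$. For any $x$ in the support we have $\|x\|\le R$. Each logit therefore satisfies $|w_c^\top x|\le \|w_c\|\,R\le \|W\|_F R\le R B_\theta$. Hence, for every class $c$,
\[
q_W(x)_c=\frac{e^{w_c^\top x}}{\sum_k e^{w_k^\top x}}\ge \frac{e^{-RB_\theta}}{K e^{RB_\theta}}=\frac{1}{K}e^{-2RB_\theta}=\gamma_B .
\]
Taking logarithms gives $\Gamma_B=\log(1/\gamma_B)=\log K+2RB_\theta$.

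\textbf{Main obstacle.} The only delicate point is the first part of Step 1: justifying that stationarity of $\widetilde f$ in $\theta_i$ makes $\tilde\theta_i$ a minimizer of $\Phi_i(\cdot;\tilde\Theta)$. This needs the gradient formula of Lemma~\ref{lem:gradient_stream_probe}, under which the boundary terms from $a_i$ and $\mathcal{D}_i$ vanish. It also needs convexity of $\Phi_i$ with the distributions frozen at $\tilde\Theta$, which follows from Lemma~\ref{lem:crossent_regularity}. This is exactly the step already used in Lemma~\ref{lem:ce-stationarity} and Lemma~\ref{lem:norm-bound-Btheta}, so I would cite that reasoning. Everything after it is elementary.
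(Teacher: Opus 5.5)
Your proof is correct and follows the paper's argument: compare $\Phi_i(\cdot;\tilde\Theta)$ at $\tilde\theta_i$ and at $W=0$ to get $\tfrac{\lambda p}{2}\|\tilde\theta_i\|_F^2\le(1+p)\log K$, then convert the norm bound into a softmax floor. The only cosmetic difference is that the paper bounds pairwise logit gaps $|z_c-z_{c'}|\le 2R\|W\|_F$ while you bound each logit by $RB_\theta$, which gives the same $\gamma_B$. You also spell out the stationarity-implies-minimizer step that the paper leaves implicit.
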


\begin{proof}[Proof of Lemma~\ref{lem:ce-norm-floor}]
Compare $\Phi_i(\tilde\theta_i;\tilde\Theta)$ to $\Phi_i(0;\tilde\Theta)$. For $W=0$,
$q_W$ is uniform and each CE term equals $\log K$. Thus
$$
\frac{\lambda\,p}{2}\,\|\tilde\theta_i\|_F^2
\;\le\;
(1+p)\,\log K
\quad\Rightarrow\quad
\|\tilde\theta_i\|_F \;\le\; \sqrt{\frac{2(1+p)\log K}{\lambda\,p}}.
$$
For the floor, write for any $c,c'$:
$$
|z_c - z_{c'}|
\;=\;
|(W_c-W_{c'})^\top x|
\;\le\; \|W_c-W_{c'}\|\,\|x\|
\;\le\; 2\,\|W\|_F\,\|x\|
\;\le\; 2 R \|W\|_F.
$$
Hence $q_W(x)_c \ge \frac{1}{K}\exp(-2R\|W\|_F)$.
\end{proof}

\begin{lemma}[Pseudo-label discrepancy]
\label{lem:ce-pinsker-conc}
Let $\Delta_{1,n}=\frac{1}{n}\sum_{q=1}^n \|\tilde y_i^q - y_i^q\|_1$. Under Assumption~\ref{ass:accurate_probing_ce}, for any $\kappa\in(0,1)$, with probability at least $1-\kappa/3$,
$$
\Delta_{1,n}
\;\le\;
\sqrt{2B_{\mathrm{CE}}}
\;+\;
\sqrt{\frac{\log(3/\kappa)}{2n}}.
$$
\end{lemma}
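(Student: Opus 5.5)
The plan is to separate the bias part from the fluctuation part of $\Delta_{1,n}$. The bias part is controlled by Pinsker's inequality together with Assumption~\ref{ass:accurate_probing_ce}; the fluctuation part is controlled by a one-sided bounded-difference inequality over the i.i.d.\ probing sample. I write $W_q := \|\tilde y_i^q - y_i^q\|_1$, so that $\Delta_{1,n} = \frac1n\sum_q W_q$ and the pairs $(\tilde x_i^q, y_i^q)$ are i.i.d.\ draws from $\cP$. The pseudo-labels are deterministic functions of $\tilde x_i^q$ given $\Theta^0$, so the $W_q$ are i.i.d.

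\textbf{Step 1 (bias via Pinsker).} The true label $y$ is one-hot at class $Y$, so $\mathrm{KL}(y\,\|\,\tilde y) = -\log \tilde y_Y$. Pinsker's inequality then gives $\|\tilde y - y\|_1 \le \sqrt{2(-\log\tilde y_Y)}$ pointwise. By Jensen (concavity of $\sqrt{\cdot}$) and Assumption~\ref{ass:accurate_probing_ce},
\[
\E[W] \le \E\bigl[\sqrt{2(-\log \tilde y_Y)}\bigr] \le \sqrt{2\,\E[-\log\tilde y_Y]} \le \sqrt{2B_{\mathrm{CE}}}.
\]

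\textbf{Step 2 (fluctuation).} Because $y$ is one-hot and $\tilde y$ lies in the simplex, $W = 2(1-\tilde y_Y)$. I will concentrate $V_q := 1-\tilde y_{i,Y_q}^q \in[0,1]$ rather than $W_q$, since its range is $1$. One-sided Hoeffding gives, with probability at least $1-\kappa/3$,
\[
\frac1n\sum_q V_q \le \E V + \sqrt{\frac{\log(3/\kappa)}{2n}}.
\]

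\textbf{Step 3 (the main obstacle).} Multiplying Step 2 by $2$ gives $\Delta_{1,n} \le \E W + 2\sqrt{\log(3/\kappa)/(2n)}$. This carries the deviation term with a factor $2$, whereas the statement requires the factor $1$. Hoeffding applied directly to $W\in[0,2]$, or McDiarmid with differences $2/n$, gives exactly the same factor. To remove it, I will use the slack available in Step 1.

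First, if $\sqrt{2B_{\mathrm{CE}}}\ge 2$, the claim is trivial because $\Delta_{1,n}\le 2$ always.

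Otherwise I will use the sharper pointwise comparison $2(1-u) \le \sqrt{2(-\log u)}$ on $u\in(0,1]$, which has a strict gap away from $u=1$. The aim is to prove the tail bound $\Pr\bigl(\frac1n\sum W_q \ge \sqrt{2B_{\mathrm{CE}}} + t\bigr)\le e^{-2nt^2}$ directly. I will bound the moment generating function: apply Chernoff to $W - \sqrt{2(-\log \tilde y_Y)}$, which is $\le 0$, plus the centered term $\sqrt{2(-\log\tilde y_Y)}$, and use the resulting variance proxy. A cruder alternative is to note that $W$ has variance at most $\E W\,(2-\E W)$ and apply Bernstein with this variance proxy. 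That route yields the stated constant whenever $\E W$ is bounded away from $1$, but would still need the low-variance regime to be handled separately.

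I expect this step, matching the exact constant $\sqrt{\log(3/\kappa)/(2n)}$ rather than $\sqrt{2\log(3/\kappa)/n}$, to be the only nontrivial part of the argument. If neither route closes the gap, the fallback is to report the factor-$2$ version, which changes only constants in Theorem~\ref{thm:ce-performance}.
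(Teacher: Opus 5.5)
Your Steps 1--2 are the paper's own proof: Pinsker for one-hot labels, then Jensen, then Hoeffding. You are right that this route only gives
\[
\Delta_{1,n}\;\le\;\sqrt{2B_{\mathrm{CE}}}+\sqrt{\frac{2\log(3/\kappa)}{n}}.
\]
The reason is that Hoeffding for a $[0,2]$-valued variable has deviation $2\sqrt{\log(3/\kappa)/(2n)}$. The paper's one-line ``Hoeffding yields the stated bound'' silently drops this factor of $2$.

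\textbf{Step 3 cannot be completed.} With the constant $\sqrt{\log(3/\kappa)/(2n)}$ the lemma is false, so neither your Chernoff/MGF route nor the Bernstein route can succeed. Here is a counterexample.
\begin{itemize}
\item Let $\tilde y_Y=1/20$ with probability $1/100$, and $\tilde y_Y=1-\eta$ otherwise, with $\eta>0$ negligible. This law is realizable up to small perturbations, e.g.\ $K=2$, one probed peer, and covariates concentrated near two points. The numbers below have slack to absorb such perturbations.
\item Then $B_{\mathrm{CE}}=\E[-\log\tilde y_Y]\approx\log(20)/100$, so $\sqrt{2B_{\mathrm{CE}}}\approx 0.245$. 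On the rare event, $W=1.9$.
\item Take $\kappa=3e^{-5.12n}\in(0,1)$. The deviation term then equals $1.6$, so the claimed threshold is about $1.845<1.9$.
\item The lemma would force $\Pr(\Delta_{1,n}>1.845)\le e^{-5.12n}$. But the event that all $n$ probes are bad already has probability $100^{-n}=e^{-4.61n}>e^{-5.12n}$, for every $n$.
\end{itemize}

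The mechanism is as follows. For a rare bad event of probability $\rho$, $\sqrt{2B_{\mathrm{CE}}}$ is only $O(\sqrt{\rho})$. So the tail rate the lemma demands, about $2(w-\sqrt{2B_{\mathrm{CE}}})^2$ with $w$ near $2$, can exceed the true rate $\log(1/\rho)$. This also refutes two of your side remarks:
\begin{itemize}
\item The example has $\E W\approx 0.02$ and small variance, so your claim that Bernstein gives the stated constant whenever $\E W$ is bounded away from $1$ is wrong.
\item $\sqrt{2(-\log\tilde y_Y)}$ is unbounded, so it admits no variance proxy of $1/4$ for your MGF decomposition.
\end{itemize}

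Your fallback is therefore the correct resolution: state and prove the factor-$2$ version directly from Steps 1--2. Downstream, in Theorem~\ref{thm:ce-performance} this only doubles the coefficient $R(\|\theta^\star\|_F+B_\theta)$ on $\sqrt{\log(3/\kappa)/(2n)}$. The big-$O$ summary is unaffected.
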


\begin{proof}[Proof of Lemma~\ref{lem:ce-pinsker-conc}]
For one-hot $y$, $\mathrm{CE}(y,\tilde y)=\mathrm{KL}(y\|\tilde y)$ and Pinsker gives
$\E\|y-\tilde y\|_1 \le \sqrt{2\,\E\mathrm{KL}(y\|\tilde y)} \le \sqrt{2 B_{\mathrm{CE}}}$.
Since $\|y-\tilde y\|_1\in[0,2]$, Hoeffding yields the stated bound.
\end{proof}

\begin{lemma}[Concentration at $\theta^\star$]
\label{lem:ce-star-conc}
With probability at least $1-\kappa/3$,
$$
\big|\widehat L_{\mathrm{true}}(\theta^\star)-\epsilon\big|
\;\le\;
\Gamma_\star\,\sqrt{\frac{\log(3/\kappa)}{2n}},
$$
where $\Gamma_\star=\log K + 2R\|\theta^\star\|_F$.
\end{lemma}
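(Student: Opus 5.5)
The plan is a two-sided Hoeffding bound for the i.i.d.\ variables
\[
Z_q:=\ell_{\mathrm{CE}}\bigl((\tilde x_i^q,y_i^q),\theta^\star\bigr)=-\log q_{\theta^\star}(\tilde x_i^q)_{y_i^q},\qquad q=1,\dots,n .
\]
These are i.i.d.\ because the probing covariates are drawn i.i.d.\ from $\cP_X$ and each hidden label $y_i^q$ is drawn from $\cP_{Y\mid X}$. Then $\widehat L_{\mathrm{true}}(\theta^\star)=\frac1n\sum_q Z_q$ and $\E[Z_q]=\Risk(\theta^\star)=\epsilon$.

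The crude route gives the wrong constant. By the floor argument of Lemma~\ref{lem:ce-norm-floor} applied at $W=\theta^\star$, we have $Z_q\in[0,\Gamma_\star]$. Two-sided Hoeffding with deviation $t=\Gamma_\star\sqrt{\log(3/\kappa)/(2n)}$ then has failure probability $2\exp(-2nt^2/\Gamma_\star^2)=2\kappa/3$, not $\kappa/3$. So the crux is to show that the actual range of $Z_q$ is strictly shorter than $\Gamma_\star$.

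\textbf{Step 1: a short interval for $Z_q$.} Put $s:=\max_{c,c'}\sup_{\|x\|\le R}|(\theta^\star_c-\theta^\star_{c'})^\top x|$. Since $\|\theta^\star_c-\theta^\star_{c'}\|\le\sqrt2\|\theta^\star\|_F$, this gives $s\le\sqrt2 R\|\theta^\star\|_F$. Every logit then lies within $s$ of the true-class logit, so
\[
a:=\log\bigl(1+(K-1)e^{-s}\bigr)\;\le\;Z_q\;\le\;\log\bigl(1+(K-1)e^{s}\bigr)=:b .
\]
Hoeffding on $[a,b]$ with width $w=b-a$ gives failure probability $2(\kappa/3)^{\Gamma_\star^2/w^2}$. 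This is at most $\kappa/3$ whenever
\[
\frac{\Gamma_\star^2}{w^2}\;\ge\;1+\frac{\log 2}{\log(3/\kappa)}.
\]
Because $\kappa<1$, it suffices that $\Gamma_\star^2\ge(1+\log2/\log3)\,w^2$, i.e.\ $w\le 0.79\,\Gamma_\star$.

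\textbf{Step 2: verify the width condition.} We have $w\le b\le\log K+s\le \log K+\sqrt2R\|\theta^\star\|_F$, while $\Gamma_\star=\log K+2R\|\theta^\star\|_F$.

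In the logit-dominated regime, the cleaner estimate is
\[
w=\log\frac{1+(K-1)e^{s}}{1+(K-1)e^{-s}}\le\min\{2s,\ \log K+s-a\}.
\]
For $K=2$ this reduces to $w\le s\le\Gamma_\star/\sqrt2$, which gives the ratio $2$ and suffices.

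For general $K$ I would split into two cases: $\log K\ge s$, where the lower endpoint $a\approx\log K$ removes most of the $\log K$ term, and $\log K<s$. In each case I would compare $\min\{2s,\log K+s-a\}$ with $\Gamma_\star$.

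\textbf{Main obstacle.} I expect Step 2 to be the main obstacle. The ratio $\Gamma_\star/w$ must be bounded away from $1$ uniformly in $K$ and $R\|\theta^\star\|_F$, and the regime $K$ large with $s$ moderate is where the constant is tightest.

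If a uniform margin of $0.79$ cannot be secured, I would fall back to the one-sided Hoeffding bound on each tail separately. Each one-sided bound has failure probability $(\kappa/3)^{\Gamma_\star^2/w^2}$. The two-sided event then needs only $w<\Gamma_\star$ together with $(\kappa/3)^{\Gamma_\star^2/w^2-1}\le 1/2$, which is the same condition as above. So I would concentrate effort on sharpening the upper endpoint $b$ via the $\sqrt2$ factor in $\|\theta^\star_c-\theta^\star_{c'}\|\le\sqrt2\|\theta^\star\|_F$, rather than the crude factor $2$ used in Lemma~\ref{lem:ce-norm-floor}.
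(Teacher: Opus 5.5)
You correctly diagnose the crude route, but your repair does not go through. Step~2 carries the entire argument, and the comparison you set up there is false in general. Step~1 retains only the scalar spread $s$, so the best you can use is $\Gamma_\star\ge\log K+\sqrt2\,s$. You would then need $w(s)\le c_0(\log K+\sqrt2\,s)$ with $c_0=(1+\log 2/\log 3)^{-1/2}\approx 0.783$, which is slightly below the $0.79$ you wrote. This fails once $K$ is moderately large:
\begin{itemize}
\item For $K=148$ and $s=7$, you get $w\approx 11.86$ against $\log K+\sqrt2\,s\approx 14.90$, a ratio of about $0.80$.
\item Taking $s=\log K$ gives $w=2\log K-\log 2+o(1)$, so the ratio tends to $2/(1+\sqrt2)=2(\sqrt2-1)\approx 0.83$.
\end{itemize}
In that regime $2(\kappa/3)^{\Gamma_\star^2/w^2}>\kappa/3$ for $\kappa$ close enough to $1$. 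So no case split in terms of $(K,s)$ alone can close, and your one-sided fallback is, as you note yourself, the same condition. These extremal pairs are not jointly realizable: equality in $\|\theta^\star_c-\theta^\star_{c'}\|\le\sqrt2\|\theta^\star\|_F$ forces every other row to vanish, so the remaining logit gaps are only $s/2$. A finer, row-structure-aware bound on the actual range of $\ell_{\mathrm{CE}}(\cdot,\theta^\star)$ might therefore rescue the idea. That bound is a missing, nontrivial ingredient, not a routine verification.

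The paper's proof is exactly your ``crude route'': Lemma~\ref{lem:ce-norm-floor} at $W=\theta^\star$ puts the loss in $[0,\Gamma_\star]$, and Hoeffding's inequality is then invoked. You are right that for the two-sided event this only certifies failure probability $2\kappa/3$, so the paper's argument has a factor-of-two slip. The economical fix is bookkeeping rather than range-shrinking. Either replace $\log(3/\kappa)$ by $\log(6/\kappa)$, or state the lemma one-sided. Theorem~\ref{thm:ce-performance} only uses the upper tail $\widehat L_{\mathrm{true}}(\theta^\star)\le\epsilon+\Gamma_\star\sqrt{\log(3/\kappa)/(2n)}$. One-sided Hoeffding on $[0,\Gamma_\star]$ delivers that with failure probability exactly $\kappa/3$. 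Either change affects only constants downstream.
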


\begin{proof}[Proof of Lemma~\ref{lem:ce-star-conc}]
By Lemma~\ref{lem:ce-norm-floor} applied to $W=\theta^\star$,
$\min_c q_{\theta^\star}(x)_c \ge \gamma_\star$, hence
$\ell_{\mathrm{CE}}((x,y),\theta^\star)\in [0,\Gamma_\star]$.
Hoeffding's inequality gives the result.
\end{proof}

\begin{lemma}[Uniform convergence]
\label{lem:ce-uniform}
With probability at least $1-\kappa/3$,
$$
\sup_{\|W\|_F\le B_\theta}\,\big|\widehat L_{\mathrm{true}}(W) - \Risk(W)\big|
\;\le\;
2\sqrt{2}\,R\,B_\theta\,\sqrt{\frac{K}{n}}
\;+\;
\Gamma_B\,\sqrt{\frac{\log(3/\kappa)}{2n}}.
$$
\end{lemma}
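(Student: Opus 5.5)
The plan is to run the same symmetrization-plus-bounded-differences template as in Lemma~\ref{lem:uniform-ball}, now for the cross-entropy class
\[
\mathcal{H}_{B_\theta} := \bigl\{(x,y)\mapsto \ell_{\mathrm{CE}}((x,y),W) : \|W\|_F\le B_\theta\bigr\}.
\]
The proof has three steps.

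\textbf{Step 1: uniform boundedness.} By the probability-floor argument of Lemma~\ref{lem:ce-norm-floor}, every $W$ with $\|W\|_F\le B_\theta$ satisfies $\min_c q_W(x)_c\ge\gamma_B$. Hence each loss value lies in $[0,\Gamma_B]$.

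\textbf{Step 2: bounded differences.} Replacing one sample changes $\sup_W|\widehat L_{\mathrm{true}}(W)-\Risk(W)|$ by at most $\Gamma_B/n$. McDiarmid's inequality then gives, with probability at least $1-\kappa/3$,
\[
\sup_W|\widehat L_{\mathrm{true}}(W)-\Risk(W)| \le \E\Bigl[\sup_W|\widehat L_{\mathrm{true}}(W)-\Risk(W)|\Bigr] + \Gamma_B\sqrt{\log(3/\kappa)/(2n)}.
\]
This matches the second term of the bound. Standard symmetrization then bounds the expectation by $2\,\E\,\Rad_S(\mathcal{H}_{B_\theta})$.

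\textbf{Step 3: Rademacher complexity.} Write $\ell_{\mathrm{CE}}((x,y),W)=f_y(Wx)$ with $f_y(z)=-z_y+\logsumexp(z)$. Lemma~\ref{lem:ce-lipschitz-logits} shows $f_y$ is Lipschitz in the logits, and in $\ell_2$ its gradient norm satisfies $\|q-e_y\|_2\le\sqrt2$. The constant $f_y(0)=\log K$ has zero Rademacher average, so it can be subtracted. Since $f_y$ takes a $K$-dimensional input, scalar Ledoux–Talagrand does not apply directly. I would use the vector-contraction inequality of Maurer: for $\sqrt2$-Lipschitz $\psi_i:\R^K\to\R$,
\[
\E_\sigma\sup_W\sum_i\sigma_i\psi_i(Wx_i) \le \sqrt2\cdot\sqrt2\;\E\sup_W\sum_{i,c}\sigma_{ic}\,(Wx_i)_c .
\]
The right-hand side equals
\[
2B_\theta\,\E\Bigl\|\sum_{i,c}\sigma_{ic}\,e_c x_i^\top\Bigr\|_F \le 2B_\theta\sqrt{\textstyle\sum_{i,c}\|x_i\|^2}\le 2B_\theta R\sqrt{nK}.
\]
Dividing by $n$ gives $\Rad_S\le 2B_\theta R\sqrt{K/n}$.

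\textbf{Main obstacle: the constant in Step 3.} Combining as written, $2\Rad_S$ yields $4B_\theta R\sqrt{K/n}$, whereas the statement has $2\sqrt2\,RB_\theta\sqrt{K/n}$. My first attempt to recover the missing factor is to use the sharper gradient bound $\|q-e_y\|_2\le\sqrt{2(1-q_y)}$. Alternatively, I would directly compute the Lipschitz constant of $f_y$, which gives $\sqrt2$ without Maurer's extra $\sqrt2$ loss. If the literal constant cannot be matched, I would accept the factor-$\sqrt2$-larger bound, since only the order matters in Corollary~\ref{cor:ce-performance-bigO}.
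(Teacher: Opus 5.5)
Your route is the same as the paper's. The probability floor of Lemma~\ref{lem:ce-norm-floor} puts every loss in $[0,\Gamma_B]$, and McDiarmid with differences $\Gamma_B/n$ gives the second term. Symmetrization, vector contraction through the $\sqrt2$-Lipschitz map $z\mapsto f_y(z)$, and the Frobenius/Jensen bound $B_\theta R\sqrt{nK}$ give the first. The missing $\sqrt2$ is not a defect of your argument. The paper reaches $2\sqrt2$ only because it applies the vector-contraction lemma with factor equal to the Lipschitz constant, writing $\mathfrak{R}_n\le\sqrt2\cdot\E\sup_W\frac1n\sum_{i,k}\sigma_{i,k}(Wx_i)_k$. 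Maurer's inequality actually carries $\sqrt2 L=2$, which gives exactly your $4RB_\theta\sqrt{K/n}$.

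Neither of your proposed remedies can recover the factor. A sharper gradient bound gains nothing uniform. On $\{Wx:\|W\|_F\le B_\theta\}$, $\|q-e_y\|_2$ still approaches $\sqrt2$ when $q_y$ is small and the remaining mass sits on one class. Maurer's extra $\sqrt2$ is also not an artifact of how $L$ is computed; it is intrinsic to Rademacher vector contraction. Take $n=1$, $K=2$, $h=\|\cdot\|_2$ and values $\{0,(r/\sqrt2)(1,1)\}$. The left side is $r/2$, while $\E\max\{0,\sigma_1 r/\sqrt2+\sigma_2 r/\sqrt2\}=\sqrt2 r/4$.

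If you want the stated leading term, split $f_y(z)=-z_y+\logsumexp(z)$.
\begin{itemize}
\item The linear part $-W_{y_i}^\top x_i$ has Rademacher average at most $B_\theta R/\sqrt n$ directly, with no contraction.
\item $\logsumexp$ is $1$-Lipschitz in $\ell_2$, since its gradient $q$ has $\|q\|_2\le 1$, so Maurer contributes only $\sqrt2 B_\theta R\sqrt{K/n}$.
\end{itemize}
Doubling the sum gives $2\sqrt2RB_\theta\sqrt{K/n}+2RB_\theta/\sqrt n$, the stated term plus a lower-order one. Otherwise, keeping $4RB_\theta\sqrt{K/n}$ is fine: it only changes $4R$ to $4\sqrt2R$ in Theorem~\ref{thm:ce-performance}.

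One further point that both you and the paper gloss over concerns the two-sided supremum. With a single McDiarmid step at level $\kappa/3$, symmetrization yields the absolute-value average $\E\sup_W\bigl|\frac1n\sum_i\sigma_i\ell_W(z_i)\bigr|$. There the constant $\log K$ does not have zero average, and Maurer's inequality, stated without absolute values, does not apply directly. You can fix this in either of two ways; both change only constants or the argument of the logarithm.
\begin{itemize}
\item Center at $\log K$ first, which is free because $\widehat L_{\mathrm{true}}-\Risk$ is unchanged. Then use that $W=0$ gives the zero function to drop the absolute value, at the cost of a factor $2$.
\item Bound the two one-sided suprema separately. This keeps the factor $2$ but replaces $\log(3/\kappa)$ with $\log(6/\kappa)$.
\end{itemize}
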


\begin{proof}[Proof of Lemma~\ref{lem:ce-uniform}]
For any sample $\{(x_i,y_i)\}_{i=1}^n$ with $\|x_i\|\le R$, we bound the empirical
Rademacher complexity of
$$
\mathcal{F}_{B_\theta}
\;:=\;
\big\{(x,y)\mapsto \ell_{\mathrm{CE}}((x,y),W): \|W\|_F\le B_\theta\big\}.
$$
The function $f(z,y)=-\log\softmax(z)_Y$ has gradient $\nabla_z f = q - y$ with
$\|\nabla_z f\|_2=\|q-y\|_2\le \sqrt{2}$. Thus $f$ is $\sqrt{2}$-Lipschitz in $z$.
By the vector contraction lemma,
$$
\mathfrak{R}_n(\mathcal{F}_{B_\theta})
\;\le\;
\sqrt{2}\cdot
\E\bigg[
\sup_{\|W\|_F\le B_\theta}
\frac{1}{n}\sum_{i=1}^n \sum_{k=1}^K \sigma_{i,k}\,(W x_i)_k
\bigg],
$$
with $\sigma_{i,k}$ i.i.d.\ Rademacher. The inner supremum equals
$$
\frac{B_\theta}{n}\,\E\Big\|
\sum_{i=1}^n \sum_{k=1}^K \sigma_{i,k}\,e_k x_i^\top
\Big\|_F
\;\le\;
\frac{B_\theta}{n}\,\sqrt{
\E\Big\|
\sum_{i,k} \sigma_{i,k}\,e_k x_i^\top
\Big\|_F^2
}
\;=\;
\frac{B_\theta}{n}\,\sqrt{
\sum_{k=1}^K \sum_{i=1}^n \|x_i\|^2
}
\;\le\;
B_\theta R\,\sqrt{\frac{K}{n}}.
$$
Hence $\mathfrak{R}_n(\mathcal{F}_{B_\theta})\le \sqrt{2}\,B_\theta R \sqrt{K/n}$.
A standard symmetrization and bounded-difference argument (the per-sample loss range
over $\|W\|\le B_\theta$ is at most $\Gamma_B$ by Lemma~\ref{lem:ce-norm-floor})
yields, with prob.\ $\ge 1-\kappa/3$,
$$
\sup_{\|W\|_F\le B_\theta}\,\big|\widehat L_{\mathrm{true}}(W) - \Risk(W)\big|
\;\le\;
2\,\mathfrak{R}_n(\mathcal{F}_{B_\theta})
\;+\;
\Gamma_B\,\sqrt{\frac{\log(3/\kappa)}{2n}},
$$
which gives the stated bound.
\end{proof}

\begin{lemma}[Logits bridge for cross-entropy; no probability floors]
  \label{lem:ce-bridge}
  Let $W\in\R^{K\times d}$, $z(x)=W x\in\R^K$, and $q_W(x)=\softmax(z(x))$.
  For any $x\in\R^d$, any one-hot label $y\in\{e_1,\dots,e_K\}$, and any
  soft pseudo-label $\tilde y\in\Delta^{K-1}$ (i.e., $\tilde y_c\ge 0$, $\sum_c \tilde y_c=1$),
  the cross-entropy difference satisfies the exact identity
  $$
  \operatorname{CE}(\tilde y,q_W) - \operatorname{CE}(y,q_W)
  \;=\;
  -\,\langle \tilde y - y,\; z(x)\rangle.
  $$
  Consequently,
  $$
  \big|\operatorname{CE}(\tilde y,q_W) - \operatorname{CE}(y,q_W)\big|
  \;\le\;
  \|\tilde y - y\|_1 \cdot \|z(x)\|_\infty
  \;\le\;
  \|\tilde y - y\|_1 \cdot R\,\|W\|_F.
  $$
  \end{lemma}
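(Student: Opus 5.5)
We prove the identity directly by expanding both cross-entropies in terms of logits; the two bounds then follow from H\"older's inequality. Write $\operatorname{CE}(u,q_W)=-\sum_c u_c\log q_W(x)_c$ for any $u\in\Delta^{K-1}$. Since $q_W(x)=\softmax(z(x))$, we have $\log q_W(x)_c = z_c(x)-\logsumexp(z(x))$, and therefore
\[
\operatorname{CE}(u,q_W) \;=\; -\langle u, z(x)\rangle + \Bigl(\sum_c u_c\Bigr)\logsumexp(z(x)) \;=\; -\langle u, z(x)\rangle + \logsumexp(z(x)).
\]
The last step uses $\sum_c u_c=1$, which holds for both the soft label $\tilde y$ and the one-hot label $y$.

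Subtracting this expression at $u=y$ from the expression at $u=\tilde y$, the $\logsumexp$ terms cancel exactly. This gives $\operatorname{CE}(\tilde y,q_W)-\operatorname{CE}(y,q_W)=-\langle \tilde y-y, z(x)\rangle$, which is the claimed identity. Because the log-partition term cancels, we never need a lower bound on any predicted probability; this is the sense of ``no probability floors'' in the statement.

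For the first inequality, apply H\"older's inequality with the dual pair $(\|\cdot\|_1,\|\cdot\|_\infty)$:
\[
|\langle \tilde y-y,z(x)\rangle| \;\le\; \|\tilde y-y\|_1\,\|z(x)\|_\infty .
\]

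For the second inequality, bound each logit by Cauchy--Schwarz. With $W_c$ the $c$-th row of $W$, we have $|z_c(x)|=|W_c^\top x|\le \|W_c\|_2\|x\|\le \|W\|_F\,\|x\|$, since every row norm is at most the Frobenius norm. Under $\|x\|\le R$, which is the standing almost-sure assumption of this section and how the lemma is used, this yields $\|z(x)\|_\infty\le R\|W\|_F$.

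None of these steps is a real obstacle; the only point needing care is that $\tilde y$ must sum to one so that the log-partition terms cancel exactly. If the lemma is applied to an $x$ outside the ball of radius $R$, the final bound should instead read $\|x\|\,\|W\|_F$.
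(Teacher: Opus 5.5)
Your proposal is correct and follows the paper's proof essentially step for step. Both write $\operatorname{CE}(u,q_W)=\logsumexp(z)-\langle u,z\rangle$ using $\sum_c u_c=1$, cancel the log-partition term, apply H\"older with the $(\|\cdot\|_1,\|\cdot\|_\infty)$ pair, and bound $\|z\|_\infty\le \max_c\|W_c\|_2\,\|x\|\le R\|W\|_F$. Your remark that the last inequality needs $\|x\|\le R$ is apt, since the statement quantifies over all $x\in\R^d$ but the paper relies on its standing assumption $\|x\|\le R$.
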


  \begin{proof}
  Recall that $\operatorname{CE}(r,q)= - \sum_{c=1}^K r_c \log q_c$ for any probability vector $r$,
  and $q_c = \frac{e^{z_c}}{\sum_{j} e^{z_j}}$ with $z=W x$.
  Hence
  $$
  -\log q_c
  \;=\;
  \log\Big(\sum_{j=1}^K e^{z_j}\Big) - z_c
  \;=\;
  \logsumexp(z) - z_c.
  $$
  Therefore, for any $r\in\Delta^{K-1}$,
  $$
  \operatorname{CE}(r,q_W)
  \;=\;
  \sum_{c=1}^K r_c\,\big(\logsumexp(z) - z_c\big)
  \;=\;
  \logsumexp(z)\cdot \underbrace{\sum_{c=1}^K r_c}_{=\,1}
  \;-\;\sum_{c=1}^K r_c\,z_c
  \;=\;
  \logsumexp(z) \;-\; \langle r, z\rangle.
  $$
  Applying this twice with $r=\tilde y$ and $r=y$ gives
  $$
  \operatorname{CE}(\tilde y,q_W) - \operatorname{CE}(y,q_W)
  \;=\;
  \big(\logsumexp(z) - \langle \tilde y, z\rangle\big)
  \;-\;
  \big(\logsumexp(z) - \langle y, z\rangle\big)
  \;=\;
  -\,\langle \tilde y - y, z\rangle,
  $$
  which is the claimed identity.

  For the inequalities, use Hölder and the fact that $\|z\|_\infty=\max_c |z_c|$:
  $$
  \big|\operatorname{CE}(\tilde y,q_W) - \operatorname{CE}(y,q_W)\big|
  \;=\;
  |\langle \tilde y - y, z\rangle|
  \;\le\;
  \|\tilde y - y\|_1 \,\|z\|_\infty.
  $$
  Finally, since $z_c = W_c^\top x$ and $\|x\|\le R$,
  $$
  \|z\|_\infty
  \;=\;
  \max_c |W_c^\top x|
  \;\le\;
  \max_c \|W_c\|_2 \cdot \|x\|
  \;\le\;
  \Big(\max_c \|W_c\|_2\Big)\, R
  \;\le\;
  R\,\|W\|_F,
  $$
  because $\|W\|_F^2 = \sum_c \|W_c\|_2^2 \ge \max_c \|W_c\|_2^2$.
  Combining the bounds yields the result.
  \end{proof}

  \begin{lemma}[Empirical bridge on the probing batch]
  \label{cor:ce-bridge-empirical}
  On the probing dataset $\{(\tilde x_i^q,\tilde y_i^q,y_i^q)\}_{q=1}^n$, define
  $$
  \widehat L_{\mathrm{probe}}(W) \;=\; \frac{1}{n}\sum_{q=1}^n \operatorname{CE}(\tilde y_i^q, q_W(\tilde x_i^q)),
  \quad
  \widehat L_{\mathrm{true}}(W) \;=\; \frac{1}{n}\sum_{q=1}^n \operatorname{CE}(y_i^q, q_W(\tilde x_i^q)),
  \quad
  \Delta_{1,n} \;=\; \frac{1}{n}\sum_{q=1}^n \|\tilde y_i^q - y_i^q\|_1.
  $$
  Then for any $W$,
  $$
  \big| \widehat L_{\mathrm{probe}}(W) - \widehat L_{\mathrm{true}}(W) \big|
  \;\le\;
  R\,\|W\|_F \cdot \Delta_{1,n}.
  $$
  \end{lemma}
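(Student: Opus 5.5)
The plan is to average the pointwise identity of Lemma~\ref{lem:ce-bridge} over the probing batch and then bound the result termwise. Fix any $W\in\R^{K\times d}$. For each $q\in[n]$, apply Lemma~\ref{lem:ce-bridge} with $x=\tilde x_i^q$, the one-hot label $y=y_i^q$, and the soft pseudo-label $\tilde y=\tilde y_i^q$. The latter lies in $\Delta^{K-1}$ because, by construction of the aggregation rule, it is a softmax of the median logits. This gives the exact identity
\[
\operatorname{CE}(\tilde y_i^q,q_W(\tilde x_i^q))-\operatorname{CE}(y_i^q,q_W(\tilde x_i^q)) \;=\; -\langle \tilde y_i^q-y_i^q,\;W\tilde x_i^q\rangle .
\]

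Next, average this over $q=1,\dots,n$. The left-hand side averages to exactly $\widehat L_{\mathrm{probe}}(W)-\widehat L_{\mathrm{true}}(W)$. On the right-hand side, the triangle inequality moves the absolute value inside the sum. Each term is then bounded by H\"older's inequality, $|\langle \tilde y_i^q-y_i^q, W\tilde x_i^q\rangle|\le \|\tilde y_i^q-y_i^q\|_1\,\|W\tilde x_i^q\|_\infty$. The second inequality of Lemma~\ref{lem:ce-bridge} gives $\|W\tilde x_i^q\|_\infty\le R\|W\|_F$, which uses $\|\tilde x_i^q\|\le R$. That bound holds almost surely because the probing covariates are drawn from $\mathcal P_X$, whose support lies in the $R$-ball by Assumption~\ref{ass:bounded_support}.

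Finally, pull out the constant factor $R\|W\|_F$, which does not depend on $q$. The remaining average $\frac1n\sum_q\|\tilde y_i^q-y_i^q\|_1$ is exactly $\Delta_{1,n}$, which yields the claim.

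There is no real obstacle here. The only point to check is that the pseudo-labels are genuine probability vectors, since that is what the exact identity needs in order to cancel the $\logsumexp$ terms. This holds because every pseudo-label is produced by a softmax.
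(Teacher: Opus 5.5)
Your proposal is correct and matches the paper's proof, which simply applies Lemma~\ref{lem:ce-bridge} termwise and averages over the probing batch. You additionally spell out the triangle inequality, H\"older, and the almost-sure bound $\|\tilde x_i^q\|\le R$; these are the right details to make that one-line argument explicit.
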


  \begin{proof}
  Apply Lemma~\ref{lem:ce-bridge} termwise and average.
  \end{proof}

\section{Experimental Details and
Additional Results}
\label{sec:app_numerical}

\subsection{Dataset and Preprocessing Details}

\paragraph{MovieLens-10M \citep{harper2015movielens}.}
This dataset contains 10 million movie ratings from 70k users across 10k movies,
providing a natural testbed for multi-learner competition in recommendation.
Following \citet{bose2023initializing} and \citet{Su2024-qw},
we extract $d=16$ dimensional user embeddings via matrix factorization
and retain ratings for the top 200 most-rated movies,
yielding a population of 69,474 users.
Each user's data consists of $z = (x, r)$ where $x \in \mathbb{R}^d$
is the embedding and $r$ contains their ratings.
Let $\Omega_x$ denote the set of movies rated by user $x$ with $|\Omega_x|$ movies.
Each learner fits a linear model $\theta \in \mathbb{R}^{d \times 200}$ using squared loss:
\begin{align*}
\ell(z; \theta) = \frac{1}{|\Omega_x|} \sum_{i \in \Omega_x} (\theta_i^\top x - r_i)^2.
\end{align*}

\paragraph{ACS Employment \citep{ding2021retiring}.}
We use the ACSEmployment task from \texttt{folktables},
where the goal is to predict employment status from demographic features.
The population consists of 38,221 individuals from the 2018 Alabama census (ages 16--90),
with $d=16$ features describing age, education, marital status, etc.
Each user's data is $z = (x, y)$ where $x \in \mathbb{R}^d$
(standardized to zero mean, unit variance) and $y \in \{0,1\}$.
Each learner uses logistic regression:
\begin{align*}
\ell(z; \theta) = -y \log(\sigma(\theta^\top x)) - (1-y) \log(1 - \sigma(\theta^\top x)),
\end{align*}
where $\sigma$ is the sigmoid function. The model predicts $\hat{y} = \mathbf{1}[\theta^\top x > 0]$.

\paragraph{Amazon Reviews 2023.}
We use the \texttt{McAuley-Lab/Amazon-Reviews-2023} corpus (via HuggingFace),
constructing a binary sentiment task from review text and star ratings.
We sample up to 30{,}000 reviews from nine product categories and define
labels by $y=\mathbf{1}[\text{rating}\ge 4]$ (so 1--3 stars
are negative, 4--5 stars are positive). Features are $d=384$ dimensional
sentence embeddings of the review text produced by \texttt{all-MiniLM-L6-v2},
with a stratified 95/5 train-test split.
Each learner uses the same logistic regression setup as Census.

\paragraph{User Preferences.}
For Census and MovieLens, we simulate a market with $m=5$ learners
and model inherent user preferences $\pi(z)$ via K-means clustering ($K=5$) on user features,
assigning each user's preferred platform based on their cluster membership.
For Amazon, we use category-based partitions ($m=9$ learners, one per product category):
each review is assigned to its category group, and this group index induces the preference partition $\{S_i\}_{i=1}^m$.
In all cases, the partition captures the intuition that users from different demographic
or behavioral segments may have systematic affinities for different platforms.

\paragraph{Train/test splits.}
Each experiment constructs one held-out test set at the start and uses the
remaining users as the shared training pool for all learners. Census uses a
fixed 99/1 split with random seed 0; Amazon uses a stratified 95/5 split over
sentiment labels; MovieLens uses a 90/10 user split. We do not use a validation
split because the experimental hyperparameters are fixed in advance rather
than tuned per run.
Dataset-specific hyperparameters are summarized in Table~\ref{tab:hyperparameters}.

\paragraph{Census neural experiment.}
The neural Census experiment in Figure~\ref{fig:census_nn} changes only the
learner family relative to the linear Census experiments. Each learner is a
two-layer ReLU MLP with architecture $d\!\to\!64\!\to\!1$, trained with SGD on
binary cross-entropy. The held-out test split, K-means preference clusters, and
cluster-induced learner rankings are reused unchanged. In the bad-outcome run,
learners are randomly initialized, $\tau=0.3$, and no learner probes. In the
good-outcome run, learners are initialized by partition pretraining on the users
who rank them first, $\tau=0.7$, and Learner 2 probes. Probing is offline:
pseudo-labels are collected once from the initialized peer selected by the
preference-aware rule $T_i(x)=\{\pi(x)\}$, with $\kappa=0$, and the probing
learner then trains on the weighted combination of organic and pseudo-labeled
losses. We use $T=1000$ MSGD iterations, three random seeds, and the probing
weights shown in Figure~\ref{fig:census_nn}.

\subsection{Additional Experiments}

In this section, we provide additional experiments, which
elaborate on \Cref{sec:numerical}.

\paragraph{Expt 1 for other scenarios}
We replicate Expt~1 from \Cref{sec:numerical} in the two globally-good settings from
\Cref{def:globally_good_scenarios}: market-leader and majority-good.
As in the preference-aware case, standard MSGD without probing
($p=0$, $\tau=0.3$) converges to equilibria with large full-population
gaps to the dashed black baseline (\Cref{fig:bad_market_leader,fig:bad_half_majority}).
In the market-leader setting (\Cref{fig:bad_market_leader}),
the most overspecialized learner is about $0.32$ below baseline on Census
($\approx 0.47$ vs $\approx 0.79$), and more than $2.6$ MSE above baseline on MovieLens
($>5.6$ vs $\approx 2.95$).
In the majority-good setting (\Cref{fig:bad_half_majority}),
two learners remain overspecialized with sizable baseline gaps
(Census around $0.22$--$0.25$ below baseline; MovieLens around $2.0$--$2.9$ above baseline).
These results show that poor global equilibria are not unique to the
preference-aware scenario.

\paragraph{Expt 2 for other scenarios}
We next replicate Expt~2 for the other scenarios.
In the market-leader scenario, Learner $4$ probes the known leader
(Learner $0$), and its final Census accuracy improves from about $0.55$
to about $0.75$, while its MovieLens loss drops from about $5.1$
to about $3.1$ as $p$ increases
(\Cref{fig:good_market_leader}).
Relative to the dashed baseline, this closes most of the initial gap
(roughly from $0.24$ to $0.04$ on Census, and from about $2.2$ to about $0.1$ on MovieLens).
In the majority-good scenario, where Learner $4$ probes via
median aggregation over all peers, we observe a similar pattern:
Census accuracy rises from about $0.52$ to about $0.77$, and MovieLens
loss decreases from about $4.7$ to about $3.1$
(\Cref{fig:good_half_majority}).
This again closes a large fraction of the baseline gap
(roughly from $0.27$ to $0.02$ on Census, and from about $1.7$ to about $0.1$ on MovieLens).
Well-performing learners change only modestly, indicating that probing
primarily benefits the underperforming learner and mitigates overspecialization.

\paragraph{Expt 4: Impact of noise in selection of probed labels}
Finally, we test robustness to noisy probing-source selection.
For each probe query $x$, the probing learner queries $\pi(x)$ with probability
$1-\kappa$, and with probability $\kappa$ it queries a random other learner.
Across Census, Amazon, and MovieLens (\Cref{fig:kappa}), increasing $\kappa$
causes only mild changes in the probing learner's final performance relative
to the low-noise case, while preserving strong gains from probing.
Thus, our method is robust to imperfect estimates of the ranking function.

\paragraph{Expt 5: What happens when multiple learners probe?}
We also evaluate a preference-aware setting where multiple learners probe
simultaneously. In \Cref{fig:multi_probe}, the triangle-marked learners
(Learners $2$ and $3$) both probe while the dashed black line indicates the
full-data baseline. As $p$ increases, Learner $2$ improves from about $0.60$
to about $0.78$, and Learner $3$ improves from about $0.66$ to about $0.79$.
Equivalently, their baseline gaps shrink from roughly $0.19$ and $0.13$ at
$p=0$ to about $0.01$ and near zero at $p=0.8$. The non-probing learners move
only slightly, indicating that simultaneous probing remains stable and still
helps underperforming learners recover most of the overspecialization gap.

\begin{figure}[t]
    \centering
    \includegraphics[width=0.5\textwidth]{figs/census_final_accuracy_vs_N_probe.pdf}
    \caption{\textbf{Performance of probing learner on Census as a function of $n$.} Error bars show one standard deviation over 10 random seeds.}
    \label{fig:census_probe_vs_n}
\end{figure}

\begin{table}[t]
\centering
\small
\begin{tabular}{llccc}
\toprule
\textbf{Parameter} & \textbf{Description} & \textbf{Census} & \textbf{MovieLens} & \textbf{Amazon} \\
\midrule
$m$ & Number of learners & 5 & 5 & 9 \\
$T$ & Total rounds & 4000 & 4000 & 20000 \\
$\lambda$ & L2 regularization & $10^{-9}$ & $10^{-3}$ & $10^{-9}$ \\
$n$ & Offline probe dataset size & 100 & 1000 & 500 \\
\bottomrule
\end{tabular}
\caption{Hyperparameters by dataset. Shared values are merged across columns.}
\label{tab:hyperparameters}
\end{table}

\begin{figure}[t]
    \centering
    \includegraphics[width=\textwidth]{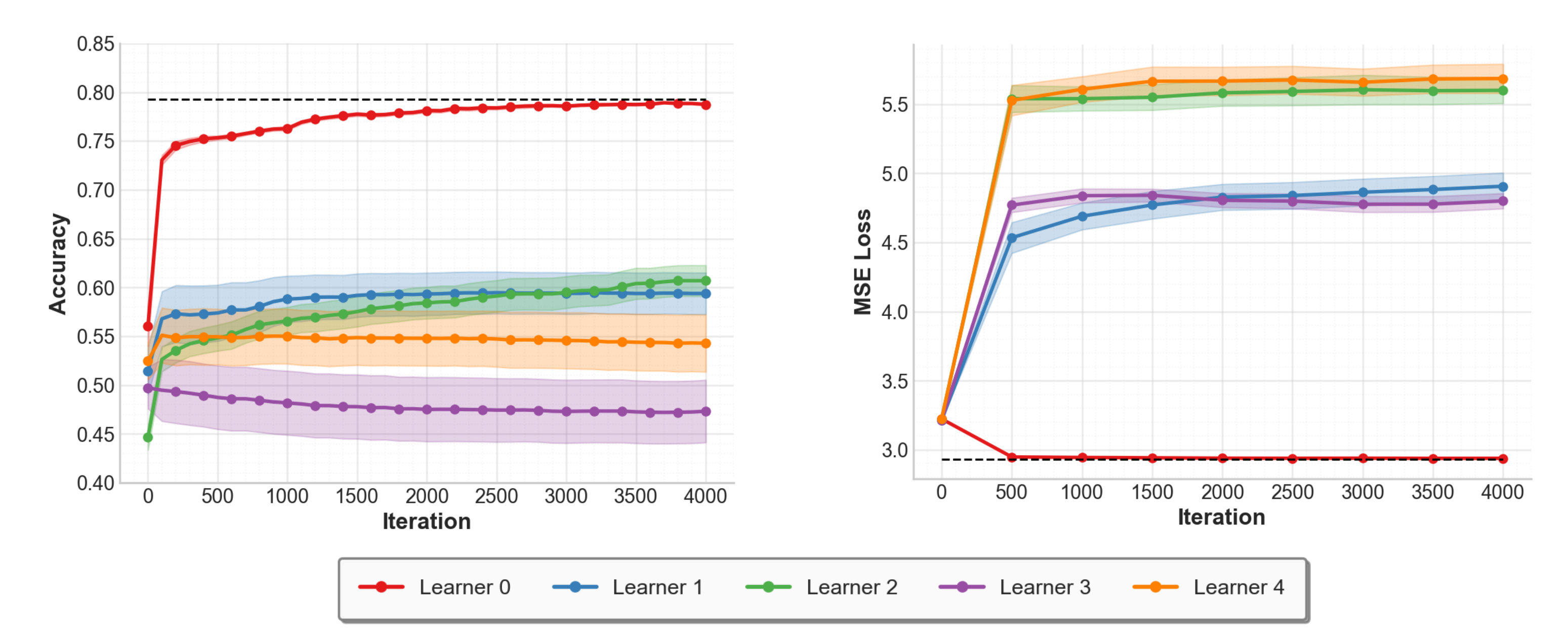}
    \caption{\textbf{MSGD full-population performance with random initialization
    (Market-leader scenario).}
     Left: Census test accuracy.
     Right: MovieLens test loss. Here $\tau = 0.3$.}
    \label{fig:bad_market_leader}
\end{figure}

\begin{figure}[t]
    \centering
    \includegraphics[width=\textwidth]{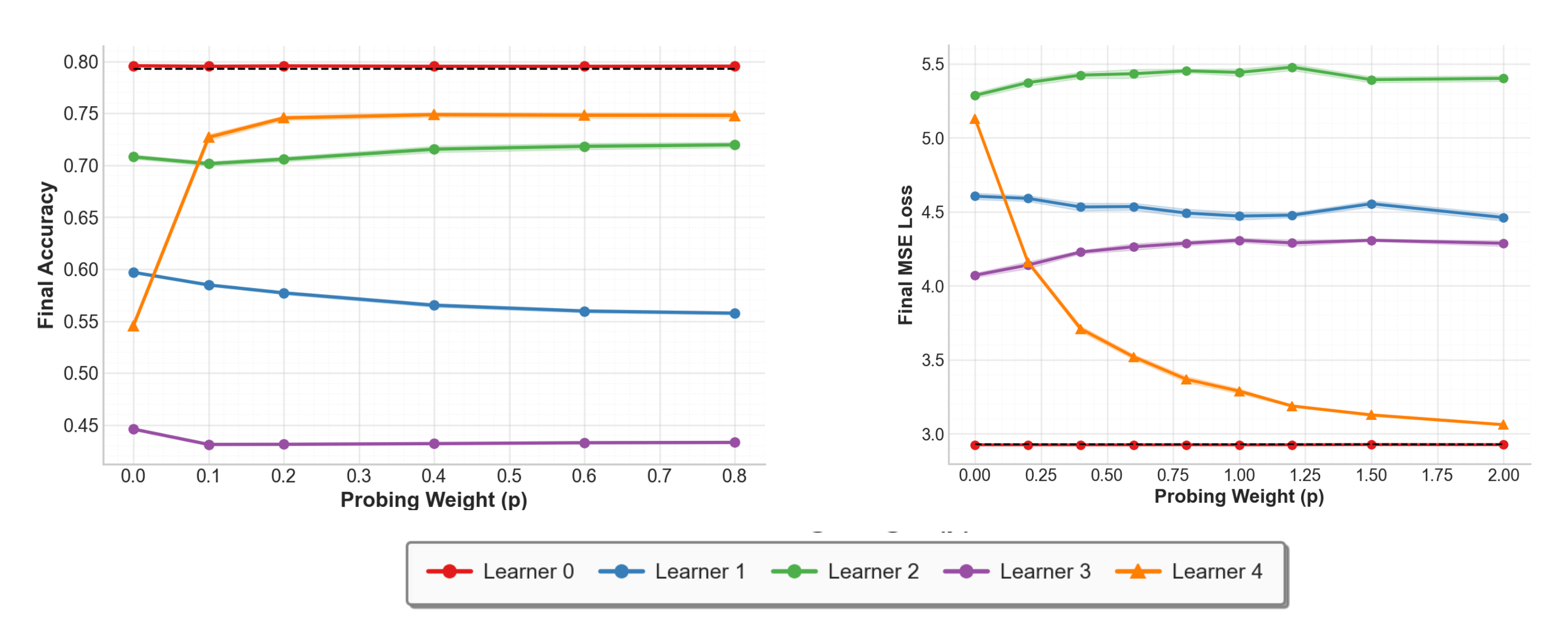}
    \caption{\textbf{Effect of probing on full-population performance (Market-leader scenario)} Left: Final accuracy vs probing weight $p$ on Census.
    Right: MovieLens final loss vs $p$. The triangle markers indicate Learner $4$
    probes the market leader, learner $0$. Here $\tau = 0.7$.}
    \label{fig:good_market_leader}
\end{figure}

\begin{figure}[t]
    \centering
    \includegraphics[width=\textwidth]{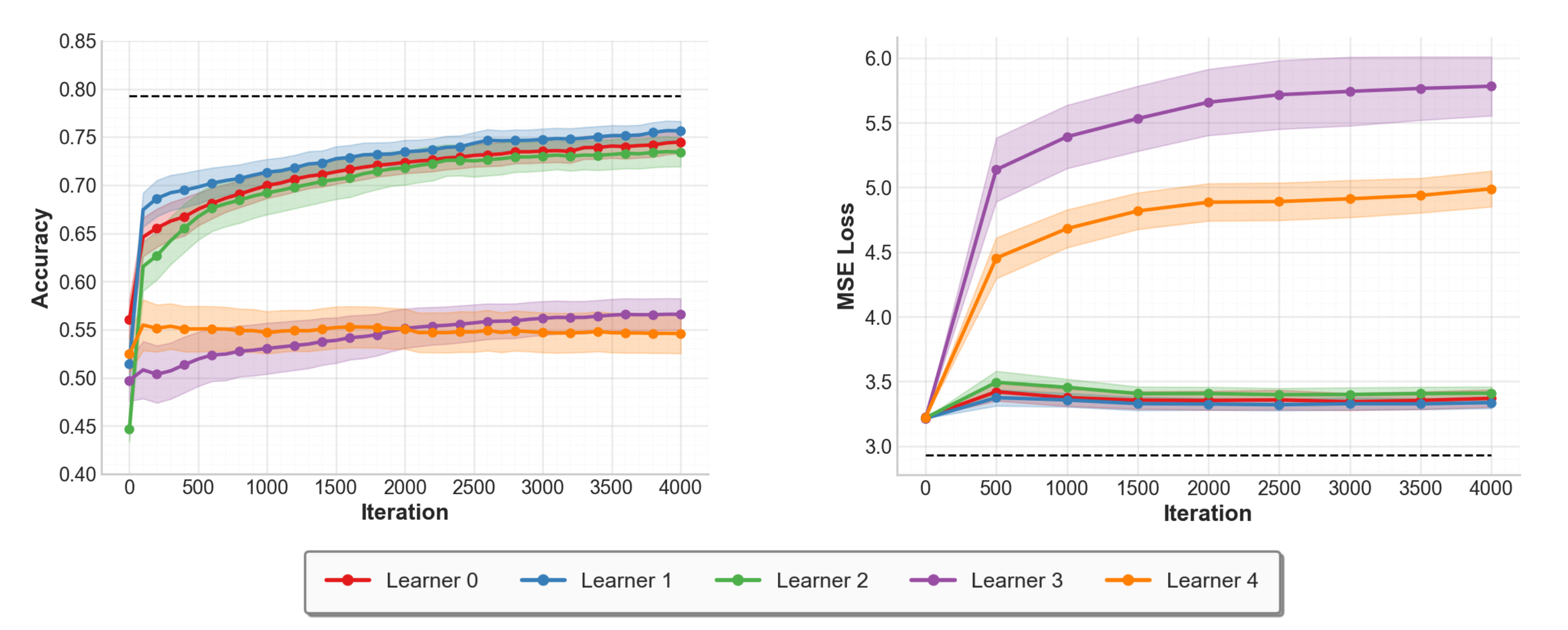}
    \caption{\textbf{MSGD full-population performance with random initialization
    (Majority good scenario).}
     Left: Census test accuracy. Right: MovieLens test loss. Here $\tau = 0.3$.}
    \label{fig:bad_half_majority}
\end{figure}

\begin{figure}[t]
    \centering
    \includegraphics[width=\textwidth]{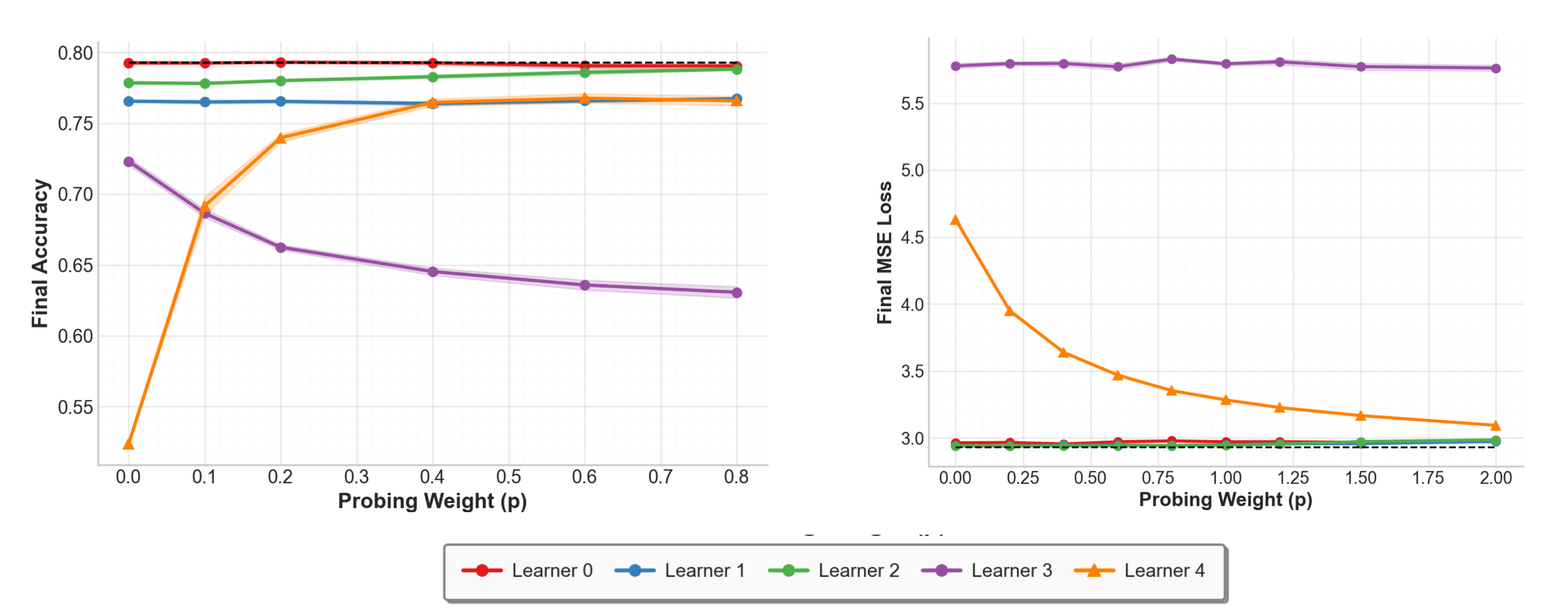}
    \caption{\textbf{Effect of probing on full-population performance (Majority Good Scenario)}
    Left: Final accuracy vs probing weight $p$ on Census.
    Right: MovieLens final loss vs $p$.
    Triangle markers indicate Learner $4$
    is probing via median aggregation over all peers. Here $\tau = 0.7$.}
    \label{fig:good_half_majority}
\end{figure}

\begin{figure*}[t]
    \centering
    \includegraphics[width=0.6\textwidth]{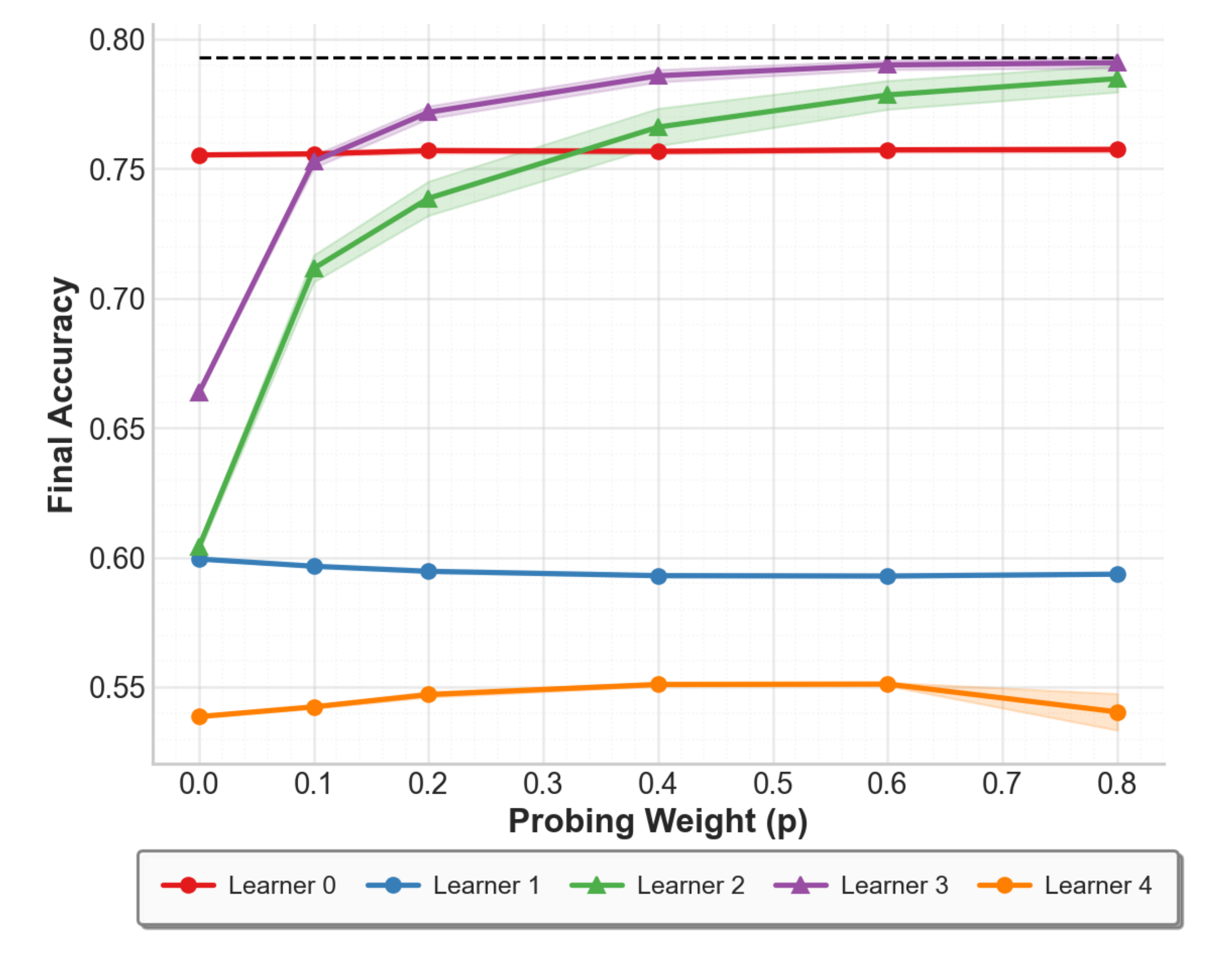}
    \caption{\textbf{Effect of probing on full-population performance when multiple learners probe (Preference-aware scenario).}
    Census final accuracy vs probing weight $p$.
    Triangle markers indicate the probing learners (Learners $2$ and $3$).
    The dashed black line denotes the full-data baseline.}
    \label{fig:multi_probe}
\end{figure*}

\end{document}